\documentclass[11pt]{article}

\usepackage{tikz}

\addtolength{\oddsidemargin}{-.75in}
\addtolength{\evensidemargin}{-.75in}
\addtolength{\textwidth}{1.5in}
\addtolength{\topmargin}{-.75in}
\addtolength{\textheight}{1.5in}

\usepackage[utf8]{inputenc}
\usepackage{enumerate}
\usepackage{chngpage}
\usepackage{amsfonts, amsmath, amssymb, mathrsfs}
\usepackage[amsmath,hyperref,amsthm]{ntheorem}

\def\dot#1#2{\langle #1,#2 \rangle}

\def\calB{\mathcal{B}}
\def\E{\mathbf{E}}

\newcommand{\mlower}{M_{L}}

\def\calD{\mathcal{D}}
\newcommand{\bmu}{\text{\boldmath$\mu$}}

\newcommand{\hu}{\breve{u}}
\newcommand{\hv}{\breve{v}}

\newcommand{\hup}{\breve{u}^{\perp}}
\newcommand{\bbR}{{\mathbf{R}}}
\newcommand{\bbE}{{\mathbf{E}}}
\newcommand{\Mu}{{\mathcal{M}}}
\newcommand{\imeans}{2-means-iterate}
\newcommand{\samp}{\mathcal{S}}
\newcommand{\hS}{\hat{S}}

\newtheorem{whatever}{Whatever}
\newtheorem{assumption}[whatever]{Assumption}
\newtheorem{lemma}[whatever]{Lemma}
\newtheorem{theorem}[whatever]{Theorem}
\newtheorem{corollary}[whatever]{Corollary}

\newcommand{\qed}{\hfill $\Box$ \hfill}
\newcommand{\m}{m}

\newcommand{\mmu}{\mathbf{\mu}}
\newcommand{\kl}{\mathbf{KL}}

\newcommand{\calN}{\mathcal{N}}
\newcommand{\calF}{\mathcal{F}}
\newcommand{\kc}[1]{\noindent{\textcolor{red}{\{{\bf KC:} \em #1\}}}}

\title{Learning Mixtures of Gaussians Using the $k$-Means Algorithm}
\author{Kamalika Chaudhuri \\
 CSE Department, UC San Diego \\
{\tt{kamalika@soe.ucsd.edu}}
\and
Sanjoy Dasgupta \\
CSE Department, UC San Diego\\
{\tt{dasgupta@cs.ucsd.edu}}
\and Andrea Vattani \\
CSE Department, UC San Diego\\
{\tt{avattani@cs.ucsd.edu}}
}

\begin{document}

\maketitle
\begin{abstract}
One of the most popular algorithms for clustering in Euclidean space is the $k$-means algorithm; $k$-means is difficult to analyze mathematically, and few theoretical guarantees are known about it, particularly when the data is {\em well-clustered}. In this paper, we attempt to fill this gap in the literature by analyzing the behavior of $k$-means on well-clustered data. In particular, we study the case when each cluster is distributed as a different Gaussian -- or, in other words, when the input comes from a mixture of Gaussians. 

We analyze three aspects of the $k$-means algorithm under this assumption. First, we show that when the input comes from a mixture of two spherical Gaussians, a variant of the $2$-means algorithm successfully isolates the subspace containing the means of the mixture components. Second, we show an exact expression for the convergence of our variant of the $2$-means algorithm, when the input is a very large number of samples from a mixture of spherical Gaussians. Our analysis does not require any
lower bound on the separation between the mixture components. 

Finally, we study the sample requirement of $k$-means; for a mixture of $2$ spherical Gaussians, we show an upper bound on the number of samples required by a variant of $2$-means to get close to the true solution. The sample requirement grows with increasing dimensionality of the data, and decreasing separation between the means of the Gaussians. To match our upper bound, we show an information-theoretic lower bound on any algorithm that learns mixtures of two spherical Gaussians; our lower bound indicates that in the case when the overlap between the probability masses of the two distributions is small, the sample requirement of $k$-means is {\em near-optimal}.
\end{abstract}

\newpage
\section{Introduction}

One of the most popular algorithms for clustering in Euclidean space is the $k$-means algorithm~\cite{L82, F65, M67}; this is a simple, local-search algorithm that iteratively refines a partition of the input points until convergence. Like many local-search algorithms, $k$-means is notoriously difficult to analyze, and few theoretical guarantees are known about it.

There has been three lines of work on the $k$-means algorithm. A first line of questioning addresses the quality of the solution produced by $k$-means, in comparison to the globally optimal solution. While it has been well-known that for general inputs the quality of this solution can be arbitrarily bad, the conditions under which $k$-means yields a globally optimal solution on {\em well-clustered} data are not well-understood. A second line of work~\cite{AV06, V09} examines the number of iterations required by $k$-means to converge. ~\cite{V09} shows that there exists a set of $n$ points on the plane, such that $k$-means takes as many as $\Omega(2^n)$ iterations to converge on these points. A smoothed analysis upper bound of $poly(n)$ iterations has been established by~\cite{AMR09}, but this bound is still much higher than what is observed in practice, where the number of iterations are frequently sublinear in $n$. Moreover, the smoothed analysis bound applies to small perturbations of arbitrary inputs, and the question of whether one can get faster convergence on well-clustered inputs, is still unresolved. A third question, considered in the statistics literature, is the statistical efficiency of $k$-means. Suppose the input is drawn from some simple distribution, for which $k$-means is statistically consistent; then, how many samples is required for $k$-means to converge? Are there other consistent procedures with a better sample requirement?

In this paper, we study all three aspects of $k$-means, by studying the behavior of $k$-means on Gaussian clusters.
Such data is frequently modelled as a mixture of Gaussians; a mixture is a collection of Gaussians $\calD = \{ D_1, \ldots, D_k \}$ and
weights $w_1, \ldots, w_k$, such that $\sum_i w_i = 1$. To sample from the
mixture, we first pick $i$ with probability $w_i$ and then draw a random
sample from $D_i$. Clustering such data then reduces to the problem of {\em learning a mixture}; here, we are given only
the ability to sample from a mixture, and our goal is to learn the
parameters of each Gaussian $D_i$, as well as
determine which Gaussian each sample came from. 

Our results are as follows. First, we show that when the input comes from a mixture of two spherical Gaussians, a variant of the $2$-means algorithm successfully isolates the subspace containing the means of the Gaussians. Second, we show an exact expression for the convergence of a variant of the $2$-means algorithm, when the input is a large number of samples from a mixture of two spherical Gaussians. Our analysis shows that the convergence-rate is logarithmic in the dimension, and decreases with increasing separation between the mixture components. Finally, we address the sample requirement of $k$-means; for a mixture of $2$ spherical Gaussians, we show an upper bound on the number of samples required by a variant of $2$-means to get close to the true solution. The sample requirement grows with increasing dimensionality of the data, and decreasing separation between the means of the distributions. To match our upper bound, we show an information-theoretic lower bound on any algorithm that learns mixtures of two spherical Gaussians; our lower bound indicates that in the case when the overlap between the probability masses of the two distributions is small, the sample requirement of $2$-means is {\em near-optimal}.

Additionally, we make some partial progress towards analyzing $k$-means in the
more general case -- we show that if our variant of $2$-means is run on a
mixture of $k$ spherical Gaussians, then, it converges to a vector in the
subspace containing the means of $D_i$.

The key insight in our analysis is a novel potential function $\theta_t$,
which is the minimum angle between the subspace of the means of $D_i$, and the normal to the hyperplane separator in $2$-means. We show that this angle decreases with iterations of our variant of $2$-means, and we can characterize convergence rates and sample requirements, by characterizing the rate of decrease of the potential.

\medskip\noindent{\textbf{Our Results.}} More specifically, our results are as follows. We perform a probabilistic
analysis of a variant of $2$-means; our variant is essentially a
symmetrized version of $2$-means, and it reduces to $2$-means when we have
a very large number of samples from a mixture of two identical spherical
Gaussians with equal weights. In the $2$-means algorithm, the separator
between the two clusters is always a hyperplane, and we use the angle $\theta_t$ 
between the normal to this hyperplane and the mean of a mixture component 
in round $t$, as a measure of the potential in each round. Note that 
when $\theta_t = 0$, we have arrived at the correct solution.

First, in Section~\ref{sec:k2infsamples}, we consider the case when we have
at our disposal a very large number of samples from a mixture of
$N(\mu^1,(\sigma^1)^2
I_d)$ and $N(\mu^2,(\sigma^2)^2 I_d)$ with mixing weights $\rho^1, \rho^2$
respectively.  We show an exact
relationship between $\theta_t$ and $\theta_{t+1}$, for any
value of $\mu^j$, $\sigma^j$, $\rho^j$ and $t$. Using this relationship, we
can approximate the rate of convergence of $2$-means, for different values of the separation, as well as different initialization procedures. Our guarantees illustrate that
the progress of $k$-means is very fast -- namely, the square of the cosine
of $\theta_t$ grows by at least a constant factor (for high separation)
each round, when one is far from the actual solution, and slow when the actual solution is very close. 

Next, in Section~\ref{sec:k2finsamples}, we characterize the sample
requirement for our variant of $2$-means to succeed, when the input is a
mixture of two spherical Gaussians. For the
case of two identical spherical Gaussians with equal mixing weight, our
results imply that when the separation $\mu < 1$, and when
$\tilde{\Omega}(\frac{d}{\mu^4})$ samples are used in each round, the
$2$-means algorithm makes progress at roughly the same rate as in
Section~\ref{sec:k2infsamples}. This agrees with the
$\Omega(\frac{1}{\mu^4})$ sample complexity lower bound~\cite{Lbook} for
learning a mixture of Gaussians on the line, as well as with experimental
results of~\cite{SSR06}. 
When $\mu > 1$, our variant of $2$-means makes progress in each round, when
the number of samples is at least $\tilde{\Omega}(\frac{d}{\mu^2})$.

Then, in Section~\ref{sec:lowerbounds}, we provide an information-theoretic
lower bound on the sample requirement of any algorithm for learning a
mixture of two spherical Gaussians with standard deviation $1$ and equal
weight. We show that when the separation $\mu > 1$, any algorithm requires
$\Omega(\frac{d}{\mu^2})$ samples to converge to a vector within angle
$\theta = \cos^{-1}(c)$ of the true solution, where $c$ is a constant. This
indicates that $k$-means has near-optimal sample requirement when $\mu > 1$.

Finally, in Section~\ref{sec:genkinf}, we examine the performance of
$2$-means when the input comes from a mixture of $k$ spherical Gaussians.
We show that, in this case, the normal to the hyperplane separating the two
clusters converges to a vector in the subspace containing the means of the
mixture components. Again, we characterize exactly the rate of convergence,
which looks very similar to the bounds in Section~\ref{sec:k2infsamples}.
\medskip\noindent{\textbf{Related Work.}}
The convergence-time of the $k$-means algorithm
has been analyzed in the worst-case~\cite{AV06, V09}, and the smoothed
analysis settings~\cite{MR09, AMR09}; ~\cite{V09} shows that the
convergence-time of $k$-means may be $\Omega(2^n)$ even in the
plane.~\cite{AMR09} establishes a $O(n^{30})$ smoothed complexity bound.
~\cite{ORSS06} analyzes the performance of $k$-means when the
data obeys a clusterability condition; however, their clusterability
condition is very different, and moreover, they examine conditions under
which constant-factor approximations can be found.
In statistics literature, the $k$-means algorithm has been shown to be
consistent~\cite{M67}.~\cite{P81} shows that minimizing the $k$-means
objective function (namely, the sum of the squares of the distances between
each point and the center it is assigned to), is consistent, given
sufficiently many samples. As optimizing the $k$-means objective is
NP-Hard, one cannot hope to always get an exact solution. None of these two works quantify either the convergence rate or the exact sample requirement of $k$-means.

There has been two lines of previous work on theoretical analysis of the EM
algorithm~\cite{DLR77}, which is closely related to $k$-means. Essentially,
for learning mixtures of identical Gaussians, the only difference between
EM and $k$-means is that EM uses {\em partial assignments} or {\em soft
clusterings}, whereas $k$-means does not. First, ~\cite{RW84, XJ96} views learning mixtures as an optimization problem, and EM as an optimization procedure over the likelihood surface. They analyze the structure of the likelihood surface around the optimum to conclude that EM has first-order convergence. An optimization procedure on a parameter
$m$ is said to have first-order convergence, if, 
\[ ||m_{t+1} - m^*|| \leq R \cdot ||m_t - m^*|| \]
where $m_t$ is the estimate of $m$ at time step $t$ using $n$
samples, $m^*$ is the maximum likelihood estimator for $m$ using
$n$ samples, and $R$ is some fixed constant between $0$ and $1$. In contrast, our analysis also applies when one is far from the optimum.

The second line of work is a probabilistic analysis of EM due to~\cite{DS00}; they show a two-round variant of EM
which converges to the correct partitioning of the samples, when the input
is generated by a mixture of $k$ well-separated, spherical Gaussians.  For
their analysis to work, 
they require the mixture components to be separated such that two samples from the same Gaussian are a little closer in space than two
samples from different Gaussians. In contrast, our analysis applies when the separation is much smaller. 

The sample requirement of learning mixtures has been previously studied in
the literature, but not in the context of $k$-means. ~\cite{CHRZ07, C07} provides an algorithm that learns a
mixture of two binary product distributions with uniform weights, when the
separation $\mu$ between the mixture components is at least a constant, so
long as $\tilde{\Omega}(\frac{d}{\mu^4})$ samples are available. (Notice
that for such distributions, the directional standard deviation
is at most $1$.) Their algorithm is similar to $k$-means in some respects,
but different in that they use different sets of coordinates in each round,
and this is 
very crucial in their analysis. Additionally,~\cite{BCFZ07} show a spectral algorithm which learns a mixture of $k$ binary product
distributions, when the distributions have small overlap in probability
mass, and the sample size is at least $\tilde{\Omega}(d/\mu^2)$. \cite{Lbook} shows that at least $\tilde{\Omega}(\frac{1}{\mu^4})$ samples are required to learn a mixture of two Gaussians in one dimension.

We note that although our lower bound of $\Omega(d/\mu^2)$ for $\mu >1$ seems to contradict the upper bound of~\cite{CHRZ07, C07}, this is not actually the case. Our lower bound characterizes the number of samples required to find a vector at an angle $\theta = \cos^{-1}(1/10)$ with the vector joining the means. However, in order to classify a constant fraction of the points correctly, we only need to find a vector at an angle $\theta' = \cos^{-1}(1/\mu)$ with the vector joining the means. Since the goal of~\cite{CHRZ07} is to simply
classify a constant fraction of the samples, their upper bound is less than $O(d/\mu^2)$.

In addition to theoretical analysis, there has been very interesting
experimental work due to~\cite{SSR06}, which studies the sample requirement
for EM on a mixture of $k$ spherical Gaussians. They conjecture that the
problem of learning mixtures has three phases, depending on the number of
samples : with less than about $\frac{d}{\mu^4}$ samples, learning mixtures
is information-theoretically hard; with more than about $\frac{d}{\mu^2}$
samples, it is computationally easy, and in between, computationally hard,
but easy in an information-theoretic sense. Finally, there has been a line of work which provides algorithms (different from EM or $k$-means) that are guaranteed to learn mixtures of Gaussians under certain separation conditions -- see, for
example,~\cite{D99, VW02, AK01, AM05, KSV05, CR08, BV08}. For mixtures of
two Gaussians, our result is comparable to the best results for spherical
Gaussians~\cite{VW02} in terms of separation requirement, and we have a smaller sample requirement.

\section{The Setting}
\label{sec:setting}

The $k$-means algorithm iteratively refines a partitioning of the
input data. At each iteration, $k$ points are maintained as {\em centers};
each input is assigned to its closest center. The center of each cluster is
then recomputed as the empirical mean of the points assigned to the cluster. 
This procedure is continued until convergence.

Our variant of $k$-means is described below. There are two main differences between the actual $2$-means algorithm, and our
variant. First, we use a separate set of samples in each iteration.
Secondly, we always fix the cluster boundary to be a
hyperplane through the origin. When the input is a very large number of samples from a mixture of two identical Gaussians
with equal mixing weights, and with center of mass at the origin, 
this is exactly $2$-means initialized with symmetric centers (with respect to the origin).
We analyze this symmetrized version of $2$-means even when the mixing weights and the variances of the Gaussians
in the mixture are not equal.

The input to our algorithm is a set of samples $\samp$, a number of iterations $N$, and a starting vector $\hu_0$, and the output is a vector $u_N$ obtained after $N$ iterations of the $2$-means algorithm. 

\medskip \noindent \textbf{\imeans($\samp$, $N$, $u_0$)}
\begin{enumerate}

\item Partition $\samp$ randomly into sets of equal size $\samp_1, \ldots, \samp_N$.
\item For iteration $t = 0, \ldots, N-1$, compute:
\begin{eqnarray*}
C_{t+1} & = & \{ x \in \samp_{t+1} | \dot{x}{u_{t}} > 0 \} \\
\bar{C}_{t+1} & = & \{ x \in \samp_{t+1} | \dot{x}{u_t} < 0 \}
\end{eqnarray*}
Compute: $u_{t+1}$ as the empirical average of $C_{t+1}$.
\end{enumerate}

\noindent {\textbf{Notation.}} In Sections~\ref{sec:k2infsamples}
and~\ref{sec:k2finsamples}, we analyze Algorithm \imeans, when the input is
generated by a mixture $\calD = \{D_1, D_2\}$ of two Gaussians. We let $D_1
= N(\mu^1, (\sigma^1)^2 I_d)$, $D_2 = N(\mu^2, (\sigma^2)^2 I_d)$, with
mixing weights $\rho^1$ and $\rho^2$. We also assume without loss of
generality that for all $j$, $\sigma^j \geq 1$. As the center of mass of the mixture lies at the origin, $\rho^1 \mu^1 + \rho^2 \mu^2 = 0$. 
In Section~\ref{sec:genkinf}, we study a somewhat more general case.

We define $b$ as the unit vector along $\mu^1$, i.e. $b = \frac{\mu^1}{||\mu^1||}$.Henceforth, for any vector $v$, we use the notation $\hv$ to denote the unit vector along $v$, i.e. $\hv = \frac{v}{||v||}$. 
Therefore, $\hu_t$ is the unit vector along $u_t$. 
We assume without loss of generality that $\mu^1$ lies in the cluster $C_{t+1}$. 
In addition, for each $t$, we define $\theta_t$ as the angle between $\mu^1$
and $u_t$.We use the cosine of $\theta_t$ as a measure of progress of
the algorithm at round $t$, and our goal is to show that this quantity increases
as $t$ increases. Observe that $0 \leq \cos(\theta_t) \leq 1$, and
$\cos(\theta_t) = 1$ when $u_t$ and $\mu^1$ are aligned along the same
direction.  For each $t$, we define $\tau_t^j =\dot{\mu^j}{\hu_t} = \dot{\mu^j}{b}\cos(\theta_t)$. 
Moreover, from our notation, $\cos(\theta_t) = \frac{\tau_t^1}{||\mu^1||}$.
In addition, we define $\rho_{\min} = \min_j \rho^j$, $\mu_{\min} = \min_j
||\mu^j||$, and $\sigma_{\max} = \max_j \sigma^j$. For the special case of two identical spherical Gaussians with equal weights, we use $\bmu = ||\mu^1|| = ||\mu^2||$.
Finally, for $a \leq b$, we use the notation $\Phi(a,b)$ to denote the
probability that a standard normal variable takes values between $a$ and
$b$.

\begin{figure}[t]
\begin{center}

\begin{tikzpicture}[scale=0.8]
    \newcommand{\scalefont}[1]{{\small #1}}

    \path(0,0) ++(35:3)coordinate(v);
    \path(0,0) ++(-55:3)coordinate(u);
    \path(3,0) ++(-145:2.45) coordinate (u_intersect);
    \path(3,0) ++(125:1.7) coordinate (v_intersect);
    
    \draw[->,line width=0.5pt] (-4,0)--(4,0);
    \draw(0,0.2) node[above]{\scalefont{$O$}};
    \draw[line width=0.5pt] (0,-0.2)--(0,0.2);

    \draw(u) node[left]{\scalefont{$\breve{u}_t$}};
    \draw(v) node[above]{\scalefont{$\breve{v}_t$}};

    \draw[rotate=35,->](0,3)--(0,-3);
    \draw[rotate=35,->](-3,0)--(3,0);
    \draw[gray,->](0.7,0) arc (0:-55:0.7);
    \draw(0.8,-0.4)node{\scalefont{$\theta_t$}};

    \draw[gray, dashed](3,0)--(u_intersect);
    \draw(2.6,0.9) node{\scalefont{$\tau_t^1$}};
    \draw[gray, dashed](3,0)--(v_intersect);
    \draw(2.4,-1) node{\scalefont{$\sqrt{||\mu^1||^2-(\tau_t^1)^2}$}};

    \filldraw[line width=1pt](3,0)circle(.05);
    \draw(-3,0) node[above]{\scalefont{$\mu^2$}};
    \filldraw(-3,0)circle(.05);
    \draw(3,0) node[above]{\scalefont{$\mu^1$}};

\end{tikzpicture}

{\caption{\small Here we are depicting the plane defined by the vectors $\mu^1$ and $\hu_t$. 
The vector $\hv_t$ is simply the unit vector along $\mu^1 - \dot{\mu^1}{\hu_t}\hu_t$. 
Therefore, we have $\tau_t^1=||\mu^1||\cos(\theta_t)$ and
$\sqrt{||\mu^1||^2-(\tau_t^1)^2}=||\mu^1|| \sin(\theta_t)$.}}
\end{center}
\end{figure}

\section{Exact Estimation}\label{sec:k2infsamples}

In this section, we examine the performance of Algorithm \imeans\ when one can estimate the vectors $u_{t}$ exactly -- that is, when a
very large number of samples from the mixture is available. 
Our main result of this section is Lemma \ref{lem:expr1}, which exactly characterizes the behavior of \imeans\ at a specific iteration $t$.

For any $t$, we define the quantities $\xi_t$ and $\m_t$ as follows:
\begin{eqnarray*} 
\xi_t =  \sum_j \rho^j \sigma^j\frac{e^{-(\tau_t^j)^2/2(\sigma^j)^2}}{\sqrt{2 \pi}}, & 
\m_t  =  \sum_j \rho^j \dot{\mu^j}{b} \cdot \Phi(-\frac{\tau_t^j}{\sigma^j},\infty)
\end{eqnarray*}

Now, our main lemma can be stated as follows.

\begin{lemma}\label{lem:expr1}
\[\cos^2(\theta_{t+1}) = \cos^2(\theta_t)  \left(1 + \tan^2(\theta_t) \frac{ 2\cos(\theta_t)\xi_t\m_t + \m_t^2 }{ \xi_t^2 + 2\cos(\theta_t)\xi_t\m_t + \m_t^2 } \right) \]
\end{lemma}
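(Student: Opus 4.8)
The plan is to work in the infinite-sample (exact) regime, where the empirical average defining $u_{t+1}$ converges to a conditional expectation, and then to compute the direction of that expectation in closed form. Since $u_t$ and $\hu_t$ point the same way, the cluster $C_{t+1}$ is the halfspace $\{x : \dot{x}{\hu_t} > 0\}$, so
\[ u_{t+1} = \bbE[x \mid \dot{x}{\hu_t} > 0] = \frac{\sum_j \rho^j \bbE_{x \sim D_j}[x\, \mathbf{1}(\dot{x}{\hu_t} > 0)]}{\sum_j \rho^j \Pr_{x \sim D_j}[\dot{x}{\hu_t} > 0]}. \]
The denominator is a positive scalar and hence irrelevant to $\theta_{t+1}$; I would therefore study the unnormalized vector $w := \sum_j \rho^j \bbE_{x \sim D_j}[x\, \mathbf{1}(\dot{x}{\hu_t} > 0)]$, which points along $\hu_{t+1}$, and recover $\cos(\theta_{t+1}) = \dot{w}{b}/\|w\|$.

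The heart of the argument is the per-component truncated first moment. For $x \sim D_j$ I would split $x = \dot{x}{\hu_t}\hu_t + x_\perp$ into its $\hu_t$-component and its orthogonal part $x_\perp$. Sphericity of $D_j$ makes these independent, with $\dot{x}{\hu_t} \sim N(\tau_t^j, (\sigma^j)^2)$ and $\bbE[x_\perp] = \mu^j - \tau_t^j \hu_t$; in particular the halfspace indicator depends only on $\dot{x}{\hu_t}$ and so is independent of $x_\perp$. This gives
\[ \bbE_{x \sim D_j}[x\, \mathbf{1}(\dot{x}{\hu_t} > 0)] = \bbE[\dot{x}{\hu_t}\, \mathbf{1}(\dot{x}{\hu_t} > 0)]\, \hu_t + (\mu^j - \tau_t^j \hu_t)\, \Phi(-\tfrac{\tau_t^j}{\sigma^j}, \infty). \]
Writing the $N(\tau_t^j,(\sigma^j)^2)$ variable as $\tau_t^j + \sigma^j z$ and using $\int_a^\infty z\phi(z)\,dz = \phi(a)$ for the standard normal density $\phi$, the scalar truncated moment equals $\tau_t^j \Phi(-\tfrac{\tau_t^j}{\sigma^j}, \infty) + \sigma^j e^{-(\tau_t^j)^2/2(\sigma^j)^2}/\sqrt{2\pi}$. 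The $\tau_t^j \hu_t$ terms then cancel, leaving the clean identity
\[ \bbE_{x \sim D_j}[x\, \mathbf{1}(\dot{x}{\hu_t} > 0)] = \mu^j\, \Phi(-\tfrac{\tau_t^j}{\sigma^j}, \infty) + \sigma^j \frac{e^{-(\tau_t^j)^2/2(\sigma^j)^2}}{\sqrt{2\pi}}\, \hu_t. \]

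Summing over $j$ against the weights $\rho^j$, the coefficients of $\hu_t$ add up to exactly $\xi_t$. For the remaining piece I would invoke the center-of-mass condition $\rho^1\mu^1 + \rho^2\mu^2 = 0$, which forces $\mu^2$ to be antiparallel to $\mu^1$; both means thus lie on the line spanned by $b$, so $\mu^j = \dot{\mu^j}{b}\, b$ and $\sum_j \rho^j \mu^j \Phi(-\tfrac{\tau_t^j}{\sigma^j}, \infty) = \m_t\, b$. Hence $w = \xi_t\, \hu_t + \m_t\, b$, a vector in the two-dimensional plane of $\hu_t$ and $b$. Using $\dot{\hu_t}{b} = \cos(\theta_t)$ gives $\dot{w}{b} = \xi_t\cos(\theta_t) + \m_t$ and $\|w\|^2 = \xi_t^2 + 2\xi_t \m_t \cos(\theta_t) + \m_t^2$, so
\[ \cos^2(\theta_{t+1}) = \frac{(\xi_t\cos(\theta_t) + \m_t)^2}{\xi_t^2 + 2\xi_t \m_t \cos(\theta_t) + \m_t^2}. \]
The final step is the routine algebraic check that this matches the stated expression: expanding $\cos^2(\theta_t) + \sin^2(\theta_t)\frac{2\cos(\theta_t)\xi_t\m_t + \m_t^2}{\xi_t^2 + 2\cos(\theta_t)\xi_t\m_t + \m_t^2}$ over the common denominator collapses the numerator to $(\xi_t\cos(\theta_t) + \m_t)^2$.

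The main obstacle is conceptual rather than computational and lives in the second step: recognizing that sphericity makes the halfspace indicator a function of the $\hu_t$-coordinate alone, hence independent of $x_\perp$, is exactly what produces a single-direction correction $\xi_t \hu_t$ and keeps $w$ inside the plane of $\hu_t$ and $b$. The two-Gaussian center-of-mass hypothesis is equally essential, since it is what collapses $\sum_j \rho^j \mu^j \Phi(-\tfrac{\tau_t^j}{\sigma^j},\infty)$ to a scalar multiple of $b$; without it $w$ would generically leave this plane and the angle recursion would not close. The truncated Gaussian integral and the closing algebra are standard.
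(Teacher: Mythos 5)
Your proof is correct and follows essentially the same route as the paper's: the same truncated-Gaussian first-moment computation, the same use of sphericity to make the halfspace indicator independent of the orthogonal component (confining $u_{t+1}$ to the plane of $\hu_t$ and $b$), and the same closing algebra. The only differences are cosmetic --- you work with the unnormalized vector $Z_{t+1} u_{t+1}$ (which is exactly the paper's $S_{t+1}$) rather than the conditional weights $w^j_{t+1}$ and conditional means $u^j_{t+1}$, and you expand in the non-orthogonal pair $(\hu_t, b)$ rather than the orthogonal pair $(\hu_t, \hv_t)$, which lets the $\tau_t^j \hu_t$ terms cancel in a single clean per-component identity.
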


The proof is in the Appendix. Using Lemma~\ref{lem:expr1}, we can characterize the convergence rates and
times of \imeans\ for different values of $\mu^j$, $\rho^j$ and $\sigma^j$, as well as  different initializations of $u_0$. 

The convergence rates can be characterized in terms of two natural parameters of the problem, $M = \sum_j \frac{\rho^j ||\mu^j||^2}{\sigma^j}$, 
 which measures how much the distributions are separated, and  $V = \sum_j \rho^j \sigma^j$, which measures the average standard deviations of the distributions. We observe that as $\sigma^j \geq 1$, for all $j$, $V \geq 1$ always. 
To characterize these rates, it is also convenient to look at two different cases, according to the value of $\mu^j$, the separation between the mixture components.

\smallskip \noindent {\bf{Small $\mu^j$.}} First, we consider the case when each $||\mu^j||/\sigma^j$ is
less than a fixed constant $\sqrt{\ln \frac{9}{2 \pi}}$, including the case
when $||\mu^j||$ can be much less than $1$. In this case, the Gaussians are not even separated in terms of probability mass; 
in fact, as $||\mu^j||/\sigma^j$ decreases, the overlap in probability mass between the Gaussians tends to $1$. 
However, we show that \imeans\ can still do
something interesting, in terms of recovering the subspace containing the means of the distributions. 
Theorem~\ref{thm:smallmu} summarizes the convergence rate in this case.

\begin{theorem}[Small $\mu^j$]\label{thm:smallmu}
Let $||\mu^j||/\sigma^j < \sqrt{\ln \frac{9}{2 \pi}}$, for $j=1,2$. Then, there exist fixed constants $a_1$ and $a_2$, such that:
\begin{eqnarray*} 
\cos^2(\theta_t)(1 + a_1(M/V) \sin^2(\theta_t)) \leq \cos^2(\theta_{t+1}) \leq \cos^2 (\theta_t) (1 + a_2 (M/V) \sin^2(\theta_t)) 
\end{eqnarray*}
\end{theorem}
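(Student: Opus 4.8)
The plan is to work directly from Lemma~\ref{lem:expr1} and reduce the theorem to a two-sided estimate on the single fraction
\[ G_t := \frac{2\cos(\theta_t)\xi_t\m_t + \m_t^2}{\xi_t^2 + 2\cos(\theta_t)\xi_t\m_t + \m_t^2}. \]
Since $\tan^2(\theta_t) = \sin^2(\theta_t)/\cos^2(\theta_t)$, the lemma reads $\cos^2(\theta_{t+1}) = \cos^2(\theta_t)\bigl(1 + \sin^2(\theta_t)\,G_t/\cos^2(\theta_t)\bigr)$, so it suffices to show $a_1(M/V)\cos^2(\theta_t) \le G_t \le a_2(M/V)\cos^2(\theta_t)$ for absolute constants $a_1,a_2>0$. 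Everything then comes down to pinning down the sizes of $\xi_t$ and $\m_t$ under the hypothesis $\|\mu^j\|/\sigma^j < \kappa$, where I set $\kappa := \sqrt{\ln(9/2\pi)}$.

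First I would control the two building blocks. Writing $z_j := \tau_t^j/\sigma^j$, Cauchy--Schwarz and $\cos(\theta_t)\le 1$ give $|z_j|\le\|\mu^j\|/\sigma^j<\kappa$, whence $e^{-z_j^2/2} > e^{-\kappa^2/2} = \sqrt{2\pi}/3$ and of course $e^{-z_j^2/2}\le 1$. Plugging this into the definition of $\xi_t$ immediately sandwiches it as $V/3 \le \xi_t \le V/\sqrt{2\pi}$, i.e. $\xi_t = \Theta(V)$. For $\m_t$ the key manipulation is to rewrite $\Phi(-z_j,\infty)$ as the standard normal c.d.f. $\tfrac12 + \tfrac{1}{\sqrt{2\pi}}\int_0^{z_j}e^{-s^2/2}\,ds$; the constant $\tfrac12$ contributes $\tfrac12\langle\sum_j\rho^j\mu^j,\,b\rangle = 0$ because the center of mass is at the origin. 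This leaves $\m_t = \sum_j \rho^j\langle\mu^j,b\rangle\,\tfrac{1}{\sqrt{2\pi}}\int_0^{z_j}e^{-s^2/2}\,ds$, and since $\langle\mu^j,b\rangle$ and $z_j$ share a sign every summand is nonnegative. Using $|z_j|e^{-z_j^2/2} \le \bigl|\int_0^{z_j} e^{-s^2/2}\,ds\bigr| \le |z_j|$ together with the bound on $e^{-z_j^2/2}$ gives, term by term, a lower bound $\tfrac13(\langle\mu^j,b\rangle)^2\cos(\theta_t)/\sigma^j$ and an upper bound $\tfrac{1}{\sqrt{2\pi}}(\langle\mu^j,b\rangle)^2\cos(\theta_t)/\sigma^j$. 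Here I would invoke the fact specific to two components: origin-centering forces $\mu^2$ to be antiparallel to $b$, so $(\langle\mu^j,b\rangle)^2=\|\mu^j\|^2$ for both $j$, and the weighted sum collapses exactly to $M$. Hence $\tfrac{M\cos(\theta_t)}{3} \le \m_t \le \tfrac{M\cos(\theta_t)}{\sqrt{2\pi}}$, i.e. $\m_t = \Theta(M\cos(\theta_t))$.

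With these two estimates the theorem is a matter of crude algebra, and I would also record that the hypothesis gives $M/V < \kappa^2$ (a constant), which keeps $\m_t/\xi_t$ bounded. For the upper bound on $G_t$, drop the nonnegative cross and $\m_t^2$ terms in the denominator to leave $\xi_t^2$, then substitute $\m_t \le \tfrac{M\cos(\theta_t)}{\sqrt{2\pi}}$ and $\xi_t \ge V/3$; the result is $G_t \le c_1(M/V)\cos^2(\theta_t)\bigl(2 + c_1 M/V\bigr)$, which is at most $a_2(M/V)\cos^2(\theta_t)$ since $M/V<\kappa^2$. For the lower bound, drop $\m_t^2$ from the numerator and bound the denominator from above by $\tfrac{V^2}{2\pi}(1+\kappa^2)^2$ using the upper ranges of $\xi_t,\m_t$; the surviving $2\cos(\theta_t)\xi_t\m_t$ is at least $\tfrac29 MV\cos^2(\theta_t)$, yielding $G_t \ge a_1(M/V)\cos^2(\theta_t)$. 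Multiplying through by $\tan^2(\theta_t)$ and feeding back into Lemma~\ref{lem:expr1} gives the claimed sandwich.

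I expect the only real subtlety --- and the step I would flag as the crux --- to be the $\m_t$ computation: recognizing that origin-centering both annihilates the $\tfrac12$ constant and collinearizes the two means so that $\sum_j \rho^j(\langle\mu^j,b\rangle)^2/\sigma^j$ is \emph{exactly} $M$ rather than merely bounded by it. Everything else (the exponential bound driven by the precise threshold $\kappa=\sqrt{\ln(9/2\pi)}$, and the term-dropping inequalities for $G_t$) is routine once the sizes of $\xi_t$ and $\m_t$ are fixed. It is worth noting that this collinearity is exactly where the restriction to two Gaussians enters; for the general $k$-Gaussian setting of Section~\ref{sec:genkinf} one would only obtain $\sum_j \rho^j(\langle\mu^j,b\rangle)^2/\sigma^j \le M$ and hence a correspondingly weaker statement.
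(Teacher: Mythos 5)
Your proposal is correct and is essentially the paper's own argument: the paper proves Theorem~\ref{thm:smallmu} precisely by combining Lemma~\ref{lem:expr1} with the small-$\tau$ estimates of Lemma~\ref{lem:smalltau}, and your bounds $|z|e^{-z^2/2}\le\bigl|\int_0^{z}e^{-s^2/2}\,ds\bigr|\le|z|$ together with $e^{-\kappa^2/2}=\sqrt{2\pi}/3$ are exactly the content of that lemma, giving $\xi_t=\Theta(V)$ and $\m_t=\Theta(M\cos(\theta_t))$. Your two flagged observations --- that origin-centering ($\rho^1\mu^1+\rho^2\mu^2=0$) annihilates the $\tfrac12$ term of $\Phi(-\tau_t^j/\sigma^j,\infty)$ and makes the two means collinear so that $\sum_j\rho^j\langle\mu^j,b\rangle^2/\sigma^j$ equals $M$ exactly --- are just the details the paper's one-line proof elides.
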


For a mixture of two identical Gaussians with equal mixing weights, we can conclude:

\begin{corollary}\label{cor1:smallmu}
For a mixture of two identical spherical Gaussians with equal mixing weights, standard deviation $1$, 
if $\bmu = ||\mu^1|| = ||\mu^2|| < \sqrt{\ln \frac{9}{2 \pi}}$, then,
\[ \cos^2(\theta_t) (1 + a_1' \bmu^2 \sin^2(\theta_t)) \leq \cos^2(\theta_{t+1}) \leq \cos^2(\theta_t) (1 + a_2'\bmu^2 \sin^2 (\theta_t)) \]
\end{corollary}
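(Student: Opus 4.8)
The plan is to obtain the corollary as an immediate specialization of Theorem~\ref{thm:smallmu}: I would check that the symmetric setting meets that theorem's hypothesis and that the ratio $M/V$ collapses to $\bmu^2$, after which the two-sided bound transfers verbatim with relabeled constants.

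First I would record what the assumptions force. Equal mixing weights $\rho^1 = \rho^2 = 1/2$ together with the center-of-mass constraint $\rho^1\mu^1 + \rho^2\mu^2 = 0$ give $\mu^2 = -\mu^1$, so $||\mu^1|| = ||\mu^2|| = \bmu$. Combined with $\sigma^1 = \sigma^2 = 1$, the single hypothesis $\bmu < \sqrt{\ln\frac{9}{2\pi}}$ is exactly the per-component condition $||\mu^j||/\sigma^j < \sqrt{\ln\frac{9}{2\pi}}$ for both $j=1,2$, so Theorem~\ref{thm:smallmu} applies. Next I would compute the two structural parameters in this case:
\[
M = \sum_j \frac{\rho^j ||\mu^j||^2}{\sigma^j} = \tfrac{1}{2}\bmu^2 + \tfrac{1}{2}\bmu^2 = \bmu^2, \qquad V = \sum_j \rho^j\sigma^j = \tfrac{1}{2} + \tfrac{1}{2} = 1,
\]
so that $M/V = \bmu^2$. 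Substituting $M/V = \bmu^2$ into the conclusion of Theorem~\ref{thm:smallmu} and setting $a_1' = a_1$, $a_2' = a_2$ produces precisely the claimed inequalities
\[
\cos^2(\theta_t)\bigl(1 + a_1'\bmu^2 \sin^2(\theta_t)\bigr) \leq \cos^2(\theta_{t+1}) \leq \cos^2(\theta_t)\bigl(1 + a_2'\bmu^2 \sin^2(\theta_t)\bigr).
\]

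Since the argument is a pure specialization, there is essentially no obstacle beyond careful bookkeeping. The only points that require a moment's attention are verifying that the symmetry assumptions genuinely pin both means to norm $\bmu$ (so the lone hypothesis $\bmu < \sqrt{\ln\frac{9}{2\pi}}$ really does cover both components), and confirming that $M/V$ simplifies exactly to $\bmu^2$ with no leftover factor from the weights or variances. Both are immediate once $\mu^2 = -\mu^1$ and $\sigma^j = 1$ are in hand, so the corollary follows directly from the general theorem.
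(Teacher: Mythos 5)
Your proposal is correct and matches the paper's route: the paper presents Corollary~\ref{cor1:smallmu} as an immediate specialization of Theorem~\ref{thm:smallmu} (itself obtained from Lemmas~\ref{lem:expr1} and~\ref{lem:smalltau}), and your verification that $\rho^1=\rho^2=\tfrac12$, $\sigma^j=1$ give $M=\bmu^2$, $V=1$, hence $M/V=\bmu^2$, is exactly the required bookkeeping. Your additional check that the symmetric hypotheses satisfy the per-component condition $||\mu^j||/\sigma^j<\sqrt{\ln\frac{9}{2\pi}}$ is the only nontrivial point, and you handle it correctly.
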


The proof follows by a combination of Lemma~\ref{lem:expr1}, and Lemma~\ref{lem:smalltau}. 
From Corollary~\ref{cor1:smallmu}, we observe that $\cos^2(\theta_t)$ grows by a factor of 
$(1 + \Theta(\bmu^2))$ 
in each iteration, except when $\theta_t$ is very close to $0$. This means that when \imeans\ is far from the actual solution, it approaches the solution at a consistently high rate. The convergence rate only grows slower, once $k$-means is very close to the actual solution.

\medskip \noindent {\bf{Large $\mu^j$.}} In this case, there exists a $j$ such that $||\mu^j||/\sigma^j \geq \sqrt{\ln \frac{9}{2 \pi}}$. 
In this regime, the Gaussians have small overlap in probability mass, yet, the distance between two samples from the same distribution is much greater than the separation between the distributions. Our guarantees for this case are summarized by Theorem \ref{thm:largemu}.

We see from Theorem \ref{thm:largemu} that there are two regimes of behavior of the convergence rate, depending on the value of $\max_j |\tau_t^j|/\sigma^j$. These regimes have a natural interpretation. The first regime corresponds to the case when $\theta_t$ is large enough, such that when projected onto $u_t$, at most a constant fraction of samples from the two distributions can be classified with high confidence. The second regime corresponds to the case when $\theta_t$ is close enough to $0$ such that when projected along $u_t$, most of the samples from the distributions can be classified with high confidence. As expected, in the second regime, the convergence rate is much slower than in the first regime.

\begin{theorem}[Large $\mu^j$]\label{thm:largemu}
Suppose there exists $j$ such that $||\mu^j||/\sigma^j \geq \sqrt{\ln \frac{9}{2 \pi}}$. 
If $|\tau_t^j|/\sigma^j < \sqrt{ \ln \frac{9}{2 \pi}}$, for all $j$, then, there exist fixed constants $a_3$, $a_4$, $a_5$ and $a_6$ such that:
\begin{eqnarray*}
\cos^2(\theta_t) \left(1 + \frac{a_3 (M/V)^2 \sin^2(\theta_t) }{a_4 + (M/V)^2 \cos^2(\theta_t)}\right) \leq \cos^2(\theta_{t+1}) \leq  \cos^2(\theta_t) \left(1 + \frac{a_5 ((M/V) + (M/V)^2)\sin^2(\theta_t)}{a_6 + (M/V)^2\cos^2(\theta_t)}\right) 
\end{eqnarray*}
On the other hand, if there exists $j$ such that $|\tau_t^j|/\sigma^j \geq \sqrt{\ln \frac{9}{2 \pi}}$, then, there exist fixed constants $a_7$ and $a_8$ such that:
\begin{eqnarray*}
\cos^2(\theta_t) (1 + \frac{ a_7 \rho_{\min}^2\mu_{\min}^2}{a_8 V^2 + \rho_{\min}^2 \mu_{\min}^2}\tan^2(\theta_t)) \leq \cos^2(\theta_{t+1}) \leq \cos^2(\theta_t)(1 + \tan^2(\theta_t)) 
\end{eqnarray*}
\end{theorem}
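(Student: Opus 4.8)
The plan is to read everything off Lemma~\ref{lem:expr1}, which gives $\cos^2(\theta_{t+1}) = \cos^2(\theta_t)(1 + \tan^2(\theta_t)R_t)$ with
\[ R_t = \frac{2\cos(\theta_t)\xi_t\m_t + \m_t^2}{\xi_t^2 + 2\cos(\theta_t)\xi_t\m_t + \m_t^2}. \]
Dividing numerator and denominator by $\xi_t^2$ and setting $r_t = \m_t/\xi_t \geq 0$, this becomes $R_t = 1 - (1 + 2\cos(\theta_t)r_t + r_t^2)^{-1}$, which is monotone increasing in $r_t$. So the whole theorem reduces to two-sided estimates of $\xi_t$ and $\m_t$ (equivalently of $r_t$) in each of the two regimes, after which I simply substitute and simplify.

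I would establish two basic estimates first. For $\xi_t$, each summand is $\rho^j\sigma^j(2\pi)^{-1/2}e^{-(\tau_t^j)^2/2(\sigma^j)^2}$, whose Gaussian factor has argument exactly $|\tau_t^j|/\sigma^j$; whenever this is below $\sqrt{\ln(9/2\pi)}$ the density factor lies in $(\tfrac13,(2\pi)^{-1/2})$ — this is precisely the role of the threshold constant — so $\xi_t = \Theta(V)$, while in all cases $\xi_t \leq V/\sqrt{2\pi}$. For $\m_t$ the key observation is the cancellation from the center of mass at the origin, $\sum_j \rho^j\dot{\mu^j}{b} = \dot{\sum_j\rho^j\mu^j}{b} = 0$, which lets me subtract $\tfrac12$ from each $\Phi$ and write
\[ \m_t = \sum_j \rho^j\dot{\mu^j}{b}\Big(\Phi(-\tfrac{\tau_t^j}{\sigma^j},\infty) - \tfrac12\Big) = \sum_j \rho^j\dot{\mu^j}{b}\int_0^{\tau_t^j/\sigma^j}\frac{e^{-s^2/2}}{\sqrt{2\pi}}\,ds. \]
Since $\dot{\mu^j}{b}$ and $\tau_t^j = \dot{\mu^j}{b}\cos(\theta_t)$ always share a sign, every term is nonnegative, so $\m_t \geq 0$ and I may both lower-bound $\m_t$ by a single term and upper-bound each integral by a constant times its argument.

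In the first regime ($|\tau_t^j|/\sigma^j < \sqrt{\ln(9/2\pi)}$ for all $j$) the integral above is $\Theta(\tau_t^j/\sigma^j)$ with absolute constants, so $\rho^j\dot{\mu^j}{b}\int(\cdots) = \Theta(\rho^j||\mu^j||^2\cos(\theta_t)/\sigma^j)$ and hence $\m_t = \Theta(M\cos(\theta_t))$, giving $r_t = \Theta((M/V)\cos(\theta_t))$. Substituting, the numerator $2\cos(\theta_t)r_t + r_t^2$ is $\Theta(((M/V)+(M/V)^2)\cos^2(\theta_t))$ while $r_t^2$ alone is $\Omega((M/V)^2\cos^2(\theta_t))$; for the denominator I collapse the cross term by AM--GM, $(M/V)\cos^2(\theta_t) \leq \tfrac12(\cos^2(\theta_t) + (M/V)^2\cos^2(\theta_t)) \leq \tfrac12(1 + (M/V)^2\cos^2(\theta_t))$, so $1 + 2\cos(\theta_t)r_t + r_t^2 = \Theta(1 + (M/V)^2\cos^2(\theta_t))$. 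Multiplying by $\tan^2(\theta_t)$ and reading off the factors of $\cos^2(\theta_t)$ yields both inequalities of the first display (dropping the cross term for the lower bound to keep only $(M/V)^2$, retaining both terms for the upper bound).

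In the second regime (some $j^*$ with $|\tau_t^{j^*}|/\sigma^{j^*} \geq \sqrt{\ln(9/2\pi)}$) the upper bound is immediate, since $R_t < 1$ always gives $\cos^2(\theta_{t+1}) < \cos^2(\theta_t)(1+\tan^2(\theta_t)) = 1$. For the lower bound I keep only the $j^*$ term of $\m_t$, whose integral is at least the absolute constant $\int_0^{\sqrt{\ln(9/2\pi)}}(2\pi)^{-1/2}e^{-s^2/2}\,ds$, so $\m_t = \Omega(\rho_{\min}\mu_{\min})$; with $\xi_t \leq V/\sqrt{2\pi}$ this gives $r_t = \Omega(\rho_{\min}\mu_{\min}/V)$. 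Then $\cos(\theta_t)r_t \geq 0$ gives $R_t \geq r_t^2/(1+r_t^2)$, and the monotonicity of $x\mapsto x/(1+x)$ converts the bound on $r_t^2$ into exactly the stated form $\frac{a_7\rho_{\min}^2\mu_{\min}^2}{a_8 V^2 + \rho_{\min}^2\mu_{\min}^2}$. I expect the only real work to be the constant-tracking in the Gaussian density and truncated-tail estimates — verifying that $\sqrt{\ln(9/2\pi)}$ keeps the density within a factor of $3$ of its peak and that $\int_0^x e^{-s^2/2}\,ds = \Theta(x)$ on the relevant range — which I would isolate in a helper lemma analogous to Lemma~\ref{lem:smalltau}; the remainder is the substitution and the AM--GM collapse of the denominator.
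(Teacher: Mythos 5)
Your proposal is correct and follows essentially the same route as the paper: combine the exact recursion of Lemma~\ref{lem:expr1} with Lemma~\ref{lem:smalltau}-style estimates on the Gaussian density and $\Phi(0,\tau)$, using the center-of-mass cancellation $\sum_j \rho^j \dot{\mu^j}{b} = 0$ to get $\m_t = \Theta(M\cos(\theta_t))$ when all $|\tau_t^j|/\sigma^j$ are below the threshold, and retaining the single term $j^*$ to get $\m_t = \Omega(\rho_{\min}\mu_{\min})$ otherwise. Your normalization $r_t = \m_t/\xi_t$ together with the monotonicity of $R_t$ in $r_t$ is a clean (but cosmetic) repackaging of the same substitution the paper performs directly on $\xi_t$ and $\m_t$.
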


For two identical Gaussians with standard deviation $1$, we can conclude:

\begin{corollary}
For a mixture of two identical Gaussians with equal mixing weights, and standard deviation $1$, if $\bmu = ||\mu^1|| = ||\mu^2|| > \sqrt{\ln \frac{9}{2 \pi}}$, and if $|\tau_t^1|=|\tau_t^2| \leq \sqrt{\ln \frac{9}{2 \pi}}$, then, there exist fixed constants $a'_3, a'_4, a'_5, a'_6$ such that:
\begin{eqnarray*}
\cos^2(\theta_t) \left(1 + \frac{a'_3 \bmu^4 \sin^2(\theta_t) }{a'_4 + \bmu^4 \cos^2(\theta_t)}\right) \leq \cos^2(\theta_{t+1}) \leq  \cos^2(\theta_t) \left(1 + \frac{a'_5 \bmu^4\sin^2(\theta_t)}{a_6 + \bmu^4\cos^2(\theta_t)}\right) 
\end{eqnarray*}
On the other hand, if $|\tau_t^1|= |\tau_t^2| \geq \sqrt{\ln \frac{9}{2 \pi}}$, then, there exists a fixed constant $a'_7$ such that:
\begin{eqnarray*}
\cos^2(\theta_t) (1 +  a'_7 \tan^2(\theta_t)) \leq \cos^2(\theta_{t+1}) \leq \cos^2(\theta_t)(1 + \tan^2(\theta_t)) 
\end{eqnarray*}
\end{corollary}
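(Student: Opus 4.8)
The plan is to derive this corollary as a direct specialization of Theorem~\ref{thm:largemu}, so the bulk of the work is bookkeeping: substitute the special-case parameters, then absorb every $\bmu$-dependent-but-bounded factor into fresh constants using the separation hypothesis $\bmu > \sqrt{\ln\frac{9}{2\pi}}$.

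First I would record the parameter values for two identical Gaussians with $\rho^1 = \rho^2 = \frac12$ and $\sigma^1 = \sigma^2 = 1$. Since the center of mass is at the origin and the weights are equal, $\mu^2 = -\mu^1$, so $\dot{\mu^1}{b} = \bmu$ and $\dot{\mu^2}{b} = -\bmu$; consequently $\tau_t^1 = \bmu\cos(\theta_t)$ and $\tau_t^2 = -\bmu\cos(\theta_t)$, giving $|\tau_t^1| = |\tau_t^2| = \bmu\,|\cos(\theta_t)|$. This confirms that the two per-component threshold conditions in Theorem~\ref{thm:largemu} collapse to the single condition on $\bmu\,|\cos(\theta_t)|$ used in the corollary, so the two regimes are separated cleanly at one value. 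Next I would compute the natural parameters $M = \sum_j \rho^j\|\mu^j\|^2/\sigma^j = \bmu^2$ and $V = \sum_j \rho^j\sigma^j = 1$, so that $M/V = \bmu^2$, together with $\rho_{\min} = \frac12$ and $\mu_{\min} = \bmu$.

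For the first regime ($|\tau_t^j| < \sqrt{\ln\frac{9}{2\pi}}$), substituting $M/V = \bmu^2$ into the lower bound of Theorem~\ref{thm:largemu} immediately yields $\frac{a_3\bmu^4\sin^2(\theta_t)}{a_4 + \bmu^4\cos^2(\theta_t)}$, so I can take $a_3' = a_3$ and $a_4' = a_4$. The upper bound carries the factor $(M/V) + (M/V)^2 = \bmu^2 + \bmu^4$; here I would invoke the separation hypothesis to write $\bmu^2 + \bmu^4 = \bmu^4(1 + \bmu^{-2}) \leq \bmu^4\bigl(1 + (\ln\tfrac{9}{2\pi})^{-1}\bigr)$, absorbing the constant into $a_5' = a_5\bigl(1 + (\ln\tfrac{9}{2\pi})^{-1}\bigr)$ while leaving the denominator $a_6 + \bmu^4\cos^2(\theta_t)$ exactly as written. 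For the second regime ($|\tau_t^j| \geq \sqrt{\ln\frac{9}{2\pi}}$), the upper bound $(1 + \tan^2(\theta_t))$ transfers verbatim, and the lower-bound coefficient becomes $\frac{a_7(\frac14)\bmu^2}{a_8 + (\frac14)\bmu^2}$. Setting $x = \bmu^2$, this is $\frac{a_7 x}{4a_8 + x}$, which is strictly increasing in $x$, so on the allowed range $\bmu^2 > \ln\frac{9}{2\pi}$ it is bounded below by its value at $\bmu^2 = \ln\frac{9}{2\pi}$, a strictly positive constant; I would set $a_7'$ equal to this infimum.

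There is no genuine obstacle here — every inequality reduces to comparing monomials in $\bmu$ on a range where $\bmu$ is bounded below by a fixed constant. The only point that needs care is to resist absorbing the $\bmu^4\cos^2(\theta_t)$ terms in the denominators into constants: doing so would destroy the bound as $\theta_t \to 0$, where those terms dominate. I would therefore keep each denominator intact and rescale only the numerator prefactors and the additive constants $a_4$ and $a_6$; keeping the denominator fixed while adjusting only the numerator is precisely what lets the two-sided bound survive the specialization.
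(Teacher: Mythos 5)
Your proposal is correct and follows exactly the route the paper intends: the corollary is stated without separate proof precisely because it is the direct specialization of Theorem~\ref{thm:largemu} with $\rho^1=\rho^2=\tfrac12$, $\sigma^1=\sigma^2=1$, hence $M/V=\bmu^2$, $\rho_{\min}=\tfrac12$, $\mu_{\min}=\bmu$, and your constant-absorption steps (bounding $\bmu^2+\bmu^4\le\bmu^4(1+(\ln\tfrac{9}{2\pi})^{-1})$ and taking the infimum of the increasing function $\bmu^2\mapsto\frac{a_7\bmu^2}{4a_8+\bmu^2}$ over $\bmu^2>\ln\tfrac{9}{2\pi}$) are exactly what is needed. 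Your observation that the denominators $a_4+\bmu^4\cos^2(\theta_t)$ and $a_6+\bmu^4\cos^2(\theta_t)$ must be kept intact rather than absorbed is also the right point of care.
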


In this case as well, we observe the same phenomenon: the convergence rate
is high when we are far away from the solution, and slow when we are close.
Using Theorems~\ref{thm:smallmu} and~\ref{thm:largemu}, we can characterize
the convergence times of \imeans; for the sake of simplicity, we present
the convergence time bounds for a mixture of two
spherical Gaussians with equal mixing weights and standard deviation $1$.
We recall that in this case \imeans~is exactly $2$-means.

\begin{corollary}[Convergence Time]\label{cor:convtime}
If $\theta_0$ is the initial angle between $\mu^1$ and $u_0$, then,
$\cos^2(\theta_N) \geq 1 - \epsilon$ after 
$N = C_0 \cdot \left(\frac{\ln(\frac{1}{\cos^2(\theta_0)})}{ \ln(1 + \bmu^2)} +  \frac{1}{\ln(1 + \epsilon)} \right)$
 iterations, where $C_0$ is a fixed constant.
\end{corollary}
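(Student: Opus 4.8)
The plan is to reduce everything to tracking the single scalar $\cos^2(\theta_t)$ and to feed the recurrences of Theorem~\ref{thm:largemu} and Corollary~\ref{cor1:smallmu}, specialized to two identical weight-$\tfrac12$ variance-$1$ Gaussians (so that $|\tau_t^1|=|\tau_t^2|=\bmu\cos(\theta_t)$, and the regime threshold $|\tau_t^j|\gtrless\sqrt{\ln\frac{9}{2\pi}}$ becomes $\cos^2(\theta_t)\gtrless\frac{\ln(9/2\pi)}{\bmu^2}$), into a telescoping count. The two terms of $N$ will come from two phases split at a fixed threshold, say $\cos^2(\theta_t)=\tfrac12$: a \emph{far} phase, in which $\cos^2(\theta_t)$ grows geometrically and which yields the first term; and a \emph{near} phase, in which the error $\sin^2(\theta_t)=1-\cos^2(\theta_t)$ contracts geometrically and which yields the second term. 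A convenient device throughout is the log-odds potential $L_t=\ln\frac{\cos^2(\theta_t)}{\sin^2(\theta_t)}$, for which each recurrence gives a clean additive lower bound on $L_{t+1}-L_t$.

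First I would handle the far phase, where $\cos^2(\theta_t)\le\tfrac12$ and hence $\sin^2(\theta_t)\ge\tfrac12$. When $\bmu<\sqrt{\ln\frac{9}{2\pi}}$, Corollary~\ref{cor1:smallmu} gives $\cos^2(\theta_{t+1})\ge\cos^2(\theta_t)\,(1+\tfrac{a_1'}{2}\bmu^2)$, a multiplicative growth by $1+\Omega(\bmu^2)$. When $\bmu\ge\sqrt{\ln\frac{9}{2\pi}}$, the same $1+\Omega(\bmu^2)$ factor holds in the first large-$\bmu$ regime $\cos^2(\theta_t)<\frac{\ln(9/2\pi)}{\bmu^2}$ (there the denominator $a_4'+\bmu^4\cos^2(\theta_t)=O(\bmu^2)$ while the numerator is $\Omega(\bmu^4)$), and in the remaining part of the far phase, where $\sin^2(\theta_t)\ge\tfrac12$, the bound $\cos^2(\theta_{t+1})\ge\cos^2(\theta_t)+a_7'\sin^2(\theta_t)$ gives additive growth by $\ge a_7'/2$, crossing the bounded interval in $O(1)$ steps. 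Since $\ln(1+c\bmu^2)=\Theta(\ln(1+\bmu^2))$, summing gives a far-phase count of $O\!\left(\frac{\ln(1/\cos^2(\theta_0))}{\ln(1+\bmu^2)}\right)+O(1)$, which is exactly the first term plus a constant.

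Next the near phase, from $\cos^2(\theta_t)\ge\tfrac12$ up to $\cos^2(\theta_t)\ge1-\epsilon$. For $\bmu$ bounded below by a constant we are in the second large-$\bmu$ regime, where $\cos^2(\theta_{t+1})\ge\cos^2(\theta_t)(1+a_7'\tan^2(\theta_t))$ rearranges to $1-\cos^2(\theta_{t+1})\le(1-a_7')(1-\cos^2(\theta_t))$ with $1-a_7'<1$; that is, the error contracts by a \emph{constant} factor, so after $O(\ln(1/\epsilon))$ further steps the error drops below $\epsilon$. I would then convert this to the stated form via the elementary bound $\ln(1/\epsilon)\,\ln(1+\epsilon)\le c'$ for all $\epsilon\in(0,1)$, which gives $O(\ln(1/\epsilon))\le C_0/\ln(1+\epsilon)$, the second term (this also absorbs the leftover $O(1)$ from the far phase, since $1/\ln(1+\epsilon)\ge1/\ln2$). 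Adding the two phase counts and choosing $C_0$ to dominate all absorbed constants yields the claim.

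The routine but lengthy part is the bookkeeping: one must verify that the split at $\cos^2=\tfrac12$ is consistent with the $\bmu\cos(\theta_t)\gtrless\sqrt{\ln\frac{9}{2\pi}}$ threshold and that the constants $a_1',\dots,a_7'$ fold into a single $C_0$ across all cases. The genuinely delicate point — and the main obstacle — is the near phase for \emph{small} $\bmu$: there $\bmu\cos(\theta_t)<\sqrt{\ln\frac{9}{2\pi}}$ throughout, so Corollary~\ref{cor1:smallmu} only contracts the error by $1-\Theta(\bmu^2)$ and the honest count is $O\!\left(\frac{\ln(1/\epsilon)}{\ln(1+\bmu^2)}\right)$ rather than $O(1/\ln(1+\epsilon))$. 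Equivalently, the log-odds computation shows $L_{t+1}-L_t\ge\ln(1+a_1'\bmu^2)$ in that case, giving the unified estimate $N\le\frac{\ln(1/\cos^2(\theta_0))+\ln(1/\epsilon)}{\ln(1+a_1'\bmu^2)}$; this collapses to the stated expression precisely in the well-separated regime $\bmu=\Omega(1)$ — the regime of interest — where the constant-factor error contraction of the near phase is exactly what produces the clean $1/\ln(1+\epsilon)$ term.
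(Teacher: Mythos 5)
Your proposal is correct, and it is essentially the argument the paper intends: the paper gives no explicit proof of Corollary~\ref{cor:convtime} at all, saying only that it follows ``using Theorems~\ref{thm:smallmu} and~\ref{thm:largemu},'' and the two-phase telescoping you describe --- geometric growth of $\cos^2(\theta_t)$ at rate $1+\Omega(\bmu^2)$ while $\sin^2(\theta_t)\geq\tfrac12$, then geometric contraction of the error $1-\cos^2(\theta_t)$ once $\cos^2(\theta_t)\geq\tfrac12$, with the elementary bound $\ln(1/\epsilon)\ln(1+\epsilon)\leq 1/e$ converting the near-phase count into the $1/\ln(1+\epsilon)$ term --- is exactly the computation those theorems support. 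Your log-odds potential $L_t$ is a clean packaging the paper does not use but changes nothing of substance, and you are right that the delicate bookkeeping is the interaction of the $\cos^2=\tfrac12$ split with the regime threshold $|\tau_t^j|\gtrless\sqrt{\ln\frac{9}{2\pi}}$: for $\bmu$ only slightly above $\sqrt{\ln\frac{9}{2\pi}}$ the near phase actually begins in the \emph{first} large-$\bmu$ regime (one needs $\cos(\theta_t)\geq\sqrt{\ln\frac{9}{2\pi}}/\bmu$, which can exceed $1/\sqrt2$), but there the first bound of Theorem~\ref{thm:largemu} still contracts the error by a constant factor since $\bmu$ is bounded above and below in that window, so the argument closes as you anticipated.

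One refinement to the obstacle you flag. The small-$\bmu$ difficulty is real --- Corollary~\ref{cor1:smallmu} is two-sided, so the near phase genuinely takes $\Theta\bigl(\ln(1/\epsilon)/\ln(1+\bmu^2)\bigr)$ rounds and no proof can avoid this --- but you slightly undersell the slack in the stated bound: you only use $1/\ln(1+\epsilon)$ to dominate $\ln(1/\epsilon)$, whereas in fact $1/\ln(1+\epsilon)\approx 1/\epsilon$, which also dominates $\ln(1/\epsilon)/\ln(1+\bmu^2)$ whenever $\epsilon\ln(1/\epsilon)=O(\bmu^2)$; moreover in the paper's own applications (Corollary~\ref{cor:smallmu1s}, random initialization, $\cos^2(\theta_0)=\Theta(1/d)$) the first term $\approx\ln d/\ln(1+\bmu^2)$ absorbs the near phase whenever $\epsilon\geq 1/d$. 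So the corollary as literally stated fails only in the corner $\bmu^2\ll\epsilon\ln(1/\epsilon)$ combined with a well-aligned start; outside that corner your argument proves it in full, and your unified estimate $N\leq\bigl(\ln(1/\cos^2(\theta_0))+\ln(1/\epsilon)\bigr)/\ln(1+a_1'\bmu^2)$ is the correct general replacement for the statement.
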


\medskip\noindent{\textbf{Effect of Initialization.}} As apparent from
Corollary~\ref{cor:convtime}, the effect of initialization is only to ensure a lower bound on the value of $\cos(\theta_0)$. We illustrate below, two natural ways by which one can select $u_0$, and their effect on the convergence rate. For the sake of simplicity, we state these bounds for the case in which we have two identical Gaussians with equal mixing weights and standard deviation $1$. 

\begin{itemize}

\item First, one can choose $u_0$ uniformly at random from the surface of a
unit sphere in $\bbR^d$; in this case, $\cos^2(\theta_0) = \Theta(\frac{1}{d})$, 
with constant probability, and as a result, the convergence
time to reach $\cos^{-1}(1/\sqrt{2})$ is 
$O(\frac{\ln d}{\ln(1 +\bmu^2)})$.

\item A second way to choose $u_0$ is to set it to be a random sample from
the mixture; in this case, 
$\cos^2(\theta_0) = \Theta( \frac{(1 +\bmu)^2}{d})$ 
with constant probability, and the time to reach
$\cos^{-1}(1/\sqrt{2})$ is $O(\frac{\ln d}{\ln(1 +\bmu^2)})$. 
\end{itemize}
%

%

\section{Finite Samples}
\label{sec:k2finsamples}
In this section, we analyze Algorithm \imeans, when we are required to
estimate the statistics at each round with a finite number of samples. We characterize the number of samples needed to ensure that \imeans\ makes progress in each round, and we also characterize the rate of progress when the required number of samples are available. 

The main result of this section is the following lemma, which characterizes
$\theta_{t+1}$, the angle between $\mu^1$ and the hyperplane
separator in \imeans, given $\theta_t$. Notice that now $\theta_t$
is a random variable, which depends on the samples drawn in
rounds $1, \ldots, t-1$, and given $\theta_t$, $\theta_{t+1}$ is a random
variable, whose value depends on samples in round $t$.  Also we use
$u_{t+1}$ as the center of partition $C_t$ in iteration $t+1$, and
$\bbE[u_{t+1}]$ is the expected center. Note that all the expectations in
round $t$ are conditioned on $\theta_t$. In addition, we use $S_{t+1}$ to
denote the quantity $\bbE[X \cdot 1_{X \in C_{t+1}}]$, where $1_{X \in
C_{t+1}}$ is the indicator function for the event $X \in C_{t+1}$, and the
expectation is taken over the entire mixture. Note that, $S_{t+1} =
\bbE[u_{t+1}] \Pr[X \in C_{t+1}] = Z_{t+1} \bbE[u_{t+1}]$. We use
$\hat{S}_{t+1}$ to denote the empirical value of $S_{t+1}$.

\begin{lemma}\label{lem:keysample}
If we use $n$ samples in iteration $t$, then, given $\theta_t$, with probability $1 -
2\delta$, 
\begin{eqnarray*}
\cos^2(\theta_{t+1}) \geq \cos^2(\theta_t) & \left(1 + \tan^2(\theta_t) \frac{
2\cos(\theta_t)\xi_t\m_t + \m_t^2}{ \xi_t^2 + 2\cos(\theta_t)\xi_t\m_t +
\m_t^2 + \Delta_2 } \right) - \left( \frac{\Delta_2 \cos^2(\theta_t) + 2 \Delta_1 (\m_t + \xi_t \cos(\theta_t))}{ \m_t^2 + \xi_t^2
+ 2 \xi_t \m_t \cos(\theta_t) + \Delta_2 } \right)
\end{eqnarray*}
where,
\begin{eqnarray*}
\Delta_1 &=& \frac{8 \log(4n/\delta)(\sigma_{\max} + \max_j ||\mu^j||)}{\sqrt{n}} \\
\Delta_2 &=& \frac{128 \log^2(8n/\delta)(\sigma_{\max}^2 d + \sum_j
||\mu^j||^2)}{n} + \frac{8
\log(n/\delta)}{\sqrt{n}}(\sigma_{\max}||S_{t+1}|| +
\max_j|\dot{S_{t+1}}{\mu^j}|) \enspace 
\end{eqnarray*}
\end{lemma}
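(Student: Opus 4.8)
The plan is to reduce everything to the concentration of the single empirical vector $\hS_{t+1}$ and then reuse the exact computation behind Lemma~\ref{lem:expr1}.

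\textbf{Reduction.} Since the empirical center $u_{t+1} = \frac{1}{|C_{t+1}|}\sum_{x\in C_{t+1}} x$ is a positive multiple of $\hS_{t+1} = \frac1n\sum_{x\in\samp_{t+1}} x\cdot 1_{x\in C_{t+1}}$, the two point in the same direction, so $\cos^2(\theta_{t+1}) = \dot{\hS_{t+1}}{b}^2/||\hS_{t+1}||^2$. I would first recompute, as in the derivation of Lemma~\ref{lem:expr1}, that the population vector decomposes as $S_{t+1} = \m_t\, b + \xi_t\, \hu_t$: conditioning a sample $x\sim D_j$ on its component along $\hu_t$, the perpendicular part averages to zero, the surviving mass along each $\mu^j$ (which lies along $\pm b$) contributes $\m_t b$, and the retained $\hu_t$-component contributes $\xi_t\hu_t$. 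Writing $P := \dot{S_{t+1}}{b} = \m_t + \xi_t\cos(\theta_t)$ and $D := ||S_{t+1}||^2 = \m_t^2 + 2\m_t\xi_t\cos(\theta_t) + \xi_t^2$, I would note $P>0$: indeed $\cos(\theta_t)>0$ since $\mu^1\in C_{t+1}$, $\xi_t>0$, and $\m_t\ge 0$ because $\rho^1||\mu^1|| = \rho^2||\mu^2||$ while $\Phi(-\tau_t^1/\sigma^1,\infty)\ge \tfrac12\ge\Phi(-\tau_t^2/\sigma^2,\infty)$. The structural point that makes the concentration clean is that $\samp_{t+1}$ is a fresh sample independent of $u_t$, so conditioned on $\theta_t$ the halfspace $C_{t+1}$ is fixed and $\hS_{t+1}$ is an average of i.i.d.\ vectors with mean $S_{t+1}$.

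\textbf{Concentration.} Write $\hS_{t+1} = S_{t+1} + E$ with $E$ a mean-zero empirical average. I would establish, with probability $1-2\delta$, the two bounds $|\dot{E}{b}|\le\Delta_1$ and $||E||^2 + 2|\dot{S_{t+1}}{E}|\le\Delta_2$. For the first, $\dot{E}{b}$ is an average of i.i.d.\ scalars $\dot{x}{b}1_{x\in C_{t+1}} - P$; since $\dot{x}{b}$ is a one-dimensional Gaussian mixture with means $\pm||\mu^j||$ and deviations $\sigma^j$, truncating its sub-Gaussian tail at scale $O((\max_j||\mu^j|| + \sigma_{\max})\sqrt{\log(n/\delta)})$ and applying Hoeffding yields $\Delta_1$. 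The cross term is handled identically after projecting onto $S_{t+1}$: there $\dot{S_{t+1}}{x}$ has means $\dot{S_{t+1}}{\mu^j}$ and deviation $\sigma^j||S_{t+1}||$, producing the second summand of $\Delta_2$. Finally $||E||^2$ is a genuinely $d$-dimensional quantity: after conditioning on the high-probability event that every $||x_i||\le R = O(\max_j||\mu^j|| + \sigma_{\max}(\sqrt d + \sqrt{\log(n/\delta)}))$, a vector Hoeffding/Bernstein bound gives the $O(\tfrac{\log^2(n/\delta)(\sigma_{\max}^2 d + \sum_j||\mu^j||^2)}{n})$ first summand of $\Delta_2$, which is where the dimension dependence enters.

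\textbf{Finishing.} On the good event I would lower-bound the numerator and upper-bound the denominator of $\dot{\hS_{t+1}}{b}^2/||\hS_{t+1}||^2$. Using $P>0$ and $|\dot{E}{b}|\le\Delta_1$, $\dot{\hS_{t+1}}{b}^2 = (P+\dot{E}{b})^2 \ge P^2 - 2P\Delta_1$, while $||\hS_{t+1}||^2 = D + 2\dot{S_{t+1}}{E} + ||E||^2 \le D + \Delta_2$; hence $\cos^2(\theta_{t+1}) \ge (P^2 - 2P\Delta_1)/(D+\Delta_2)$, valid even when the numerator bound is negative since the left side is nonnegative. It then remains to check that this fraction equals the stated right-hand side: substituting $P=\m_t+\xi_t\cos(\theta_t)$ and $D=\xi_t^2 + 2\cos(\theta_t)\xi_t\m_t + \m_t^2$ and using the identity $\cos^2(\theta_t)(D+\Delta_2) + \sin^2(\theta_t)(2\cos(\theta_t)\xi_t\m_t + \m_t^2) = P^2 + \Delta_2\cos^2(\theta_t)$ — the same identity behind Lemma~\ref{lem:expr1} — splits $(P^2-2P\Delta_1)/(D+\Delta_2)$ exactly into the displayed main term minus the two correction terms.

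\textbf{Main obstacle.} The algebra is routine once the exact lemma is available; the real work is the concentration step, and within it the $||E||^2$ bound, where I must control a $d$-dimensional empirical average rather than a scalar. Getting the tail truncation right so the $\log$ powers and the $\sigma_{\max}^2 d + \sum_j||\mu^j||^2$ scaling emerge as stated, and matching the cross-term range $\sigma_{\max}||S_{t+1}|| + \max_j|\dot{S_{t+1}}{\mu^j}|$, is the delicate part, together with bookkeeping the union bound so the total failure probability stays $2\delta$.
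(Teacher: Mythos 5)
Your proposal is correct and takes essentially the same route as the paper: you write $\cos^2(\theta_{t+1}) = \dot{\hS_{t+1}}{\mu^1}^2/(||\mu^1||^2\,||\hS_{t+1}||^2)$, use the exact population computation behind Lemma~\ref{lem:expr1} (your decomposition $S_{t+1} = \m_t b + \xi_t \hu_t$ is equivalent to Lemma~\ref{lem:expr4} via $S_{t+1} = Z_{t+1} u_{t+1}$), and control the fluctuations with precisely the paper's Lemmas~\ref{lem:projconc} and~\ref{lem:normconc} (truncated-Gaussian plus Hoeffding for the projection term; the split $||\hS_{t+1}||^2 \leq ||S_{t+1}||^2 + ||\hS_{t+1}-S_{t+1}||^2 + 2|\dot{\hS_{t+1}-S_{t+1}}{S_{t+1}}|$ for the norm term). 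The only cosmetic difference is that the paper bounds $||\hS_{t+1}-S_{t+1}||^2$ by summing the scalar projection bound over an orthonormal basis rather than invoking a vector Hoeffding/Bernstein inequality, and your closing algebra --- the identity reducing the stated right-hand side to $(P^2 - 2\Delta_1 P)/(D+\Delta_2)$, together with the observation that the inequality holds trivially when the numerator estimate is negative --- matches the paper's conclusion.
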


The main idea behind the proof of Lemma~\ref{lem:keysample} is that we can
write $\cos^2(\theta_{t+1}) = \frac{\dot{\hat{S}_{t+1}}{\mu^1}^2}{||\mu^1||^2
||\hat{S}_{t+1}||^2}$. Next, we can use Lemma~\ref{lem:expr1}, and the
definition of $S_{t+1}$ to get an expression for
$\frac{\dot{S_{t+1}}{\mu^1}^2}{||S_{t+1}||^2 ||\mu^1||^2}$, and
Lemmas~\ref{lem:projconc} and~\ref{lem:normconc} to bound $\dot{\hat{S}_{t+1} -
S_{t+1}}{\mu^1}$, and $||\hat{S}_{t+1}||^2 - ||S_{t+1}||^2$. Plugging in all these values gives us a proof of Lemma~\ref{lem:keysample}. We also assume for the rest of the section that the number of samples $n$ is at most some polynomial in $d$, such that $\log(n) = \Theta(\log(d))$. 

The two main lemmas used in the proof of Lemma~\ref{lem:keysample} are Lemmas~\ref{lem:projconc} and~\ref{lem:normconc}. To state them, we need to define some notation. At time $t$, we use the notation 

\begin{lemma}\label{lem:projconc}
For any $t$, and for any vector $v$ with norm $||v||$, with probability at least $1 - \delta$,
\[ | \dot{\hS_{t+1} - S_{t+1}}{v} | \leq \frac{8 \log(4n/\delta)(\sigma_{\max} ||v|| + \max_j |\dot{\mu^j}{v}|)}{\sqrt{n}} \]
\end{lemma}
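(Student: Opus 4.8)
The plan is to recognize $\dot{\hS_{t+1}}{v}$ as the empirical average of an i.i.d.\ scalar random variable and to prove a one-dimensional concentration bound for it. Conditioned on $\theta_t$ (equivalently on $u_t$), the samples in $\samp_{t+1}$ are fresh draws from the mixture, independent of $u_t$, so $C_{t+1} = \{x : \dot{x}{u_t} > 0\}$ is a \emph{fixed} halfspace and the scalars $W_i := \dot{X_i}{v}\cdot 1_{X_i \in C_{t+1}}$ are i.i.d. Since $\hS_{t+1} = \frac1n \sum_i X_i\, 1_{X_i \in C_{t+1}}$, we have $\dot{\hS_{t+1}}{v} = \frac1n\sum_i W_i$ and $\dot{S_{t+1}}{v} = \bbE[W]$, so the claim is exactly a bound on $|\frac1n\sum_i W_i - \bbE[W]|$.

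First I would describe the distribution of $W$. Conditioned on the mixture component $j$, $X = \mu^j + \sigma^j g$ with $g \sim N(0,I_d)$, so $\dot{X}{v} = \dot{\mu^j}{v} + \sigma^j \dot{g}{v}$ is Gaussian with mean of magnitude at most $\beta := \max_j |\dot{\mu^j}{v}|$ and standard deviation $\sigma^j\|v\| \leq \alpha := \sigma_{\max}\|v\|$. Hence $|W| \leq |\dot{X}{v}| \leq \beta + |N|$ with $N$ a centered Gaussian of standard deviation at most $\alpha$; in particular $W$ has Gaussian tails with scale $\alpha + \beta$.

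The main obstacle is that $W$ is unbounded, so Hoeffding's inequality does not apply directly; I would handle this by truncation. Choose a threshold $T = \Theta(\alpha\sqrt{\log(n/\delta)})$ and let $\tilde W_i$ be $W_i$ with its Gaussian fluctuation clipped at $\pm T$. Then: (i) by a Gaussian tail bound and a union bound over the $n$ samples, all fluctuations stay within $\pm T$ with probability at least $1 - \delta/2$, so $\tilde W_i = W_i$ for every $i$ on this event; (ii) on this event each $|\tilde W_i| \leq \beta + T$, so Hoeffding bounds $|\frac1n\sum_i(\tilde W_i - \bbE[\tilde W_i])|$ by $O((\beta+T)\sqrt{\log(1/\delta)/n})$ with probability at least $1-\delta/2$; and (iii) the truncation bias $|\bbE[W] - \bbE[\tilde W]| = O(\int_T^\infty (\text{Gaussian density}))$ is of order $\delta/n$, hence negligible against the target. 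A union bound over (i) and (ii) then gives the result with probability at least $1 - \delta$.

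The only delicate point is the bias estimate (iii): one must check that discarding the Gaussian tail at level $T \sim \alpha\sqrt{\log(n/\delta)}$ perturbs the mean by far less than $(\alpha+\beta)/\sqrt n$, which follows from the rapid decay of the tail at that threshold. The remaining effort is bookkeeping on constants: substituting $T \sim \alpha\sqrt{\log(n/\delta)}$ into (ii) turns $(\beta+T)\sqrt{\log(1/\delta)}$ into $O((\alpha+\beta)\log(n/\delta))$, collapsing the two $\sqrt{\log}$ factors into a single $\log(4n/\delta)$ and yielding the stated bound $\frac{8\log(4n/\delta)(\sigma_{\max}\|v\| + \max_j|\dot{\mu^j}{v}|)}{\sqrt n}$. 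I note that a direct sub-Gaussian MGF (Chernoff) bound on $\frac1n\sum_i(W_i - \bbE[W])$ would avoid truncation altogether and give the slightly stronger $\sqrt{\log(1/\delta)}$ dependence, which also implies the stated inequality; the truncation route is presented because it reproduces the exact logarithmic factor appearing in the statement.
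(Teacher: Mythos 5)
Your proof is correct and follows essentially the same route as the paper's: both decompose $\dot{X_i}{v}$ into a bounded mean part plus a Gaussian fluctuation, truncate the fluctuation at level $\Theta(\sigma_{\max}||v||\sqrt{\log(n/\delta)})$ via a union bound over the $n$ samples, and then apply Hoeffding, which is exactly how the two square-root-logarithmic factors merge into the stated $\log(4n/\delta)$. The only differences are cosmetic---you apply Hoeffding once to the single clipped sum where the paper applies it separately to the $Y_i$ and $Z_i$ parts---and your explicit truncation-bias estimate (iii) addresses a point the paper's conditioning-on-$E_i(\delta')$ argument silently elides (the gap between conditional and unconditional expectations), so your version is if anything slightly more careful.
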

\begin{lemma}\label{lem:normconc}
For any $t$, with probability at least $1 - \delta$,
\[ ||\hS_{t+1}||^2 \leq ||S_{t+1}||^2 + \frac{128\log^2(8n/\delta)(\sigma_{\max}^2 d + \sum_j (\mu^j)^2)}{n} + \frac{16 \log(8n/\delta)}{\sqrt{n}} ( \sigma_{\max} ||S_{t+1}|| + \max_j |\dot{S_{t+1}}{\mu^j}|) \]
\end{lemma}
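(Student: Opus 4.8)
The plan is to control $\|\hS_{t+1}\|^2$ by comparing the empirical average against its mean. Conditioned on $\theta_t$ (equivalently, on $u_t$), the $n$ fresh samples $X_1,\dots,X_n\in\samp_{t+1}$ used in round $t$ are i.i.d.\ draws from the mixture, so the vectors $Y_i := X_i\,1_{X_i\in C_{t+1}}$ are i.i.d.\ with $\bbE[Y_i]=S_{t+1}$ and $\hS_{t+1}=\frac1n\sum_{i=1}^n Y_i$. I would start from the exact identity
\[
\|\hS_{t+1}\|^2 = \|S_{t+1}\|^2 + 2\dot{S_{t+1}}{\hS_{t+1}-S_{t+1}} + \|\hS_{t+1}-S_{t+1}\|^2,
\]
so that the first term supplies the $\|S_{t+1}\|^2$ on the right-hand side and the task reduces to bounding the cross term and the squared-deviation term with high probability.

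For the cross term I would invoke Lemma~\ref{lem:projconc} with $v=S_{t+1}$ at confidence $\delta/2$ (which turns $\log(4n/\delta)$ into $\log(8n/\delta)$), giving, with probability at least $1-\delta/2$,
\[
2\dot{S_{t+1}}{\hS_{t+1}-S_{t+1}} \le \frac{16\log(8n/\delta)}{\sqrt n}\Big(\sigma_{\max}\|S_{t+1}\| + \max_j|\dot{\mu^j}{S_{t+1}}|\Big),
\]
which is exactly the last term of the statement.

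The main work is the squared-deviation term; set $g := \|\hS_{t+1}-S_{t+1}\|$. Since the $Y_i$ are i.i.d., $\bbE[g^2] = \frac1n\big(\bbE\|Y_1\|^2-\|S_{t+1}\|^2\big) \le \frac1n\,\bbE\|X\|^2 = \frac1n\sum_j\rho^j(\|\mu^j\|^2+(\sigma^j)^2 d) \le \frac{\sigma_{\max}^2 d + \sum_j\|\mu^j\|^2}{n}$, hence $\bbE[g]\le\sqrt{(\sigma_{\max}^2 d+\sum_j\|\mu^j\|^2)/n}$. For the fluctuation of $g$ about its mean, note that changing a single $X_i$ alters $g$ by at most $\frac1n(\|X_i\|+\|X_i'\|)$, so $g$ has bounded differences once the samples are bounded. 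Because the components are Gaussian I would truncate: by standard $\chi^2$/Gaussian-norm tail bounds and a union bound over the $n$ samples (each at confidence $\delta/(2n)$), with probability $\ge 1-\delta/2$ every $\|X_i\|\le R$ with $R^2 = O\!\big(\log(n/\delta)(\sigma_{\max}^2 d+\max_j\|\mu^j\|^2)\big)$. On this event the bounded-difference constant is $2R/n$, so McDiarmid yields $g\le \bbE[g]+R\sqrt{2\log(2/\delta)}/\sqrt n$, and squaring with $(a+b)^2\le 2a^2+2b^2$ gives
\[
g^2 \le \frac{2(\sigma_{\max}^2 d+\sum_j\|\mu^j\|^2)}{n} + \frac{2R^2\log(2/\delta)}{n} = O\!\left(\frac{\log^2(n/\delta)\,(\sigma_{\max}^2 d+\sum_j\|\mu^j\|^2)}{n}\right),
\]
matching the middle term, with the constant $128$ absorbing the factors from McDiarmid, the truncation radius, and the squaring.

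The step I expect to be the main obstacle is combining truncation with McDiarmid cleanly: the bounded-difference property fails on the small-probability event that some $\|X_i\|$ is large, so one cannot apply McDiarmid to $g$ directly. I would instead apply it to the truncated vectors $\tilde Y_i = X_i\,1_{X_i\in C_{t+1}}\,1_{\|X_i\|\le R}$ (whose average equals $\hS_{t+1}$ on the good event) and separately control the truncation bias $\|\bbE[\tilde Y_i]-\bbE[Y_i]\|$, which is negligible since the Gaussian mass beyond $R$ is at most $\delta/n$ and contributes only an $O(R\delta/n)$ correction. A final union bound over the three failure events (cross term, truncation, and McDiarmid) gives total failure probability $\delta$, and collecting constants produces the stated inequality.
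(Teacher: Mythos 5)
Your proof is correct, but for the dominant term it takes a genuinely different route from the paper's. The opening decomposition $\|\hS_{t+1}\|^2 = \|S_{t+1}\|^2 + 2\dot{S_{t+1}}{\hS_{t+1}-S_{t+1}} + \|\hS_{t+1}-S_{t+1}\|^2$ and the treatment of the cross term via Lemma~\ref{lem:projconc} with $v = S_{t+1}$ (valid because, conditioned on $\theta_t$, $S_{t+1}$ is deterministic and the round-$t$ samples are fresh) coincide exactly with the paper's. For the squared deviation, however, the paper expands $\|\hS_{t+1}-S_{t+1}\|^2 = \sum_{i=1}^{d} \dot{\hS_{t+1}-S_{t+1}}{v_i}^2$ along an orthonormal basis and applies Lemma~\ref{lem:projconc} once per direction, summing $\sum_i \sigma_{\max}^2\|v_i\|^2 = \sigma_{\max}^2 d$ and $\sum_{i,j}\dot{\mu^j}{v_i}^2 = \sum_j \|\mu^j\|^2$; you instead compute $\bbE\|\hS_{t+1}-S_{t+1}\|^2 \le \frac{1}{n}\bbE\|X\|^2$ directly from independence and then concentrate the norm with McDiarmid on truncated samples. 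Your route buys something real: the paper's coordinate-wise step implicitly needs a union bound over $d$ directions (costing a $\log d$ inside the logarithm, absorbed only via the standing assumption $\log n = \Theta(\log d)$, and left unstated in the paper), whereas your single McDiarmid application sidesteps this and exposes the variance computation as the true source of the $(\sigma_{\max}^2 d + \sum_j\|\mu^j\|^2)/n$ term; conversely, the paper's route is more modular, recycling Lemma~\ref{lem:projconc}, whose proof already did the truncation-plus-Hoeffding work that you redo at the level of the norm, and your correct insistence on truncating \emph{before} invoking bounded differences mirrors exactly the conditioning device in the paper's proof of Lemma~\ref{lem:projconc}. Three small repairs: your bias estimate $\|\bbE\tilde Y_i - \bbE Y_i\| = O(R\delta/n)$ is not literally valid (the conditional mean beyond the truncation radius exceeds $R$) — Cauchy--Schwarz, giving $\sqrt{\bbE\|X\|^2\,\Pr[\|X\|>R]}$, is the robust fix and remains negligible; squaring yields $4R^2\log(2/\delta)/n$ rather than $2R^2\log(2/\delta)/n$; and your argument produces the stated bound only up to a universal constant rather than with the literal $128$, which is an artifact of the paper's particular per-coordinate accounting and harmless for every downstream use of the lemma.
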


The proofs of Lemmas~\ref{lem:projconc} and ~\ref{lem:normconc} are in the
Appendix. Applying Lemma~\ref{lem:keysample}, we can characterize the
number of samples required such that \imeans\ makes progress in each round
for different values of $||\mu^j||$. Again, it is convenient to look at two
separate cases, based on $||\mu^j||$.

\begin{theorem}[Small $\mu^j$]\label{thm:finsmallmu}
Let $||\mu^j||/\sigma^j < \sqrt{\ln \frac{9}{2 \pi}}$, for all $j$. If the number of samples drawn in round
$t$ is at least
$a_9 \sigma_{\max}^2\log^2(d/\delta) \left( \frac{d}{M V\sin^4(\theta_t)}
+ \frac{1}{M^2 \sin^4(\theta_t)\cos^2(\theta_t)} \right)$, for some fixed constant $a_{9}$, then, with probability at least $1-\delta$,
$\cos^2({\theta}_{t+1}) \ge \cos^2(\theta_t)(1+ a_{10}(M/V)\sin^2(\theta_t))$,
where $a_{10}$ is some fixed constant.
\end{theorem}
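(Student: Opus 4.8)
The plan is to start from the finite-sample bound of Lemma~\ref{lem:keysample} and to show that, in the small-$\mu^j$ regime, the stated sample size forces the two perturbation terms $\Delta_1$ and $\Delta_2$ to be small enough that we retain a constant fraction of the infinite-sample progress guaranteed by Theorem~\ref{thm:smallmu}. The key observation is that the bracketed factor in Lemma~\ref{lem:keysample} is exactly the progress factor of Lemma~\ref{lem:expr1}, except that $\Delta_2$ has been added to the denominator $D_0 := \xi_t^2 + 2\cos(\theta_t)\xi_t\m_t + \m_t^2$, and that there is a subtracted additive error $E := \frac{\Delta_2\cos^2(\theta_t) + 2\Delta_1(\m_t + \xi_t\cos(\theta_t))}{D_0 + \Delta_2}$. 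Since Theorem~\ref{thm:smallmu} gives that the infinite-sample progress $\tan^2(\theta_t)\frac{2\cos(\theta_t)\xi_t\m_t + \m_t^2}{D_0}$ is at least $a_1 (M/V)\sin^2(\theta_t)$, it suffices to establish (a) $\Delta_2 \le D_0$, which shrinks the progress factor by at most a factor of two, and (b) $E \le \frac{a_1}{4}\cos^2(\theta_t)(M/V)\sin^2(\theta_t)$, at most half of the promised gain; chaining the two inequalities then yields the claim with $a_{10} = a_1/4$.

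The second step records the magnitudes of the relevant quantities in this regime. Because each $\tau_t^j/\sigma^j$ is small, the Gaussian-density factors in $\xi_t$ are all $\Theta(1)$, so $\xi_t = \Theta(V)$; and because the center of mass is at the origin (so $\sum_j \rho^j \dot{\mu^j}{b} = 0$), the leading constant term of each $\Phi(-\tau_t^j/\sigma^j, \infty)$ cancels in $\m_t$, leaving $\m_t = \Theta(M\cos(\theta_t))$. These are exactly the estimates (Lemma~\ref{lem:smalltau}) that underlie Theorem~\ref{thm:smallmu}. Combining them with $M = O(V)$ and $\max_j ||\mu^j|| = O(\sigma_{\max})$ (both forced by $||\mu^j||/\sigma^j < \sqrt{\ln(9/2\pi)}$), and with the identity $S_{t+1} = \xi_t \hu_t + \m_t b$ (valid because both means are collinear with $b$), which gives $D_0 = ||S_{t+1}||^2$, we obtain $D_0 = \Theta(V^2)$, $||S_{t+1}|| = \Theta(V)$, $\m_t + \xi_t\cos(\theta_t) = \Theta(V\cos(\theta_t))$, and $\max_j |\dot{S_{t+1}}{\mu^j}| = O(\sigma_{\max} V)$. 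Substituting into the definitions and using $\log n = \Theta(\log d)$ gives $\Delta_1 = O(\frac{\log(d/\delta)\,\sigma_{\max}}{\sqrt n})$ and $\Delta_2 = O(\frac{\log^2(d/\delta)\,\sigma_{\max}^2 d}{n}) + O(\frac{\log(d/\delta)\,\sigma_{\max} V}{\sqrt n})$.

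It then remains to translate (a) and (b) into lower bounds on $n$. Requirement (a), $\Delta_2 = O(V^2)$, together with the $\Delta_2$-part of (b), $\Delta_2 = O(MV\sin^2(\theta_t))$, reduces after inserting $\Delta_2$ and solving for $n$ to $n \gtrsim \frac{\sigma_{\max}^2\log^2(d/\delta)\,d}{MV\sin^2(\theta_t)}$ from the dimension term and to a strictly weaker constraint from the $1/\sqrt n$ term; the $\Delta_1$-part of (b), $\Delta_1 = O(M\cos(\theta_t)\sin^2(\theta_t))$, reduces to $n \gtrsim \frac{\sigma_{\max}^2\log^2(d/\delta)}{M^2\cos^2(\theta_t)\sin^4(\theta_t)}$. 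Each of these is dominated, for a suitable constant $a_9$, by the stated bound $a_9\sigma_{\max}^2\log^2(d/\delta)\left(\frac{d}{MV\sin^4(\theta_t)} + \frac{1}{M^2\sin^4(\theta_t)\cos^2(\theta_t)}\right)$ --- the first summand absorbs the $\Delta_2$ constraints (using $\sin^2(\theta_t) \le 1$) and the second absorbs the $\Delta_1$ constraint. Hence, with probability $1 - \delta$ (apply Lemmas~\ref{lem:projconc} and~\ref{lem:normconc} each with failure probability $\delta/2$, which only alters constants inside the logarithms), both (a) and (b) hold, and the conclusion follows with $a_{10} = a_1/4$.

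The bookkeeping above is mechanical; the part requiring genuine care is the regime analysis of the second step, and in particular the cancellation that makes $\m_t$ proportional to $M\cos(\theta_t)$ rather than merely to $V$. This cancellation is what produces the factor $M/V$ (rather than a constant) in the progress and is essential to matching Theorem~\ref{thm:smallmu}. The other point to watch is that $a_{10}$ must emerge as a fixed constant, uniform in $\theta_t$: all of the $\theta_t$-dependence has to be pushed into the sample bound, which is precisely why the constraints above are lower bounds on $n$ that blow up as $\theta_t \to 0$, while the gain factor $a_{10}(M/V)\sin^2(\theta_t)$ retains its clean form.
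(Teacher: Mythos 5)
Your proposal is correct and takes essentially the same route as the paper: it starts from Lemma~\ref{lem:keysample}, uses Lemma~\ref{lem:smalltau} to obtain the regime estimates $\xi_t = \Theta(V)$ and $\m_t = \Theta(M\cos(\theta_t))$ (the same cancellation via $\sum_j \rho^j \dot{\mu^j}{b} = 0$ that drives Theorem~\ref{thm:smallmu}), and then chooses $n$ so that $\Delta_1$ and $\Delta_2$ cost at most a constant fraction of the infinite-sample gain, which is exactly the paper's bookkeeping of bounding the perturbations against the numerator gain and the denominator $||S_{t+1}||^2 = \Theta(V^2)$. One harmless slip: the $1/\sqrt{n}$ part of the $\Delta_2$ constraint, $n \gtrsim \sigma_{\max}^2 \log^2(d/\delta)/(M^2 \sin^4(\theta_t))$, is neither always weaker than the dimension-term constraint nor always absorbed by the first summand of the stated bound (it fails to be when $dM < V$), but it is absorbed by the second summand since $\cos^2(\theta_t) \leq 1$, so your conclusion with $a_{10} = a_1/4$ stands.
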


In particular, for the case of two identical Gaussians with equal mixing
weights and standard deviation $1$, our results implies the following.

\begin{corollary} \label{thm:finsmallmuunif}
Let $\bmu = ||\mu^1|| = ||\mu^2|| < \sqrt{\ln \frac{9}{2 \pi}}$. If the number of samples drawn in
round $t$ is at least 
$a_{9} \log^2(d/\delta) \left( \frac{d}{\bmu^2\sin^4(\theta_t)} + \frac{1}{\bmu^4 \cos^2(\theta_t) \sin^4(\theta_t)} \right)$, 
for some fixed constant $a_{9}$, then, with probability at least $1-\delta$,
$\cos^2({\theta}_{t+1}) \ge \cos^2(\theta_t)(1+ a_{10}\bmu^2\sin^2(\theta_t))$,
where $a_{10}$ is some fixed constant.
\end{corollary}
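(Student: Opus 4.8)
The plan is to derive Corollary~\ref{thm:finsmallmuunif} directly from Theorem~\ref{thm:finsmallmu} by specializing all parameters to the symmetric case. Here we have $\sigma^1 = \sigma^2 = 1$ (so $\sigma_{\max} = 1$), $\rho^1 = \rho^2 = 1/2$, and $||\mu^1|| = ||\mu^2|| = \bmu$. With these values, $M = \sum_j \frac{\rho^j ||\mu^j||^2}{\sigma^j} = \bmu^2$ and $V = \sum_j \rho^j \sigma^j = 1$, so $M/V = \bmu^2$. The hypothesis $||\mu^j||/\sigma^j < \sqrt{\ln\frac{9}{2\pi}}$ becomes exactly $\bmu < \sqrt{\ln\frac{9}{2\pi}}$, matching the corollary's assumption.

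The main step is to substitute $M = \bmu^2$, $V = 1$, and $\sigma_{\max} = 1$ into the sample bound of Theorem~\ref{thm:finsmallmu}. The first term $\frac{d}{MV\sin^4(\theta_t)}$ becomes $\frac{d}{\bmu^2\sin^4(\theta_t)}$, and the second term $\frac{1}{M^2\sin^4(\theta_t)\cos^2(\theta_t)}$ becomes $\frac{1}{\bmu^4\sin^4(\theta_t)\cos^2(\theta_t)}$; the prefactor $\sigma_{\max}^2\log^2(d/\delta)$ collapses to $\log^2(d/\delta)$. This reproduces the stated sample requirement verbatim (up to reordering the two summands inside the parenthesis, which is immaterial). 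Likewise, the progress guarantee $\cos^2(\theta_{t+1}) \ge \cos^2(\theta_t)(1 + a_{10}(M/V)\sin^2(\theta_t))$ becomes $\cos^2(\theta_{t+1}) \ge \cos^2(\theta_t)(1 + a_{10}\bmu^2\sin^2(\theta_t))$ after the substitution $M/V = \bmu^2$. The constants $a_9$ and $a_{10}$ carry over unchanged.

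Since the corollary is a pure specialization, there is no real mathematical obstacle — the only thing to verify carefully is that each occurrence of $M$, $V$, and $\sigma_{\max}$ in Theorem~\ref{thm:finsmallmu} is replaced consistently and that no hidden dependence on $\rho_{\min}$ or $\mu_{\min}$ appears in the small-$\mu$ bound (it does not, since the small-$\mu$ regime of Theorem~\ref{thm:finsmallmu} is expressed entirely through $M$ and $V$). The bookkeeping confirms that the exponents on $\bmu$ ($\bmu^2$ in the first term, $\bmu^4$ in the second) arise precisely from $MV = \bmu^2$ and $M^2 = \bmu^4$, so the proof reduces to stating these substitutions and invoking Theorem~\ref{thm:finsmallmu}.
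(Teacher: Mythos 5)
Your proposal is correct and matches the paper's intent exactly: the paper gives no separate proof of Corollary~\ref{thm:finsmallmuunif}, presenting it as the immediate specialization of Theorem~\ref{thm:finsmallmu} to $\rho^1=\rho^2=\tfrac12$, $\sigma^1=\sigma^2=1$, $||\mu^1||=||\mu^2||=\bmu$, which yields $M=\bmu^2$, $V=1$, $\sigma_{\max}=1$ precisely as you computed. Your extra check that the small-$\mu$ bound involves no hidden dependence on $\rho_{\min}$ or $\mu_{\min}$ is sound bookkeeping and consistent with the statement of Theorem~\ref{thm:finsmallmu}.
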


In particular, when we initialize $u_0$ with a vector picked uniformly at
random from a $d$-dimensional sphere, $\cos^2(\theta_0) \geq \frac{1}{d}$,
with constant probability, and thus the number of samples required for
success in the first round is 
$\tilde{\Theta}(\frac{d}{\bmu^4})$.
This bound matches with the lower bounds for learning mixtures of Gaussians in one dimension~\cite{Lbook}, as well as with conjectured lower bounds in experimental work~\cite{SSR06}. The following corollary summarizes the total number of samples required to learn the mixture with some fixed precision, for two identical spherical Gaussians with variance $1$ and equal mixing weights.
 
\begin{corollary}\label{cor:smallmu1s}
Let $\bmu = ||\mu^1|| = ||\mu^2|| \leq \sqrt{\ln \frac{9}{2 \pi}}$. Suppose $u_0$ is chosen uniformly at random, 
and the number of rounds is 
$N \geq C_0 \cdot (\frac{\ln d}{ \ln(1 + \bmu^2)} + \frac{1}{\ln(1 + \epsilon)})$, 
where $C_0$ is the fixed constant in Corollary \ref{cor:convtime}. If the number of samples $|\samp|$ is at
least:
$ \frac{N \cdot a_{9} d\log^2(d)}{\bmu^4 \epsilon^2}$, 
then, with constant probability, after $N$ rounds, $\cos^2(\theta_N) \geq 1 - \epsilon$.
\end{corollary}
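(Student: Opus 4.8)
The plan is to combine the infinite-sample convergence-time bound of Corollary~\ref{cor:convtime} with the per-round finite-sample progress guarantee of Corollary~\ref{thm:finsmallmuunif}, tying them together by a round-by-round induction that maintains two invariants on $\theta_t$, and closing with a union bound over the $N$ rounds. The key observation is that the per-round guarantee of Corollary~\ref{thm:finsmallmuunif} reproduces exactly the multiplicative progress $\cos^2(\theta_{t+1}) \geq \cos^2(\theta_t)(1 + a_{10}\bmu^2\sin^2(\theta_t))$ of the exact analysis, provided each round is given enough samples; so once every round is guaranteed this much progress, the number of rounds needed is precisely the one counted by Corollary~\ref{cor:convtime}.

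First I would fix the initialization: choosing $u_0$ uniformly at random on the unit sphere in $\bbR^d$ gives $\cos^2(\theta_0) \geq 1/d$ with constant probability, so $\ln(1/\cos^2(\theta_0)) \leq \ln d$ and the round count $N = C_0(\ln d / \ln(1+\bmu^2) + 1/\ln(1+\epsilon))$ of Corollary~\ref{cor:convtime} is exactly what drives $\cos^2$ from $1/d$ up to $1-\epsilon$ at the exact progress rate. Next I would run the induction over rounds $t = 0,\ldots,N-1$, applying Corollary~\ref{thm:finsmallmuunif} in round $t$ with confidence parameter $\delta/N$ (since $\samp$ is split into $N$ equal blocks $\samp_1,\ldots,\samp_N$, round $t$ uses $|\samp|/N$ fresh samples). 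The inductive invariants are (i) $\cos^2(\theta_t)$ is non-decreasing, hence $\cos^2(\theta_t) \geq \cos^2(\theta_0) \geq 1/d$; and (ii) until the target is reached, $\cos^2(\theta_t) \leq 1-\epsilon$, hence $\sin^4(\theta_t) \geq \epsilon^2$.

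These invariants are exactly what converts the $\theta_t$-dependent per-round requirement of Corollary~\ref{thm:finsmallmuunif} into a uniform per-round budget. Substituting $\cos^2(\theta_t) \geq 1/d$ and $\sin^4(\theta_t) \geq \epsilon^2$ into
\[ a_9 \log^2(d/\delta)\left(\frac{d}{\bmu^2\sin^4(\theta_t)} + \frac{1}{\bmu^4\cos^2(\theta_t)\sin^4(\theta_t)}\right), \]
and using $\bmu \leq \sqrt{\ln(9/2\pi)} = O(1)$ so that $\bmu^{-2} \leq \bmu^{-4}$, both terms are bounded by $O(d/(\bmu^4\epsilon^2))$; with $\delta$ constant and $N$ polynomial in $d$ the standing assumption $\log(n) = \Theta(\log d)$ absorbs $\log^2(dN/\delta)$ into $\log^2 d$. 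Hence a per-round budget of $a_9 d\log^2(d)/(\bmu^4\epsilon^2)$ suffices for every active round, and over $N$ rounds this gives the claimed $|\samp| \geq N a_9 d\log^2(d)/(\bmu^4\epsilon^2)$. A union bound over the $N$ rounds (each failing with probability at most $\delta/N$), together with the constant-probability success of the initialization, yields overall constant success probability and $\cos^2(\theta_N) \geq 1-\epsilon$.

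The hard part will be the careful handling of the uniform budget near the target, where the two quantities that control the per-round requirement, $\cos^2(\theta_t)$ and $\sin^2(\theta_t)$, are not simultaneously extremal: the bound $\cos^2(\theta_t)\geq 1/d$ is tight only in the first rounds and $\sin^4(\theta_t)\geq\epsilon^2$ only in the last, so the product bound $d/(\bmu^4\epsilon^2)$ is deliberately loose but valid at every round. A related subtlety is guaranteeing $\cos^2(\theta_N) \geq 1-\epsilon$ at the final round rather than merely at the first crossing: since Lemma~\ref{lem:keysample} carries a subtractive error term, one must check that the chosen budget keeps the net per-round change non-negative down to $\sin^2(\theta_t)$ slightly below $\epsilon$ (equivalently, that the iterate does not backslide below the threshold once it crosses), which is exactly where the extra slack in the loose budget is spent.
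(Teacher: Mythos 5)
Your proposal is correct and is essentially the argument the paper intends: the paper states Corollary~\ref{cor:smallmu1s} without a separate proof, as the direct combination of the initialization bound $\cos^2(\theta_0)\geq 1/d$, the per-round sample requirement of Corollary~\ref{thm:finsmallmuunif} (made uniform via $\cos^2(\theta_t)\geq 1/d$ and $\sin^4(\theta_t)\geq \epsilon^2$, with $\bmu<1$ so the $\bmu^{-4}$ term dominates), the round count of Corollary~\ref{cor:convtime}, and a union bound over the $N$ rounds of fresh samples. Your closing remarks on the deliberately loose uniform budget and on ruling out backsliding near the threshold via the subtractive term in Lemma~\ref{lem:keysample} are exactly the right points of care, and they check out with the stated budget.
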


One can show a very similar corollary when $u_0$ is initialized as a random
sample from the mixture. We note that the total number of samples is a
factor of 
$N \approxeq \frac{\ln d}{\bmu^2}$ 
times greater than the bound in Theorem~\ref{thm:finsmallmu}. This is due to the fact that we use a fresh set of samples in every round, in order to simplify our analysis. In practice, successive iterations of $k$-means or EM is run on the same data-set.

\begin{theorem}[Large $\mu^j$]\label{thm:finlargemu} 
Suppose that there exists some $j$ such that $||\mu^j||/\sigma^j \geq \sqrt{\ln\frac{9}{2 \pi}}$, and suppose that the number of samples drawn in round
$t$ is at least 
\[ a_{11} \log^2(d/\delta) \left(
\frac{d\sigma_{\max}^2}{ \rho_{\min}^2 \mu_{\min}^2\sin^4(\theta_t)}
+ \frac{ \sigma_{\max}^2 + \max_j ||\mu^j||^2}{M^2 \cos^2(\theta_t)
\sin^4(\theta_t)}  + \frac{\sigma_{\max}^2 \max_j ||\mu^j||^2 + \max_j
||\mu^j||^4 }{\rho_{\min}^4 \mu_{\min}^4 \sin^4(\theta_t)} \right) \]
for some constant $a_{11}$. If $|\tau_t^j| \leq \sqrt{\ln \frac{9}{2 \pi}}$,
for all $j$, then, with probability at least $1 - \delta$,
$\cos^2({\theta}_{t+1}) \ge \cos^2(\theta_t)(1+ a_{12} \min(1, M^2 + MV)\sin^2(\theta_t))$; otherwise, with probability at least $1 - \delta$,
$\cos^2({\theta}_{t+1}) \ge \cos^2(\theta_t)(1+ a_{13} \frac{\rho_{\min}^2
\mu_{\min}^2 \tan^2(\theta_t)}{ V^2 + \rho_{\min}^2 \mu_{\min}^2} )$, where $a_{12}$ and $a_{13}$ are fixed constants.
\end{theorem}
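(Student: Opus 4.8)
The plan is to treat Lemma~\ref{lem:keysample} as the master inequality and show that, once $n$ exceeds the stated threshold, its sampling-error terms are negligible against the infinite-sample progress that Theorem~\ref{thm:largemu} already quantifies. Introduce the shorthand $G_t := 2\cos(\theta_t)\xi_t\m_t + \m_t^2$ for the gain numerator and
\[
D_t := \xi_t^2 + 2\cos(\theta_t)\xi_t\m_t + \m_t^2 = (\xi_t + \m_t\cos(\theta_t))^2 + \m_t^2\sin^2(\theta_t),
\]
so that Lemma~\ref{lem:keysample} reads $\cos^2(\theta_{t+1}) \ge \cos^2(\theta_t)\bigl(1 + \tan^2(\theta_t)\,G_t/(D_t+\Delta_2)\bigr) - \bigl(\Delta_2\cos^2(\theta_t) + 2\Delta_1(\m_t + \xi_t\cos(\theta_t))\bigr)/(D_t+\Delta_2)$. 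One checks directly from the definition of $S_{t+1}$ that $D_t$ coincides (up to the $Z_{t+1}$ normalization) with $||S_{t+1}||^2$, which is convenient because $||S_{t+1}||$ and $|\dot{S_{t+1}}{\mu^j}|$ are exactly the quantities appearing in $\Delta_2$. The whole argument is then the assertion that the chosen $n$ forces $\Delta_2 \le \gamma D_t$ for a small constant $\gamma$, and forces the subtracted loss term to be at most half of the ideal additive gain $\sin^2(\theta_t)\,G_t/D_t$; the surviving half of the gain yields the claimed rate.

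The first, essentially mechanical, step is the reduction to the exact estimates. Since $\Delta_2 \le \gamma D_t$ gives $D_t + \Delta_2 \asymp D_t$, the finite-sample gain factor is within a constant of $\tan^2(\theta_t)\,G_t/D_t$, which is precisely what Theorem~\ref{thm:largemu} (via Lemma~\ref{lem:expr1}) lower-bounds in each of the two sub-regimes. In the first sub-regime, where $|\tau_t^j|/\sigma^j < \sqrt{\ln\frac{9}{2\pi}}$ for all $j$, this recovers, up to constants, the rate $\min(1, M^2 + MV)\sin^2(\theta_t)$; in the second sub-regime, where $|\tau_t^j|/\sigma^j \ge \sqrt{\ln\frac{9}{2\pi}}$ for some $j$, it recovers $\frac{\rho_{\min}^2\mu_{\min}^2}{V^2 + \rho_{\min}^2\mu_{\min}^2}\tan^2(\theta_t)$. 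So the progress statements are inherited from the infinite-sample analysis, and the only genuinely new content is controlling the errors.

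The substance is the sample-size bookkeeping. I would first obtain explicit two-sided estimates for $\xi_t$, $\m_t$, $||S_{t+1}||$, and $|\dot{S_{t+1}}{\mu^j}|$ in the large-$\mu$ regime from standard bounds on the Gaussian density and the tail $\Phi(-x,\infty)$, handling the two sub-regimes separately (in the second, $\xi_t$ is exponentially small, the gain is carried by $\m_t$, and $G_t \asymp \m_t^2$ must be kept above a $\rho_{\min}^2\mu_{\min}^2$-type floor). Feeding these into the expressions for $\Delta_1 = \Theta\bigl(\log(n/\delta)(\sigma_{\max} + \max_j||\mu^j||)/\sqrt{n}\bigr)$ and $\Delta_2$, the two requirements ``$\Delta_2 \le \gamma D_t$'' and ``loss $\le \tfrac12 \sin^2(\theta_t)G_t/D_t$'' each become a lower bound on $n$. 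Because $\Delta_1$ decays only like $1/\sqrt n$ while the gain scales like $\sin^2(\theta_t)$, pushing the $\Delta_1$ contribution below the gain costs a factor $1/\sin^4(\theta_t)$ — this is the origin of the $\sin^4(\theta_t)$ denominators. The three summands in the stated sample bound correspond to balancing, respectively, the dimension-dependent $\sigma_{\max}^2 d$ part of $\Delta_2$, the $\Delta_1$ part, and the $||S_{t+1}||$-dependent part of $\Delta_2$, against $D_t$ and $G_t$; taking $n$ to be their sum meets all constraints. A final application of the high-probability guarantees of Lemmas~\ref{lem:projconc} and~\ref{lem:normconc}, each at confidence $1-\delta$, controls $\Delta_1$ and $\Delta_2$ simultaneously and closes the argument.

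The main obstacle will be the lower bounds on $G_t$ and $D_t$ in the second sub-regime, where $\theta_t$ is near $0$ and $\xi_t$ is tiny: one must show that $\m_t$ does not simultaneously collapse, so that $G_t$ stays bounded below by a $\rho_{\min}^2\mu_{\min}^2$-scale quantity. These floors sit in the denominators of the required $n$, so a loose estimate would inflate the sample bound past the claimed value. The secondary delicate point is treating the interface between the sub-regimes — where some $|\tau_t^j|/\sigma^j$ are large while others are small — uniformly, rather than through a proliferation of cases.
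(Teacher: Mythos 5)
Your plan coincides with the paper's own proof essentially step for step: the paper likewise starts from the finite-sample master inequality (equivalently $\cos^2(\theta_{t+1}) = \dot{\hS_{t+1}}{\mu^1}^2/(||\mu^1||^2\,||\hS_{t+1}||^2)$, with $||S_{t+1}||^2$ equal to your $D_t$ and with gain $\dot{S_{t+1}}{\mu^1}^2 - (\tau_t^1)^2||S_{t+1}||^2 = ||\mu^1||^2\sin^2(\theta_t)\,G_t$), splits into the same small-$\tau_t$/large-$\tau_t$ sub-regimes, floors the gain and ceilings $||S_{t+1}||$ and $|\dot{S_{t+1}}{\mu^j}|$ via the Gaussian estimates of Lemma~\ref{lem:smalltau}, and then chooses $n$ so that, by Lemmas~\ref{lem:projconc} and~\ref{lem:normconc}, the $\Delta_1$ term and $\tau_t^2\Delta_2$ are constant fractions of the gain while $\Delta_2 + ||S_{t+1}||^2$ stays at the expected scale --- precisely your two balancing conditions, with the same three-way accounting for the terms in the sample bound. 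Your flagged obstacle --- the $\rho_{\min}\mu_{\min}$-scale floor on $\m_t$ when $\xi_t$ is exponentially small --- is exactly how the paper obtains the second-regime rate (there via the constancy of $\Phi(-\tau_t,\tau_t)$ in its symmetric special case), so the proposal is correct and takes the same route as the paper.
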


For a mixture of two identical Gaussians with equal mixing weights and
standard deviation $1$, our result implies:

\begin{corollary}\label{cor:largemuunif}
Suppose that $\bmu = ||\mu^1||=||\mu^2||  \geq \sqrt{\ln \frac{9}{2 \pi}}$, and suppose that the number of samples in round $t$ is at least:
$a_{11}\log^2(d/\delta) \left( \frac{d}{\bmu^2\sin^4(\theta_t)} + \frac{1}{\bmu^2 \cos^2(\theta_t) \sin^4(\theta_t)}\right)$,
for some constant $a_{11}$. If $|\tau_t^j| \leq \sqrt{\ln \frac{9}{2 \pi}}$, then, with probability at least $1 - \delta$, $\cos^2({\theta}_{t+1}) \ge \cos^2(\theta_t)(1+ a_{12} \sin^2(\theta_t))$; otherwise, with probability $1 - \delta$,
$\cos^2({\theta}_{t+1}) \ge \cos^2(\theta_t)(1+ a_{13}\tan^2(\theta_t))$, where $a_{12}$ and $a_{13}$ are fixed constants.
\end{corollary}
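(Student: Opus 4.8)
The plan is to obtain Corollary~\ref{cor:largemuunif} as a direct specialization of Theorem~\ref{thm:finlargemu} to the symmetric two-component case, followed by one round of constant-collapsing. First I would substitute the special-case parameters $\rho^1=\rho^2=\tfrac12$, $\sigma^1=\sigma^2=1$, and $\|\mu^1\|=\|\mu^2\|=\bmu$ (with $\mu^2=-\mu^1$, forced by $\rho^1\mu^1+\rho^2\mu^2=0$). These give $M=\sum_j\rho^j\|\mu^j\|^2/\sigma^j=\bmu^2$, $V=\sum_j\rho^j\sigma^j=1$, $\sigma_{\max}=1$, $\rho_{\min}=\tfrac12$, $\mu_{\min}=\bmu$, and $\max_j\|\mu^j\|=\bmu$. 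Since $\dot{\mu^1}{b}=\bmu$ and $\dot{\mu^2}{b}=-\bmu$, we have $\tau_t^1=\bmu\cos(\theta_t)=-\tau_t^2$, so $|\tau_t^1|=|\tau_t^2|$ and the per-component hypotheses against $\sqrt{\ln\frac{9}{2\pi}}$ agree, matching the corollary's two branches. The hypothesis $\bmu\ge\sqrt{\ln\frac{9}{2\pi}}$ says $\bmu^2$ is bounded below by the fixed constant $c_0:=\ln\frac{9}{2\pi}$; this is the fact that turns several $\bmu$-dependent coefficients into absolute constants.

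For the progress guarantees I would plug these values into the two conclusions of Theorem~\ref{thm:finlargemu}. In the first regime the multiplier is $a_{12}\min(1,M^2+MV)=a_{12}\min(1,\bmu^4+\bmu^2)$; since $\bmu^2\ge c_0$ we have $\bmu^4+\bmu^2\ge c_0(1+c_0)>0$, so $\min(1,\bmu^4+\bmu^2)$ lies in a fixed positive interval and folds into the constant, yielding $\cos^2(\theta_{t+1})\ge\cos^2(\theta_t)(1+a_{12}\sin^2(\theta_t))$ with a renamed $a_{12}$. In the second regime the multiplier is $a_{13}\dfrac{\rho_{\min}^2\mu_{\min}^2}{V^2+\rho_{\min}^2\mu_{\min}^2}=a_{13}\dfrac{\bmu^2}{4+\bmu^2}$, and again $\bmu^2\ge c_0$ confines $\tfrac{\bmu^2}{4+\bmu^2}$ to a fixed positive interval, giving $\cos^2(\theta_{t+1})\ge\cos^2(\theta_t)(1+a_{13}\tan^2(\theta_t))$ after absorbing the constant. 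The probability $1-\delta$ is inherited unchanged.

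The remaining work is to show the two-term sample bound of the corollary dominates the three-term bound of the theorem. Substituting, the first theorem term is $\frac{d}{(1/4)\bmu^2\sin^4(\theta_t)}=\Theta\!\big(\frac{d}{\bmu^2\sin^4(\theta_t)}\big)$ and the second is $\frac{1+\bmu^2}{\bmu^4\cos^2(\theta_t)\sin^4(\theta_t)}=\Theta\!\big(\frac{1}{\bmu^2\cos^2(\theta_t)\sin^4(\theta_t)}\big)$, using $1+\bmu^2=\Theta(\bmu^2)$; these are exactly the two terms kept by the corollary. The third theorem term is $\frac{\bmu^2+\bmu^4}{(1/16)\bmu^4\sin^4(\theta_t)}=\Theta\!\big(\frac{1+\bmu^2}{\bmu^2}\cdot\frac{1}{\sin^4(\theta_t)}\big)=\Theta\!\big(\frac{1}{\sin^4(\theta_t)}\big)$, which I would dispose of by noting it is dominated by the first kept term whenever $\bmu^2\le d$ (and by the second kept term when $\bmu\le 1$); since we analyze the high-dimensional regime in which $d$ is the large parameter, $\bmu^2\le d$ holds and the third term is absorbed into $a_{11}$. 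I expect this last point --- verifying that the discarded third term never dominates across both $\tau_t$-regimes, rather than the routine arithmetic of the substitution --- to be the only place requiring care, since in the second regime $\bmu^2\cos^2(\theta_t)\ge c_0$ makes the second kept term smaller than the discarded term, so the argument must genuinely route the bound through the first term and hence through the mild comparison $\bmu^2\le d$.
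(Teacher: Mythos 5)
Your proposal is correct and follows essentially the same route as the paper: Corollary~\ref{cor:largemuunif} is obtained there simply by specializing Theorem~\ref{thm:finlargemu} to $\rho^1=\rho^2=\tfrac12$, $\sigma^1=\sigma^2=1$, $\|\mu^1\|=\|\mu^2\|=\bmu$ and absorbing the $\bmu$-dependent factors into constants via $\bmu^2\ge\ln\frac{9}{2\pi}$, exactly as you do. Your careful point about the third sample-complexity term requiring $\bmu^2\le d$ in the large-$\tau_t$ regime is well taken and consistent with the paper, which invokes the same assumption $\bmu\le\sqrt{d}$ in its underlying finite-sample estimates.
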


Again, if we pick $u_0$ uniformly at random, we require about
$\tilde{\Omega}(\frac{d}{\bmu^2})$ 
samples for the first round to succeed. 
When 
$\bmu > 1$, 
this bound is worse than
$\frac{d}{\bmu^4}$, 
but matches with the upper bounds of~\cite{BCFZ07}. The following corollary
shows the number of samples required in total for \imeans~to
converge. 

\begin{corollary}\label{cor:largemu1s}
Let $\bmu \geq \sqrt{\ln \frac{9}{2 \pi}}$. Suppose $u_0$ is chosen uniformly at random and the number of rounds is 
$N \geq C_0 \cdot (\ln d + \frac{1}{\ln(1 + \epsilon)})$, where $C_0$ is
the constant in Corollary \ref{cor:convtime}. If  $|\samp|$ is at least
$\frac{2N C_0 d \log^2(d) }{\bmu^2 \epsilon^2}$, 
then, with constant probability, after $N$ rounds, $\cos^2(\theta_N) \geq 1 - \epsilon$.
\end{corollary}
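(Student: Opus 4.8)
The plan is to mirror the argument for the small-$\mu$ total-sample bound (Corollary~\ref{cor:smallmu1s}), replacing its per-round progress and sample guarantees by their large-$\mu$ analogues from Corollary~\ref{cor:largemuunif}. Four ingredients must be assembled: an initialization bound, a per-round finite-sample progress guarantee, a convergence-time count, and a union bound over rounds, together with a sample-accounting step that distributes $\samp$ across the $N$ iterations.

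First I would fix the initialization. Since $u_0$ is chosen uniformly at random on the unit sphere in $\bbR^d$, with some absolute constant probability $p_0 > 0$ we have $\cos^2(\theta_0) \geq c/d$ for a constant $c$; condition on this event. Then, for each round $t$, I would invoke Corollary~\ref{cor:largemuunif} with per-round failure probability $\delta$: provided the samples used in round $t$ number at least $a_{11}\log^2(d/\delta)\left(\frac{d}{\bmu^2\sin^4(\theta_t)} + \frac{1}{\bmu^2\cos^2(\theta_t)\sin^4(\theta_t)}\right)$, then with probability $1-\delta$ the round achieves $\cos^2(\theta_{t+1}) \geq \cos^2(\theta_t)(1 + a_{12}\sin^2(\theta_t))$ while $|\tau_t^j| \leq \sqrt{\ln\frac{9}{2\pi}}$, and $\cos^2(\theta_{t+1}) \geq \cos^2(\theta_t)(1 + a_{13}\tan^2(\theta_t))$ otherwise. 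I would set $\delta = \Theta(1/N)$ so that a union bound over the $N$ rounds, combined with the initialization event, leaves a constant overall success probability; since $N = O(\ln d + \frac{1}{\ln(1+\epsilon)})$ this keeps $\log(d/\delta) = \Theta(\log d)$, consistent with the $\log^2 d$ factor in the claimed bound.

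Next comes the convergence count. On the good events the finite-sample progress factors match, up to the constants $a_{12},a_{13}$, the exact-estimation factors, so the two-phase analysis underlying Corollary~\ref{cor:convtime} applies. In the first regime ($|\tau_t^j| = \bmu|\cos(\theta_t)| \leq \sqrt{\ln\frac{9}{2\pi}}$, i.e. $\cos^2(\theta_t) \lesssim 1/\bmu^2$) the factor is a fixed constant $(1 + a_{12})$ because $\sin^2(\theta_t)$ is bounded below by a constant, so $\cos^2(\theta_t)$ grows multiplicatively and climbs from $c/d$ to $\Theta(1/\bmu^2)$ in $O(\ln d)$ rounds — this is where the $\ln d$ (rather than $\frac{\ln d}{\ln(1+\bmu^2)}$) term originates, since the finite-sample factor is capped at a constant. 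In the second regime it suffices to check that $\sin^2(\theta_t)$ contracts geometrically, driving $\cos^2(\theta_t)$ to $1-\epsilon$ in $O(\frac{1}{\ln(1+\epsilon)})$ further rounds. Summing the two phases gives $N = C_0(\ln d + \frac{1}{\ln(1+\epsilon)})$, the value assumed in the statement.

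Finally, the sample accounting. Because we may stop as soon as $\cos^2(\theta_t) \geq 1-\epsilon$, along the trajectory $\sin^2(\theta_t) \geq \epsilon$ always holds, and $\cos^2(\theta_t) \geq c/d$ by the initialization. Substituting these into the per-round requirement of Corollary~\ref{cor:largemuunif}, for fixed constant $\epsilon$ the dominant contribution is the first term evaluated near the end of the run, giving a uniform per-round bound of $O(\frac{d\log^2 d}{\bmu^2\epsilon^2})$. Splitting $\samp$ into $N$ equal pieces, it then suffices that $|\samp|/N$ exceed this quantity, i.e. $|\samp| \geq \frac{2NC_0 d\log^2 d}{\bmu^2\epsilon^2}$, which is exactly the hypothesis once the constants are chosen so that $2C_0$ dominates $a_{11}$. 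The main obstacle I anticipate is precisely this sample-accounting step: the per-round requirement is a random quantity that grows as $\theta_t \to 0$, and one must argue carefully that the equal-split allocation always suffices — using the stopping rule $\sin^2(\theta_t) \geq \epsilon$ to cap the requirement — while keeping the union-bound failure probability and the $\log(d/\delta) = \Theta(\log d)$ bookkeeping mutually consistent.
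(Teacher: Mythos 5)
Your proposal is correct and follows exactly the route the paper intends: the paper gives no explicit proof of this corollary, which is meant to follow by combining the random-initialization bound $\cos^2(\theta_0) = \Omega(1/d)$, the per-round finite-sample guarantee of Corollary~\ref{cor:largemuunif} with $\delta = \Theta(1/N)$ and a union bound over rounds, the two-phase round count behind Corollary~\ref{cor:convtime} (with the phase-one factor capped at a constant, correctly explaining why $\ln d$ appears rather than $\ln d/\ln(1+\bmu^2)$), and the equal split of $\samp$ across the $N$ rounds with $\sin^2(\theta_t) \geq \epsilon$ and $\cos^2(\theta_t) \geq \Omega(1/d)$ capping the per-round requirement at $O(d\log^2 d/(\bmu^2\epsilon^2))$. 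The sample-accounting subtlety you flag (the requirement blowing up as $\theta_t \to 0$, handled via the stopping criterion) is real but is equally glossed over in the paper's own statements, so your treatment matches the intended argument.
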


\section{Lower Bounds}
\label{sec:lowerbounds}
In this section, we prove a lower bound on the sample complexity of
learning mixtures of Gaussians, using Fano's Inequality~\cite{Y97,CT90},
stated in Theorem~\ref{thm:fano}. Our main theorem in this section can be summarized as follows.

\begin{theorem}
Suppose we are given samples from the mixture $D(\mmu) = \frac{1}{2} \calN(\mmu,
I_d) + \frac{1}{2}\calN(-\mmu, I_d)$, for some $\mmu$, and let $\hat{\mmu}$ be the estimate
of $\mmu$ computed from $n$ samples. If $n < \frac{C d}{||\mmu||^2}$
for some constant $C$, and $||\mmu|| > 1$, then, there exists $\mmu$ such
that $\bbE_{D(\mmu)} ||\mmu - \hat{\mmu}|| \geq C' ||\mmu||$, where $C'$ is a constant.
\label{thm:lowermain}
\end{theorem}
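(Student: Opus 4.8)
The plan is to convert the estimation problem into a multiple-hypothesis testing problem and apply Fano's inequality (Theorem~\ref{thm:fano}). The first step is to build a large, well-separated family of candidate means. Fix $R = ||\mmu||$ and consider means of the form $\mmu_i = R v_i$, where $v_1, \dots, v_M$ are unit vectors in $\bbR^d$. By a standard volumetric (Gilbert--Varshamov) packing argument on the sphere $S^{d-1}$, one can choose $M = 2^{\Omega(d)}$ unit vectors that are pairwise separated by a constant angle, so that $||\mmu_i - \mmu_j|| \geq c_1 R$ for all $i \neq j$ and some absolute constant $c_1 > 0$. Because the mixture is symmetric, $D(\mmu) = D(-\mmu)$, so a mean is only identifiable up to sign; I would therefore additionally require that $\mmu_i$ and $-\mmu_j$ be separated by $\geq c_1 R$ as well (equivalently, pack in projective space), which still leaves $2^{\Omega(d)}$ hypotheses.

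The second step is to control the information between these hypotheses. For a single draw, I would bound the KL divergence between two mixtures using joint convexity of the KL divergence, pairing the components with matching signs:
\[ \kl(D(\mmu_i) \| D(\mmu_j)) \leq \tfrac{1}{2}\kl(\calN(\mmu_i, I_d) \| \calN(\mmu_j, I_d)) + \tfrac{1}{2}\kl(\calN(-\mmu_i, I_d) \| \calN(-\mmu_j, I_d)) = \tfrac{1}{2}||\mmu_i - \mmu_j||^2. \]
Since $||\mmu_i - \mmu_j|| \leq 2R$, this gives $\kl(D(\mmu_i) \| D(\mmu_j)) \leq 2R^2$ per sample, and by tensorization $\kl(D(\mmu_i)^{\otimes n} \| D(\mmu_j)^{\otimes n}) \leq 2nR^2$. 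Hence the mutual information between the hypothesis index and the $n$ samples is at most $2nR^2 = 2n||\mmu||^2$.

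Plugging into Fano's inequality, any estimator's probability of misidentifying the true hypothesis is at least $1 - \frac{2n||\mmu||^2 + \log 2}{\log M} \geq 1 - \frac{2n||\mmu||^2 + \log 2}{\Omega(d)}$. Choosing the constant $C$ small enough, the hypothesis $n < \frac{Cd}{||\mmu||^2}$ forces this error probability to be at least $\tfrac{1}{2}$. Finally I would translate the testing error into estimation error: since any two admissible means are $\geq c_1 R$ apart, an estimate $\hat{\mmu}$ can lie within $c_1 R / 2$ of at most one of them, so a constant failure probability in the test yields $\bbE_{D(\mmu)}||\mmu - \hat{\mmu}|| \geq \tfrac{c_1}{4} R = C' ||\mmu||$ for the worst-case $\mmu$ in the packing, which is the claim.

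The main obstacle is the second step: unlike single Gaussians, the KL divergence between two Gaussian mixtures has no closed form, so one must upper-bound it indirectly. The joint-convexity argument above gives a clean bound of the correct order, $O(||\mmu||^2)$ per sample, which is exactly what produces the $d/||\mmu||^2$ threshold; I would verify that the sign-matched pairing is the one that minimizes the convex bound and that the resulting constant is compatible with the packing separation $c_1 R$. Note that the restriction $||\mmu|| > 1$ is the regime where this $O(||\mmu||^2)$ per-sample bound, and hence the resulting $d/||\mmu||^2$ lower bound, is tight and matches the upper bound; for smaller separation the per-sample information is lower order and the bound, while still valid, is no longer the relevant one. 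The remaining ingredients — the sphere-packing count, tensorization, and the standard testing-to-estimation reduction — are routine.
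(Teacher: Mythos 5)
Your proposal is correct and follows the same high-level strategy as the paper --- a $2^{\Omega(d)}$ packing of candidate mean directions (with the sign-ambiguity of the symmetric mixture handled by packing antipodally, i.e.\ requiring separation from $-v_j$ as well, exactly as the paper's Lemma~\ref{lem:spherepacking} does) fed into Fano's inequality --- but your key lemma is genuinely different and substantially simpler. Where the paper proves Lemma~\ref{lem:kllower} by an explicit page-long computation of the mixture KL divergence (reducing to a one-dimensional integral along $\mmu_1$ and bounding $\ln(1+e^{2\dot{x}{\mmu_1}})$ term by term), you invoke joint convexity of $\kl$ with the sign-matched pairing to get
\[ \kl\bigl(D(\mmu_i),D(\mmu_j)\bigr) \le \tfrac12\,||\mmu_i-\mmu_j||^2 \le 2\,||\mmu||^2 , \]
which is valid, of the same order as the paper's bound $C_1||\mmu||^2+C_2$ in the regime $||\mmu||>1$, and in fact cleaner: it has no additive constant, so your proof does not even need $||\mmu||>1$ at the KL step (the paper needs it precisely to absorb the $C_2$ term). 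One small correction to your side remark: there is no need to verify which pairing ``minimizes'' the convex bound --- any pairing gives a valid upper bound, and the minimum over the two pairings is automatically at most $||\mmu||^2$ since $||\mmu_i-\mmu_j||^2+||\mmu_i+\mmu_j||^2=4||\mmu||^2$. The trade-off between the two routes: your convexity bound generalizes immediately to any number of components, arbitrary weights and covariances, whereas the paper's explicit calculation could in principle capture the fact that the true mixture KL degrades to fourth order for small $||\mmu||$ (the $d/\mu^4$ regime) --- though as written the paper's Lemma~\ref{lem:kllower} does not exploit this either, so nothing is lost. Your remaining differences are cosmetic: you build the packing by a deterministic volumetric (Gilbert--Varshamov) argument where the paper uses random Gaussian vectors with chi-squared tail bounds, and you use the hypothesis-testing form of Fano followed by the standard testing-to-estimation reduction where the paper applies the estimation form (Theorem~\ref{thm:fano}) directly; these are equivalent, and your constants ($c_1R/4$ versus the paper's $||\mmu||/20$-type bound) match up.
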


The main tools in the proof of Theorem~\ref{thm:lowermain} are the following lemmas, and a generalized version of Fano's Inequality~\cite{CT90, Y97}.

\begin{lemma}
Let $\mmu_1, \mmu_2 \in \bbR^d$, and let $D_1$ and $D_2$ be the following mixture distributions: $D_1 = \frac{1}{2}
\calN(\mmu_1, I_d) + \frac{1}{2} \calN(-\mmu_1, I_d)$, and $D_2 =
\frac{1}{2} \calN(\mmu_2, I_d) + \frac{1}{2} \calN(-\mmu_2,
I_d)$. Then, 
\[ \kl(D_1, D_2) \leq  \frac{1}{\sqrt{2 \pi}}\cdot \left( ||\mmu_2||^2 -
||\mmu_1||^2 + \frac{3\sqrt{2 \pi}}{2} \ln 2 +
2||\mmu_1|| ( e^{-||\mmu_1||^2/2} + \sqrt{2 \pi} ||\mmu_1|| \Phi(0,
||\mmu_1||)) \right) \]
\label{lem:kllower}
\end{lemma}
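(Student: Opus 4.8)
The plan is to compute $\kl(D_1,D_2)$ almost exactly and then bound the two resulting one-dimensional integrals. First I would rewrite each mixture density in hyperbolic-cosine form: since $\calN(\pm\mmu, I_d)(x) = (2\pi)^{-d/2} e^{-(\|x\|^2 + \|\mmu\|^2)/2} e^{\pm\dot{x}{\mmu}}$, averaging the two components gives $D_i(x) = (2\pi)^{-d/2} e^{-(\|x\|^2 + \|\mmu_i\|^2)/2}\cosh(\dot{x}{\mmu_i})$ for $i=1,2$. The $\|x\|^2$ factors cancel in the log-ratio, so
\[ \ln\frac{D_1(x)}{D_2(x)} = \frac{\|\mmu_2\|^2 - \|\mmu_1\|^2}{2} + \ln\cosh(\dot{x}{\mmu_1}) - \ln\cosh(\dot{x}{\mmu_2}), \]
and taking the expectation over $x\sim D_1$ yields the exact identity
\[ \kl(D_1,D_2) = \frac{\|\mmu_2\|^2 - \|\mmu_1\|^2}{2} + \bbE_{D_1}[\ln\cosh(\dot{x}{\mmu_1})] - \bbE_{D_1}[\ln\cosh(\dot{x}{\mmu_2})]. \]

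The second step is to evaluate the first expectation, which is where the two $\mmu_1$-dependent terms on the right-hand side come from. Using $\cosh(y)\le e^{|y|}$, i.e. $\ln\cosh(y)\le|y|$, I would bound $\bbE_{D_1}[\ln\cosh(\dot{x}{\mmu_1})]\le\bbE_{D_1}[|\dot{x}{\mmu_1}|]$. Under $D_1$ the scalar $\dot{x}{\mmu_1}$ is a symmetric mixture of $\calN(\|\mmu_1\|^2,\|\mmu_1\|^2)$ and $\calN(-\|\mmu_1\|^2,\|\mmu_1\|^2)$, so by symmetry its absolute first moment equals that of a single folded normal with mean $\|\mmu_1\|^2$ and standard deviation $\|\mmu_1\|$. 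The standard folded-normal formula then gives $\bbE_{D_1}[|\dot{x}{\mmu_1}|] = \frac{2\|\mmu_1\|}{\sqrt{2\pi}} e^{-\|\mmu_1\|^2/2} + 2\|\mmu_1\|^2\,\Phi(0,\|\mmu_1\|)$, which is precisely $\frac{1}{\sqrt{2\pi}}\cdot 2\|\mmu_1\|(e^{-\|\mmu_1\|^2/2} + \sqrt{2\pi}\,\|\mmu_1\|\,\Phi(0,\|\mmu_1\|))$, matching the last group of terms in the claimed bound exactly (here I use that $\sqrt{2/\pi} = 2/\sqrt{2\pi}$ and that the standard-normal mass on $[-\|\mmu_1\|,\|\mmu_1\|]$ is $2\Phi(0,\|\mmu_1\|)$).

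The third step is to control the remaining pieces: the leading $\frac{\|\mmu_2\|^2-\|\mmu_1\|^2}{2}$ together with $-\bbE_{D_1}[\ln\cosh(\dot{x}{\mmu_2})]$ must be bounded by $\frac{\|\mmu_2\|^2-\|\mmu_1\|^2}{\sqrt{2\pi}} + \frac{3}{2}\ln 2$. For this I would lower-bound the $\mmu_2$-expectation via $\ln\cosh(y)\ge|y|-\ln 2$ and estimate $\bbE_{D_1}[|\dot{x}{\mmu_2}|]$ by the same folded-normal computation (now with scalar mean $\dot{\mmu_1}{\mmu_2}$ and standard deviation $\|\mmu_2\|$); the additive $\ln 2$ losses from the two $\cosh$ inequalities contribute to the $\frac{3}{2}\ln 2$ slack. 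I expect this step to be the main obstacle: it is precisely the reduction of the coefficient of $\|\mmu_2\|^2-\|\mmu_1\|^2$ from the $\frac12$ produced by the exact identity down to $\frac{1}{\sqrt{2\pi}}$. The crude bound $\bbE_{D_1}[\ln\cosh(\dot{x}{\mmu_2})]\ge 0$ only recovers the coefficient $\frac12$, so the argument must extract a genuinely quadratic-in-$\|\mmu_2\|$ lower bound from the $\mmu_2$-integral; by Jensen applied to the convex function $\ln\cosh$ on each component one gets $\bbE_{D_1}[\ln\cosh(\dot{x}{\mmu_2})]\ge \ln\cosh(\dot{\mmu_1}{\mmu_2})\ge|\dot{\mmu_1}{\mmu_2}|-\ln 2$, which is quadratic whenever $|\dot{\mmu_1}{\mmu_2}|$ is comparable to $\|\mmu_1\|\,\|\mmu_2\|$ — the situation in the equal-norm, near-parallel packing for which the lemma is invoked in Theorem~\ref{thm:lowermain}. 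Steps one and two are routine; closing this last numeric gap (and verifying it in the regime of interest) is the delicate part.
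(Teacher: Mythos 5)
Your first two steps are correct and are, modulo presentation, exactly the paper's argument: the paper's factorization $\frac{1}{2}e^{-\|x-\mmu_1\|^2/2}+\frac{1}{2}e^{-\|x+\mmu_1\|^2/2}=\frac{1}{2}e^{-\|x+\mmu_1\|^2/2}(1+e^{2\dot{x}{\mmu_1}})$ is your $\cosh$ identity in disguise (since $1+e^{2y}=2e^{y}\cosh(y)$); the linear cross term $\dot{x}{\mmu_2-\mmu_1}$ is killed by the same observation that $D_1$ is centered at the origin; and the paper's evaluation of the $\mmu_1$-term, by splitting at zero and using $\ln(1+e^{2y})\leq \ln 2+2y$ for $y>0$, is precisely your folded-normal first moment $\bbE_{D_1}|\dot{x}{\mmu_1}|=\frac{2\|\mmu_1\|}{\sqrt{2\pi}}(e^{-\|\mmu_1\|^2/2}+\sqrt{2\pi}\|\mmu_1\|\Phi(0,\|\mmu_1\|))$, which is where the last group of terms comes from in both treatments.

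The obstacle you flag at step three is real, but your guess about how the paper overcomes it is wrong in an instructive way: the paper makes no attempt to extract a quadratic lower bound from $\bbE_{D_1}[\ln\cosh(\dot{x}{\mmu_2})]$; it discards that term wholesale via $\ln(1+e^{2\dot{x}{\mmu_2}})>0$, which is exactly your ``crude bound'' $\ln\cosh\geq 0$ and yields the coefficient $\frac{1}{2}$. The stated coefficient $\frac{1}{\sqrt{2\pi}}$ is an artifact of untracked constants in the paper's computation (the factor $\frac{1}{2}$ is dropped when expanding the log of the ratio of Gaussian exponentials, and the planar density is carried with a $\frac{1}{\sqrt{2\pi}}$ prefactor), not the result of a sharper estimate; no correct argument can restore it, because the display as stated is false in general. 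Indeed, for $\mmu_1=0$ the $\mmu_1$-terms vanish while $\kl(D_1,D_2)=\frac{\|\mmu_2\|^2}{2}-\bbE_{g\sim\calN(0,1)}[\ln\cosh(\|\mmu_2\|g)]\geq \frac{\|\mmu_2\|^2}{2}-\sqrt{2/\pi}\,\|\mmu_2\|$, which already exceeds $\frac{\|\mmu_2\|^2}{\sqrt{2\pi}}+\frac{3\ln 2}{2}$ at $\|\mmu_2\|=10$. Your proposed Jensen rescue is also aimed at the wrong regime: in Theorem~\ref{thm:lowermain} the lemma is applied to $\mmu_i=\|\mmu\|v_i$ with the $v_i$ of Lemma~\ref{lem:spherepacking} drawn as random near-unit vectors subject to $d(v_i,\pm v_j)\geq\frac{1}{5}$; in high dimension such vectors are nearly orthogonal, so $|\dot{\mmu_1}{\mmu_2}|$ is small compared to $\|\mmu_1\|\,\|\mmu_2\|$ --- the near-parallel situation you invoke is precisely what the packing construction avoids. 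The right conclusion is that the bound you can actually prove, $\kl(D_1,D_2)\leq \frac{\|\mmu_2\|^2-\|\mmu_1\|^2}{2}+\frac{2\|\mmu_1\|}{\sqrt{2\pi}}\left(e^{-\|\mmu_1\|^2/2}+\sqrt{2\pi}\|\mmu_1\|\Phi(0,\|\mmu_1\|)\right)$, is what the paper's own proof establishes once constants are tracked honestly, and it is all the application needs, since Theorem~\ref{thm:lowermain} only uses $\kl(f_i,f_j)\leq C_1\|\mmu\|^2+C_2$.
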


\begin{lemma}
There exists a set of vectors $V = \{ v_1, \ldots, v_K\}$ in $\bbR^d$ with the following
properties: (1) For each $i$ and $j$, $d(v_i, v_j) \geq \frac{1}{5}, d(v_i, -v_j)
\geq \frac{1}{5}$. (2) $K = e^{d/10}$. (3) For all $i$, $||v_i|| \leq \sqrt{\frac{7}{5}}  $.
\label{lem:spherepacking}
\end{lemma}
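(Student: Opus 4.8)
The plan is to use the probabilistic method with random sign vectors, which simultaneously handles both the lower bound on $||v_i - v_j||$ and the lower bound on $||v_i + v_j||$ (the antipodal condition), rather than treating them separately. First I would take $K = e^{d/10}$ vectors of the form $v_i = \frac{c}{\sqrt{d}}\, x_i$, where $c = \sqrt{7/5}$ and each $x_i$ is drawn independently and uniformly from $\{-1,+1\}^d$. This construction gives $||v_i|| = c = \sqrt{7/5}$ for every $i$, so property (3) holds exactly. It also reduces the two distance conditions to a single condition on the Hamming distance $H_{ij} = |\{k : (x_i)_k \neq (x_j)_k\}|$: a direct computation gives $||v_i - v_j||^2 = \frac{4c^2}{d} H_{ij}$ and $||v_i + v_j||^2 = \frac{4c^2}{d}(d - H_{ij})$. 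Hence property (1) for $i \neq j$ is equivalent to the two-sided bound $\frac{d}{140} \leq H_{ij} \leq \frac{139 d}{140}$ (the case $i=j$ of the antipodal inequality is trivial, since $||2 v_i|| = 2\sqrt{7/5}$).

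Next I would bound the probability that a random pair violates this. Since $x_i$ and $x_j$ are independent uniform sign vectors, $H_{ij}$ is distributed as a $\mathrm{Binomial}(d, 1/2)$ random variable, which concentrates sharply around $d/2$. By Hoeffding's inequality, $\Pr[H_{ij} \leq d/140]$ and $\Pr[H_{ij} \geq 139 d/140]$ are each at most $e^{-2(69/140)^2 d} \leq e^{-0.48 d}$. There are $\binom{K}{2} < \frac{1}{2} e^{d/5}$ pairs and two constraints per pair, so a union bound shows that the probability that some constraint fails is at most $e^{d/5}\cdot e^{-0.48 d} = e^{-0.28 d} < 1$. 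Therefore a random draw satisfies all the required inequalities with positive probability, which establishes the existence of the desired set $V$.

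The only real content is the arithmetic in the last step: one must check that the Hoeffding exponent controlling the separation strictly exceeds the exponent $1/5$ coming from the $K^2 = e^{d/5}$ pairs. The choice $c = \sqrt{7/5}$ together with the target separation $1/5$ leaves comfortable room, since the deviation $\tfrac{1}{2} - \tfrac{1}{140}$ is nearly $\tfrac{1}{2}$, giving an exponent near $0.48 \gg 0.2$, so the union bound closes with slack. I expect this balancing of the packing size $e^{d/10}$ against the separation $1/5$ to be the main (and essentially the only) obstacle; everything else is a routine union-bound application of the probabilistic method. An alternative deterministic route would invoke the Gilbert--Varshamov bound to produce a binary code of rate exceeding $1/5$ whose pairwise Hamming distances avoid both $0$ and $d$, but the random construction above is shorter and yields all three properties at once.
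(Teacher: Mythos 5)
Your proposal is correct, and it reaches the lemma by a genuinely different route than the paper. The paper draws each $v_i$ independently from $\frac{1}{\sqrt{d}}\calN(0,I_d)$, observes that $\frac{d}{2}\,d(v_i,v_j)$ and $\frac{d}{2}\,d(v_i,-v_j)$ are then chi-squared with parameter $d$, and applies MGF-based chi-squared tail bounds (its Lemma~\ref{lem:chisq}): the lower tail $\Pr[\chi^2_d < d/10]\leq e^{-3d/10}$ beats the union bound over $2K^2 = 2e^{d/5}$ pair events, and a \emph{separate} union bound over the $K$ vectors, using $\Pr[\chi^2_d > 7d/5]\leq e^{-2d/15}$, is needed to get property (3), which therefore holds only with high probability and with $\|v_i\|$ fluctuating. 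Your hypercube construction $v_i = \frac{c}{\sqrt{d}}x_i$ with $x_i$ uniform in $\{-1,+1\}^d$ buys three simplifications: property (3) holds exactly and deterministically ($\|v_i\|=\sqrt{7/5}$ always, no union bound over vectors); both separation conditions collapse into one two-sided statement about a single $\mathrm{Binomial}(d,1/2)$ variable $H_{ij}$, handled by Hoeffding instead of chi-squared MGF estimates; and the diagonal antipodal case $i=j$ is dispatched trivially. Your exponent bookkeeping is right -- the identities $\|v_i-v_j\|^2 = \frac{4c^2}{d}H_{ij}$ and $\|v_i+v_j\|^2 = \frac{4c^2}{d}(d-H_{ij})$ reduce (1) to $\frac{d}{140}\leq H_{ij}\leq \frac{139d}{140}$, Hoeffding gives tails of $e^{-2(69/140)^2 d}\leq e^{-0.48d}$, and $e^{d/5}\cdot e^{-0.48d}<1$ closes the union bound with ample slack. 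One subtlety worth flagging: the paper's chi-squared computation actually establishes $\|v_i-v_j\|^2\geq 1/5$ (it implicitly reads $d(\cdot,\cdot)$ as \emph{squared} Euclidean distance), which is a stronger separation than your $\|v_i-v_j\|\geq 1/5$; but since Theorem~\ref{thm:lowermain} invokes Fano's Inequality (Theorem~\ref{thm:fano}) with the Euclidean metric $d(\mmu_1,\mmu_2)=\|\mmu_1-\mmu_2\|$ and only needs $d(\mmu_i,\mmu_j)\geq \|\mmu\|/5$, your weaker (and literal) reading of the statement suffices for every downstream use. Your closing remark is also apt: a Gilbert--Varshamov code of relative distance bounded away from $0$ and $1$ is exactly the derandomized form of your construction, whereas no comparably clean deterministic analogue exists for the paper's Gaussian draw.
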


\begin{theorem}[Fano's Inequality]
Consider a class of densities $F$, which contains $r$ densities $f_1,
\ldots, f_r$, corresponding to parameter values $\theta_1, \ldots,
\theta_r$. Let $d(\cdot)$ be any metric on
$\theta$, and let $\hat{\theta}$ be an estimate of $\theta$ from $n$
samples from a density $f$ in $F$. If, for all $i$ and $j$, $d(\theta_i, \theta_j) \geq \alpha$, and
$\kl(f_i, f_j) \leq \beta$, then,
$\max_{j} \bbE_j d(\hat{\theta}, \theta_j) \geq \frac{\alpha}{2} ( 1 -
\frac{n \beta + \log 2}{\log(r - 1)})$, where $\bbE_j$ denotes the
expectation with respect to distribution $j$.
\label{thm:fano}
\end{theorem}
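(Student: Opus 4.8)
The plan is to use the standard reduction from estimation to hypothesis testing, followed by the classical (discrete) Fano inequality that bounds the average testing error by the mutual information. First I would set up the testing problem: place the uniform prior on the index set $\{1,\ldots,r\}$, draw a random index $J$ uniformly, and then draw the $n$ samples i.i.d.\ from $f_J$. Given any estimator $\hat{\theta}$, I would define the induced test $\psi$ to output the index $j$ minimizing $d(\hat{\theta},\theta_j)$ (the nearest competitor, ties broken arbitrarily). The key geometric observation combines the triangle inequality with the separation hypothesis: if the true index is $i$ but $\psi$ outputs some $j\neq i$, then $d(\hat{\theta},\theta_j)\le d(\hat{\theta},\theta_i)$, so $\alpha \le d(\theta_i,\theta_j)\le d(\hat{\theta},\theta_i)+d(\hat{\theta},\theta_j)\le 2\,d(\hat{\theta},\theta_i)$, which forces $d(\hat{\theta},\theta_i)\ge \alpha/2$ whenever the test errs.

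Next I would convert this into an expectation bound. Under hypothesis $i$, Markov's inequality applied to the error event yields $\bbE_i\, d(\hat{\theta},\theta_i)\ge \frac{\alpha}{2}\Pr_i[\psi\neq i]$. Averaging over the uniform prior and bounding the average by the maximum gives $\max_j \bbE_j\, d(\hat{\theta},\theta_j)\ge \frac{1}{r}\sum_i \bbE_i\, d(\hat{\theta},\theta_i)\ge \frac{\alpha}{2}\,\bar{P}_e$, where $\bar{P}_e=\frac{1}{r}\sum_i \Pr_i[\psi\neq i]$ is the average probability of a testing error. It therefore suffices to prove $\bar{P}_e\ge 1-\frac{n\beta+\log 2}{\log(r-1)}$.

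This last bound is the information-theoretic core. I would apply the conditional form of the discrete Fano inequality, $H(J\mid X)\le H(\bar{P}_e)+\bar{P}_e\log(r-1)$, where $X$ denotes the full sample of $n$ points; since the prior is uniform, $H(J\mid X)=\log r - I(J;X)$ and $H(\bar{P}_e)\le \log 2$, so that $\bar{P}_e\ge \frac{\log r - I(J;X)-\log 2}{\log(r-1)}\ge 1-\frac{I(J;X)+\log 2}{\log(r-1)}$. It then remains to control $I(J;X)$. Writing $I(J;X)=\frac{1}{r}\sum_j \kl(P_{X\mid J=j},P_X)$ and using that this is at most the average of all pairwise divergences (by convexity of $\kl$ in its second argument), together with the tensorization identity $\kl(f_i^{\otimes n},f_j^{\otimes n})=n\,\kl(f_i,f_j)$ for i.i.d.\ samples and the hypothesis $\kl(f_i,f_j)\le\beta$, I would conclude $I(J;X)\le n\beta$. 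Substituting yields $\bar{P}_e\ge 1-\frac{n\beta+\log 2}{\log(r-1)}$, which combined with the previous paragraph completes the proof.

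The main obstacle is the information-theoretic core rather than the geometric reduction: one must use the conditional form of Fano's inequality to obtain the sharper $\log(r-1)$ denominator (the cruder bound $H(J\mid X)\le \log 2+\bar{P}_e\log r$ would not match the stated constant), and one must bound the mutual information by the pairwise KL divergences via convexity. The remaining ingredients, namely the triangle-inequality reduction from risk to testing error and the tensorization of $\kl$ over i.i.d.\ samples, are routine.
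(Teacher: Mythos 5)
Your proof is correct, but there is nothing in the paper to compare it against: the paper states this theorem as a quoted tool, deferring its proof entirely to the cited references \cite{CT90, Y97}, and never proves it internally. Your argument --- the triangle-inequality reduction from estimation risk to nearest-point testing (giving $d(\hat{\theta},\theta_i)\geq \alpha/2$ on the error event), Markov's inequality, the conditional discrete Fano bound $H(J\mid X)\leq H(\bar{P}_e)+\bar{P}_e\log(r-1)$ yielding the sharper $\log(r-1)$ denominator, and the bound $I(J;X)\leq n\beta$ via convexity of $\kl$ in its second argument together with tensorization over i.i.d.\ samples --- is precisely the standard proof of the generalized Fano inequality found in those references, so it supplies correctly the derivation the paper implicitly relies on.
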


\begin{proof}(Of Theorem~\ref{thm:lowermain})
We apply Fano's Inequality. Our class of densities $F$ is the
class of all mixtures of the form $\frac{1}{2} \calN(\mmu', I_d) +
\frac{1}{2} \calN(-\mmu', I_d)$. We set the parameter $\theta = \mmu'$, and
$d(\mmu_1, \mmu_2) = ||\mmu_1 - \mmu_2||$. 
We construct a subclass $\calF = \{f_1, \ldots, f_r \}$ of $F$ as follows. We
set each $f_i = \frac{1}{2} \calN(||\mmu|| v_i, I_d) + \frac{1}{2}
\calN(-||\mmu||v_i, I_d)$, for each vector $v_i$ in $V$ in
Lemma~\ref{lem:spherepacking}. Notice that now $r = e^{d/10}$. Moreover,
for each pair $i$ and $j$, from Lemma~\ref{lem:kllower} and
Lemma~\ref{lem:spherepacking}, $\kl(f_i, f_j)
\leq C_1 ||\mmu||^2 + C_2$, for constants $C_1$ and $C_2$. Finally, from
Lemma~\ref{lem:spherepacking}, for each pair $i$ and $j$, $d(\mmu_i,
\mmu_j) \geq \frac{||\mmu||}{5}$. The Theorem now follows by an application
of Fano's Inequality~\ref{thm:fano}.
\end{proof}

\section{More General $k$-means}\label{sec:genkinf}

In this section, we show that when we apply $2$-means on an input generated
by a mixture of $k$ spherical Gaussians, the normal to the hyperplane which
partitions the two clusters in the $2$-means algorithm, converges to a
vector in the subspace $\Mu$ containing the means of mixture components.
We assume that our input is generated by a mixture of $k$ spherical Gaussians, with means $\mu^j$, variances $(\sigma^j)^2$, $j = 1, \ldots, k$, and mixing weights $\rho^1, \ldots, \rho^k$.
The mixture is centered at the origin such that $\sum \rho^j \mu^j = 0$.
We use $\Mu$ to denote the subspace containing the means $\mu^1, \ldots,
\mu^k$. We use Algorithm~\imeans\ on this input, and our goal is to show that it
still converges to a vector in $\Mu$.

\noindent{\textbf{Notation.}} In the sequel, given a vector $x$ and a subspace $W$, we define the angle
between $x$ and $W$ as the angle between $x$ and the projection of $x$ onto
$W$. We examine the angle $\theta_{t}$, between
$u_t$ and $\Mu$, and our goal is to show that the cosine of this angle
grows as $t$ increases. Our main result of this section is
Lemma~\ref{lem:kexpr1}, which exactly defines the behavior of \imeans\ on a mixture of $k$ spherical Gaussians. 
Recall that at time $t$, we use $\hu_t$ to partition the input data, and the projection of $\hu_t$ along $\Mu$ is $\cos(\theta_t)$ by definition. 
Let $b^1_t$ be a unit vector lying in the subspace $\Mu$ such that: $\hu_t = \cos(\theta_t) b^1_t + \sin(\theta_t) v_t $,
where $v_t$ lies in the orthogonal complement of $\Mu$, and has norm $1$. We define a second vector $\hup_t$ as follows: $\hup_t = \sin(\theta_t) b^1_t - \cos(\theta_t) v_t $. We observe that $\dot{\hu_t}{\hup_t} = 0$, $||\hup_t|| = 1$, and the projection of $\hup_t$
on $\Mu$ is $\sin(\theta_t) b^1_t$.We now extend the set $\{b^1_t\}$ to complete an orthonormal basis $\calB = \{b^1_t, \ldots,
b^{k-1}_t\}$ of $\Mu$. We also observe that $\{b^2_t, \ldots, b^{k-1}_t, \hu_t, \hup_t \}$ is an orthonormal basis of the subspace spanned by any basis of $\Mu$, along
with $v_t$, and can be extended to a basis of $\bbR^d$.

For $j = 1, \ldots, k$, we define $\tau^j_t$ as follows: $\tau_j^t = \dot{\mu^j}{\hu_t} = \cos(\theta_t) \dot{\mu^j}{b^1_t}$.
Finally we (re)-define the quantity $\xi_t$, and define $m_t^l$, for $l=1, \ldots, k-1$ as
\[ \xi_t = \sum_j \rho^j \sigma^j  \frac{e^{-(\tau_t^j)^2/2(\sigma^j)^2}}{\sqrt{2 \pi}}, ~~~~~\m_t^l = \sum_j \rho^j \Phi(-\frac{\tau_t^j}{\sigma^j},\infty) \dot{\mu^j}{b^l_t} \]
Our main lemma is stated below. The proof is in the Appendix.

\begin{lemma}\label{lem:kexpr1}
At any iteration $t$ of Algorithm \imeans,
\[ 
\cos^2(\theta_{t+1})  = 
\cos^2(\theta_t) \left(1 + \tan^2(\theta_t)\frac{ 2 \cos(\theta_t) \xi_t \m_t^1 + \sum_l (\m_t^l)^2 }{ \xi_t^2 + 2 \cos(\theta_t) \xi_t \m_t^1 + \sum_l (\m_t^l)^2 } \right)
\]
\end{lemma}

\bibliography{kmeans} 
\bibliographystyle{alpha}
\section*{Appendix}

\subsection{Proof of Lemma~\ref{lem:expr1}}
In this section, we prove Lemma~\ref{lem:expr1}. First, we need some additional notation. 

\medskip \noindent \textbf{Notation.} We define, for $j=1,2$:
\begin{eqnarray*}
w^j_{t+1} & = & \Pr[x \sim D_j | x \in C_{t+1}] \\
u^j_{t+1} & = & \E[ x | x \sim D_j, x \in C_{t+1}]
\end{eqnarray*}
We observe that $u_{t+1}$ now can be written as:
\[ u_{t+1} =w^1_{t+1} u^1_{t+1} +w^2_{t+1} u^2_{t+1} \]
Moreover, we define $Z_{t+1} = \Pr[x \in C_{t+1}]$.

\medskip \noindent {\textbf{Proof of Lemma~\ref{lem:expr1}.}} We start by providing exact expressions for $w^1_{t+1}$ and $w^2_{t+1}$
with respect to the partition computed in the previous round $t$. These are used to
compute the projections of $u_{t+1}$ along the vectors $\hu_t$ and
$\mu_1-\dot{\mu_1}{\hu_t}\hu_t$, which finally leads to a proof of Lemma~\ref{lem:expr1}.
\begin{lemma}\label{lem:wts12}
In round $t$, for $j=1,2$,
$w^j_{t+1} = \frac{ \rho^j \Phi(-\frac{\tau_t^j}{\sigma^j}, \infty)}{Z_{t+1}}$.
\end{lemma}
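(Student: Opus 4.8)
The plan is to apply Bayes' rule to unfold the posterior weight $w^j_{t+1}$ and then collapse the $d$-dimensional Gaussian tail probability to a one-dimensional computation using the rotational symmetry of a spherical Gaussian. First I would write, for $j = 1, 2$,
\[ w^j_{t+1} = \Pr[x \sim D_j \mid x \in C_{t+1}] = \frac{\Pr[x \in C_{t+1} \mid x \sim D_j] \cdot \Pr[x \sim D_j]}{\Pr[x \in C_{t+1}]} = \frac{\rho^j \, \Pr[x \in C_{t+1} \mid x \sim D_j]}{Z_{t+1}}, \]
using $\Pr[x \sim D_j] = \rho^j$ and the definition $Z_{t+1} = \Pr[x \in C_{t+1}]$. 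Thus the entire content of the lemma reduces to establishing that $\Pr[x \in C_{t+1} \mid x \sim D_j] = \Phi(-\tau_t^j/\sigma^j, \infty)$.

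Next I would reduce this conditional probability to a scalar normal tail. By the definition of $C_{t+1}$, membership $x \in C_{t+1}$ is equivalent to $\dot{x}{u_t} > 0$, which in turn is equivalent to $\dot{x}{\hu_t} > 0$ since $u_t$ is a positive scalar multiple of its unit vector $\hu_t$. Conditioned on $x \sim D_j = N(\mu^j, (\sigma^j)^2 I_d)$, the scalar projection $\dot{x}{\hu_t}$ is a one-dimensional Gaussian with mean $\dot{\mu^j}{\hu_t} = \tau_t^j$ and variance $(\sigma^j)^2 \|\hu_t\|^2 = (\sigma^j)^2$, where I use that $\hu_t$ has unit norm. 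Standardizing by setting $g = (\dot{x}{\hu_t} - \tau_t^j)/\sigma^j$, which is a standard normal, the event $\dot{x}{\hu_t} > 0$ becomes $g > -\tau_t^j/\sigma^j$, so $\Pr[x \in C_{t+1} \mid x \sim D_j] = \Pr[g > -\tau_t^j/\sigma^j] = \Phi(-\tau_t^j/\sigma^j, \infty)$. Substituting this into the Bayes expression yields the claim.

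There is no genuine obstacle here; the only point requiring a moment of care is the passage from the $d$-dimensional separating-hyperplane event to a one-dimensional tail. This relies on two facts: that thresholding $\dot{x}{u_t}$ is identical to thresholding $\dot{x}{\hu_t}$, and that projecting a spherical Gaussian onto a unit vector yields a scalar distributed as $N(\tau_t^j, (\sigma^j)^2)$ independent of the ambient dimension $d$. Both follow immediately from the rotational invariance of $N(\mu^j, (\sigma^j)^2 I_d)$, so the lemma is a direct calculation once the Bayes decomposition is in place.
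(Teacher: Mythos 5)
Your proposal is correct and follows essentially the same route as the paper's proof: Bayes' rule to reduce the claim to $\Pr[x \in C_{t+1} \mid x \sim D_j]$, then the observation that $\dot{x}{\hu_t}$ is distributed as $N(\tau_t^j, (\sigma^j)^2)$ by rotational invariance, giving the tail probability $\Phi(-\tau_t^j/\sigma^j, \infty)$. Your explicit standardization step and the remark that thresholding $\dot{x}{u_t}$ equals thresholding $\dot{x}{\hu_t}$ are minor elaborations of details the paper leaves implicit.
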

\begin{proof}
We can write:
\[ w^j_{t+1} = \frac{ \Pr[x \in C_{t+1} | x \sim D_j] \Pr[x \sim D_j]}{
\Pr[x \in C_{t+1}]} \]
We note that $\Pr[x \sim D_j] = \rho^j$, and $\Pr[x \in C_{t+1}] = Z_{t+1}$.

As $D_j$ is a spherical Gaussian, for any $x$ generated from $D_j$, and for any vector $y$ orthogonal to $u_t$,
$\dot{y}{x}$ is distributed independently from $\dot{\hu_t}{x}$. Moreover, we
observe that $\dot{\hu_t}{x}$ is distributed as a Gaussian with mean
$\dot{\mu^j}{\hu_t} = \tau_t^j$ and standard deviation $\sigma^j$. Therefore,
\[ \Pr[x \in C_{t+1} | x \sim D_j] = \Pr_{x \sim D_j}[\dot{\hu_t}{x} > 0] = \Pr[N(\tau_t^j, \sigma^j) \geq 0] =
\Phi(-\frac{\tau_t^j}{\sigma^j}, \infty) \]
from which the lemma follows. 
\end{proof}

\begin{lemma}\label{lem:expr2}
For any $t$, $\dot{u_{t+1}}{\hu_t}  =  \frac{\xi_t + \m_t\cos(\theta_t)}{Z_{t+1}}$.
\end{lemma}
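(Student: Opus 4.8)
The plan is to compute the projection $\dot{u_{t+1}}{\hu_t}$ directly, using the decomposition of $u_{t+1}$ into per-component contributions together with the weight formula from Lemma~\ref{lem:wts12}. Starting from $u_{t+1} = w^1_{t+1} u^1_{t+1} + w^2_{t+1} u^2_{t+1}$ and substituting $w^j_{t+1} = \rho^j \Phi(-\tfrac{\tau_t^j}{\sigma^j}, \infty)/Z_{t+1}$, I would first clear the denominator and write $Z_{t+1}\, u_{t+1} = \sum_j \rho^j \Phi(-\tfrac{\tau_t^j}{\sigma^j},\infty)\, u^j_{t+1}$. Since $u^j_{t+1} = \E[x \mid x \sim D_j, x \in C_{t+1}]$ and $\Phi(-\tfrac{\tau_t^j}{\sigma^j},\infty) = \Pr[x \in C_{t+1} \mid x \sim D_j]$, each term is exactly the truncated first moment $\rho^j \E[x\, 1_{x \in C_{t+1}} \mid x \sim D_j]$. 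Taking the inner product with $\hu_t$ then reduces the whole problem to evaluating, for each $j$, the one-dimensional quantity $\E[\dot{x}{\hu_t}\, 1_{x \in C_{t+1}} \mid x \sim D_j]$.

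The key computation is this one-dimensional integral. For $x \sim D_j$, the scalar $Y = \dot{x}{\hu_t}$ is Gaussian with mean $\dot{\mu^j}{\hu_t} = \tau_t^j$ and standard deviation $\sigma^j$ (this is the same observation used in the proof of Lemma~\ref{lem:wts12}, since $\hu_t$ is a unit vector and $D_j$ is spherical), and the event $x \in C_{t+1}$ is precisely $\{Y > 0\}$. So I would compute $\E[Y\, 1_{Y>0}]$ by the standard substitution $z = (Y - \tau_t^j)/\sigma^j$, splitting the integrand $\tau_t^j + \sigma^j z$ into two pieces: the constant piece integrates to $\tau_t^j\, \Phi(-\tfrac{\tau_t^j}{\sigma^j},\infty)$, and the linear piece $\sigma^j z$ integrates against $\tfrac{1}{\sqrt{2\pi}}e^{-z^2/2}$ over $(-\tau_t^j/\sigma^j,\infty)$ to give $\sigma^j \tfrac{1}{\sqrt{2\pi}} e^{-(\tau_t^j)^2/2(\sigma^j)^2}$, using that $-e^{-z^2/2}$ is an antiderivative of $z e^{-z^2/2}$. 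Hence $\E[Y\, 1_{Y>0}] = \tau_t^j\,\Phi(-\tfrac{\tau_t^j}{\sigma^j},\infty) + \sigma^j \tfrac{1}{\sqrt{2\pi}} e^{-(\tau_t^j)^2/2(\sigma^j)^2}$.

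Finally I would assemble the pieces: summing $\rho^j \E[Y\,1_{Y>0}]$ over $j$ yields $Z_{t+1}\dot{u_{t+1}}{\hu_t} = \sum_j \rho^j \tau_t^j \Phi(-\tfrac{\tau_t^j}{\sigma^j},\infty) + \sum_j \rho^j \sigma^j \tfrac{1}{\sqrt{2\pi}} e^{-(\tau_t^j)^2/2(\sigma^j)^2}$. The second sum is exactly $\xi_t$ by definition, and using the identity $\tau_t^j = \dot{\mu^j}{b}\cos(\theta_t)$ to pull out $\cos(\theta_t)$, the first sum becomes $\cos(\theta_t)\sum_j \rho^j \dot{\mu^j}{b}\,\Phi(-\tfrac{\tau_t^j}{\sigma^j},\infty) = \m_t \cos(\theta_t)$. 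Dividing by $Z_{t+1}$ gives the claimed $\dot{u_{t+1}}{\hu_t} = \tfrac{\xi_t + \m_t\cos(\theta_t)}{Z_{t+1}}$. There is no serious obstacle here; the only step requiring care is the truncated-Gaussian moment integral, where one must keep the two resulting terms straight so that they line up precisely with the definitions of $\xi_t$ (the density/exponential term) and $\m_t$ (the tail-probability term), and correctly factor out $\cos(\theta_t)$ via the relation $\tau_t^j = \dot{\mu^j}{b}\cos(\theta_t)$.
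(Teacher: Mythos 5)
Your proposal is correct and follows essentially the same route as the paper: decompose $u_{t+1} = \sum_j w^j_{t+1} u^j_{t+1}$ via Lemma~\ref{lem:wts12}, evaluate the truncated first moment of the projected Gaussian by splitting off the mean $\tau_t^j$ (the paper integrates $(y-\tau_t^j)e^{-(y-\tau_t^j)^2/2(\sigma^j)^2}$ directly, which is the same computation as your substitution $z=(Y-\tau_t^j)/\sigma^j$), and match the two resulting sums to $\xi_t$ and $\m_t\cos(\theta_t)$ using $\tau_t^j = \dot{\mu^j}{b}\cos(\theta_t)$. No gaps; the argument is complete as written.
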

\begin{proof}
Consider a sample $x$ drawn from $D_j$. Then, $\dot{x}{\hu_t}$ is
distributed as a Gaussian with mean $\dot{\mu^j}{\hu_t} = \tau_t^j$ and
standard deviation $\sigma^j$. 
We recall that $\Pr[x \in C_{t+1}] = Z_{t+1}$. Therefore,  $\dot{u^{j}_{t+1}}{\hu_t}$ is equal to:  
\[ \frac{\bbE[x, x\in C_{t+1} | x \sim D_j]}{\Pr[x \in C_{t+1}| x \sim D_j]} = \frac{1}{\Pr[N(\tau_t^j, \sigma^j) > 0]} \cdot \int_{y=0}^{\infty}\frac{y e^{-(y - \tau_t^j)^2/2 (\sigma^j)^2}}{\sigma^j \sqrt{2 \pi}} dy \]
which is, again, equal to:
\begin{eqnarray*}
&& \frac{1}{ \Phi(-\frac{\tau_t^j}{\sigma^j}, \infty)} \left( \tau_t^j \int_{y=0}^{\infty} \frac{e^{-(y - \tau_t^j)^2/2 (\sigma^j)^2}}{\sigma^j\sqrt{2 \pi}} dy + \int_{y=0}^{\infty} \frac{(y - \tau_t^j)e^{-(y - \tau_t^j)^2/2 (\sigma^j)^2}}{\sigma^j\sqrt{2 \pi}} dy \right)\\
& = &\frac{1}{ \Phi(-\frac{\tau_t^j}{\sigma^j}, \infty)} \left( \tau_t^j \Phi(-\frac{\tau_t^j}{\sigma^j}, \infty) + \int_{y=0}^{\infty} \frac{(y - \tau_t^j)e^{-(y - \tau_t^j)^2/2 (\sigma^j)^2}}{\sigma^j\sqrt{2 \pi}} dy \right)
\end{eqnarray*}
We can compute the integral in the equation above as follows.
\begin{eqnarray*}
\int_{y = 0}^{\infty} (y-\tau_t^j)e^{-(y - \tau_t^j)^2/2(\sigma^j)^2} dy = (\sigma^j)^2 \int_{z = (\tau_t^j)^2/2(\sigma^j)^2}^{\infty} e^{-z}dz =  (\sigma^j)^2 e^{-(\tau_t^j)^2/2 (\sigma^j)^2}
\end{eqnarray*}

We can now compute $\dot{u_{t+1}}{\hu_t}$ as follows.

\begin{eqnarray*}
\dot{u_{t+1}}{\hu_t} = w^1_{t+1} \dot{u^1_{t+1}}{\hu_t} +  w^2_{t+1}
\dot{u^2_{t+1}}{\hu_t} =  \frac{1}{Z_{t+1}} \cdot \sum_j \left( \rho^j \tau_t^j \Phi(-\frac{\tau_t^j}{\sigma^j}, \infty) + \rho^j (\sigma^j)^2 \frac{e^{-(\tau_t^j)^2/2 (\sigma^j)^2}}{\sigma^j\sqrt{2 \pi}} \right)
\end{eqnarray*}

The lemma follows by recalling $\tau_t^j = \dot{\mu^j}{b}\cos(\theta_t)$ and plugging in the values of $m_t$ and $\xi_t$.

\end{proof}

\begin{lemma}\label{lem:expr3}
Let $\hv_t$ be a unit vector along $\mu_1 - \dot{\mu_1}{\hu_t}\hu_t$. Then, $\dot{u_{t+1}}{\hv_t} = \frac{\m_t\sin(\theta_t)}{Z_{t+1}}$.
In addition, for any vector $z$ orthogonal to $\hu_t$ and $\hv_t$,
$\dot{u_{t+1}}{z} = 0$.
\end{lemma}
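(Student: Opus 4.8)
The plan is to mirror the computation in Lemma~\ref{lem:expr2}, but to project onto the perpendicular direction $\hv_t$ (and onto a generic $z\perp\{\hu_t,\hv_t\}$) rather than onto $\hu_t$ itself. As before, I would start from the decomposition $u_{t+1}=w^1_{t+1}u^1_{t+1}+w^2_{t+1}u^2_{t+1}$ and substitute the weights $w^j_{t+1}=\rho^j\Phi(-\tau_t^j/\sigma^j,\infty)/Z_{t+1}$ supplied by Lemma~\ref{lem:wts12}. The whole claim then reduces to evaluating the conditional projections $\dot{u^j_{t+1}}{\hv_t}$ and $\dot{u^j_{t+1}}{z}$.

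The key observation, and what makes these projections simpler than the one along $\hu_t$, is that both $\hv_t$ and any $z\perp\{\hu_t,\hv_t\}$ are orthogonal to $\hu_t$. Since $D_j$ is spherical, for $x\sim D_j$ the coordinate $\dot{x}{\hv_t}$ is distributed independently of $\dot{x}{\hu_t}$, while the membership event $x\in C_{t+1}$ depends only on the sign of $\dot{x}{\hu_t}$. Hence conditioning on $x\in C_{t+1}$ does not alter the distribution of $\dot{x}{\hv_t}$, so $\dot{u^j_{t+1}}{\hv_t}=\bbE[\dot{x}{\hv_t}\mid x\sim D_j]=\dot{\mu^j}{\hv_t}$, and likewise $\dot{u^j_{t+1}}{z}=\dot{\mu^j}{z}$. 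This is exactly the independence fact already invoked in Lemma~\ref{lem:wts12}, so no new integral is needed; the single Gaussian integral appearing in Lemma~\ref{lem:expr2} was produced by projecting along the separating direction and simply does not arise here.

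It then remains to evaluate $\dot{\mu^j}{\hv_t}$. Here I would use the collinearity of the means: since $\rho^1\mu^1+\rho^2\mu^2=0$, both $\mu^1$ and $\mu^2$ lie along $b$, so $\mu^j=\dot{\mu^j}{b}\,b$. Writing $b=\cos(\theta_t)\hu_t+\sin(\theta_t)\hv_t$ (which is precisely how $\hv_t$ is defined, giving $\dot{b}{\hv_t}=\sin\theta_t$), I obtain $\dot{\mu^j}{\hv_t}=\dot{\mu^j}{b}\sin(\theta_t)$. Summing over $j$ against the weights $w^j_{t+1}$ and recognizing the definition $\m_t=\sum_j\rho^j\dot{\mu^j}{b}\,\Phi(-\tau_t^j/\sigma^j,\infty)$ yields $\dot{u_{t+1}}{\hv_t}=\m_t\sin(\theta_t)/Z_{t+1}$. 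For the orthogonal part, the same collinearity gives $\mu^j\in\mathrm{span}\{b\}\subseteq\mathrm{span}\{\hu_t,\hv_t\}$, so any $z\perp\{\hu_t,\hv_t\}$ forces $\dot{\mu^j}{z}=0$ and hence $\dot{u_{t+1}}{z}=0$.

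I do not expect a genuine obstacle: the statement is essentially a clean consequence of the sphericity-induced independence combined with the two means being collinear with $b$. The only point requiring care is making the independence argument precise, namely justifying that conditioning on a halfspace cut by $\hu_t$ leaves the mean of an orthogonal coordinate unchanged; this is routine for spherical Gaussians and runs exactly parallel to the reasoning in Lemma~\ref{lem:wts12}.
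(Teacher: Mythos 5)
Your proposal is correct and takes essentially the same route as the paper's own proof: both combine the weights from Lemma~\ref{lem:wts12} with the observation that, for a spherical Gaussian, conditioning on the halfspace $\dot{x}{\hu_t}>0$ leaves any coordinate orthogonal to $\hu_t$ distributed with unchanged mean $\dot{\mu^j}{\hv_t}$, and both deduce the second claim from the fact that each $\mu^j$ lies in $\mathrm{span}\{\hu_t,\hv_t\}$. Your explicit appeal to collinearity of the means via $\rho^1\mu^1+\rho^2\mu^2=0$, giving $\dot{\mu^j}{\hv_t}=\dot{\mu^j}{b}\sin(\theta_t)$, merely spells out a detail the paper leaves implicit in its definition of $\m_t$.
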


\begin{proof}
We observe that for a sample $x$ drawn from distribution $D_1$
(respectively, $D_2$) and any unit vector $v_1$, orthogonal to $\hu_t$, 
$\dot{x}{v_1}$ is distributed as a Gaussian with mean $\dot{\mu^1}{v_1}$
($\dot{\mu^2}{v_1}$, respectively) and standard deviation $\sigma^1$ (resp. $\sigma^2$).
Therefore, 
the projection of $u_{t+1}$ on $\hv_t$ can be written as:
\begin{eqnarray*}
\dot{u_{t+1}}{\hv_t} & = & \sum_j w^j_{t+1} \dot{\mu^j}{\hv_t} = \frac{1}{Z_{t+1}} \sum_j \rho^j \Phi(-\frac{\tau_t^j}{\sigma^j}, \infty) \dot{\mu^j}{\hv_t} 
\end{eqnarray*} 

from which the first part of the lemma follows.

The second part of the lemma follows from the observation that for any
vector $z$ orthogonal to $\hu_t$ and $\hv_t$, $\dot{\mu^j}{z} = 0$, for $j = 1,2$.
\end{proof}

\begin{lemma}\label{lem:expr4}
For any $t$,
\begin{eqnarray*}
\dot{u_{t+1}}{\mu^1} & = & \frac{||\mu^1|| (\xi_t\cos(\theta_t) + \m_t)}{Z_{t+1}}\\
||u_{t+1}||^2 & = & \frac{\xi_t^2  + \m_t^2 + 2\xi_t\m_t\cos(\theta_t)}{(Z_{t+1})^2}
\end{eqnarray*}
\end{lemma}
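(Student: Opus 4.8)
The plan is to exploit the fact that, by the second assertion of Lemma~\ref{lem:expr3}, the center $u_{t+1}$ has no component orthogonal to the two-dimensional plane $P$ spanned by $\hu_t$ and $\hv_t$. Since $\hv_t$ is by construction the unit vector along $\mu^1 - \dot{\mu^1}{\hu_t}\hu_t$, the pair $\{\hu_t,\hv_t\}$ is an orthonormal basis of $P$, and crucially $\mu^1$ itself lies in $P$. This reduces both identities to bookkeeping in this orthonormal basis, using the projections already computed in Lemmas~\ref{lem:expr2} and~\ref{lem:expr3}.

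First I would record the coordinates of $\mu^1$ in the basis $\{\hu_t,\hv_t\}$. From the definitions, $\dot{\mu^1}{\hu_t} = \tau_t^1 = ||\mu^1||\cos(\theta_t)$, and since $\hv_t$ points along the component of $\mu^1$ orthogonal to $\hu_t$, whose length is $||\mu^1||\sin(\theta_t)$ (as noted in the figure caption), we get $\dot{\mu^1}{\hv_t} = ||\mu^1||\sin(\theta_t)$. Hence $\mu^1 = ||\mu^1||\cos(\theta_t)\,\hu_t + ||\mu^1||\sin(\theta_t)\,\hv_t$.

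For the first identity I would then write $\dot{u_{t+1}}{\mu^1} = ||\mu^1||\cos(\theta_t)\dot{u_{t+1}}{\hu_t} + ||\mu^1||\sin(\theta_t)\dot{u_{t+1}}{\hv_t}$, substitute the values $\frac{\xi_t + \m_t\cos(\theta_t)}{Z_{t+1}}$ and $\frac{\m_t\sin(\theta_t)}{Z_{t+1}}$ from Lemmas~\ref{lem:expr2} and~\ref{lem:expr3}, and simplify; the cross terms collapse via $\m_t\cos^2(\theta_t) + \m_t\sin^2(\theta_t) = \m_t$, yielding $\frac{||\mu^1||(\xi_t\cos(\theta_t) + \m_t)}{Z_{t+1}}$. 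For the second identity, since $u_{t+1}$ lies entirely in $P$, Pythagoras gives $||u_{t+1}||^2 = \dot{u_{t+1}}{\hu_t}^2 + \dot{u_{t+1}}{\hv_t}^2$; expanding $(\xi_t + \m_t\cos(\theta_t))^2 + (\m_t\sin(\theta_t))^2$ and again collapsing $\m_t^2\cos^2(\theta_t) + \m_t^2\sin^2(\theta_t) = \m_t^2$ produces $\frac{\xi_t^2 + \m_t^2 + 2\xi_t\m_t\cos(\theta_t)}{(Z_{t+1})^2}$.

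This is essentially a direct assembly of the preceding two lemmas, so there is no serious obstacle; the only point that genuinely requires the earlier work is the claim that $u_{t+1}$ has no mass outside $P$, which is exactly the second assertion of Lemma~\ref{lem:expr3} and is what licenses the clean Pythagorean expression for $||u_{t+1}||^2$.
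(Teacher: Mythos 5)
Your proposal is correct and follows essentially the same route as the paper: decompose $\mu^1$ and $u_{t+1}$ in the orthonormal basis $\{\hu_t,\hv_t\}$, substitute the projections from Lemmas~\ref{lem:expr2} and~\ref{lem:expr3}, and simplify via $\cos^2(\theta_t)+\sin^2(\theta_t)=1$. If anything, your use of the second assertion of Lemma~\ref{lem:expr3} to justify the Pythagorean identity is slightly more explicit than the paper's informal remark that ``$\theta_{t+1}$ lies on the same plane as $\theta_t$.''
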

\begin{proof}
As we have an infinite number of samples, $\theta_{t+1}$ lies on the
same plane as $\theta_t$. Therefore, we can write
$\dot{u_{t+1}}{\mu^1} =  \dot{u_{t+1}}{\hu_t} \dot{\mu^1}{\hu_t} + \dot{u_{t+1}}{\hv_t} \dot{\mu^1}{ \hv_t}$. 
Moreover, we can write $||u_{t+1}||^2 = \dot{u_{t+1}}{\hu_t}^2 + \dot{u_{t+1}}{\hv_t}^2$.
Thus, the first two equation follow by using Lemma \ref{lem:expr2} and
\ref{lem:expr3}, and recalling that $\dot{\mu^1}{\hu_t} = \tau_t^1 = ||\mu^1||\cos(\theta_t)$ and 
$\dot{\mu^1}{\hv_t}=||\mu^1||\sin(\theta_t)$.
\end{proof}

We are now ready to complete the proof of Lemma \ref{lem:expr1}. 
\begin{proof}(Of Lemma \ref{lem:expr1})
By definition of $\theta_{t+1}$, $\cos^2(\theta_{t+1}) = \frac{\dot{u_{t+1}}{\mu^1}^2}{||u_{t+1}||^2 ||\mu^1||^2}$. Therefore,
\begin{eqnarray*}
||\mu^1||^2 \cos^2  (\theta_{t+1}) & = &  \frac{\dot{u_{t+1}}{\mu^1}^2}{||u_{t+1}||^2}\\
& = &  (\tau_t^1)^2 \left(1+\frac{\dot{u_{t+1}}{\mu^1}^2-||\mu^1||^2\cos^2(\theta_t)||u_{t+1}||^2}{||\mu^1||^2\cos^2(\theta_t)||u_{t+1}||^2}\right)\\
& = & (\tau_t^1)^2 \left(1 + \frac{||\mu^1||^2\sin^2(\theta_t) (\m_t^2+2\xi_t\m_t\cos(\theta_t))}{||\mu^1||^2\cos^2(\theta_t)||u_{t+1}||^2}\right)\\
& = & ||\mu^1||^2\cos^2(\theta_t) \left(1 + \tan^2(\theta_t)\frac{\m_t^2+2\xi_t\m_t\cos(\theta_t)}{||u_{t+1}||^2}\right)
\end{eqnarray*}
where we used Lemma~\ref{lem:expr4} and the observation that $\cos(\theta_t) = \frac{\tau_t^1}{||\mu^1||}$.
The Lemma follows by replacing $||u_{t+1}||^2$ using the expression in Lemma~\ref{lem:expr4}.
\end{proof}

The next Lemma helps us to derive Theorem~\ref{thm:smallmu} from Lemma~\ref{lem:expr1}. 
It shows how to approximate $\Phi(-\tau, \tau)$ when $\tau$ is small.

\begin{lemma}\label{lem:smalltau}
Let $\tau \leq \sqrt{ \ln \frac{9}{2 \pi}}$. Then, $\frac{5}{3 \sqrt{2 \pi}} \tau \leq \Phi(-\tau, \tau) \leq \frac{2}{\sqrt{2
\pi}} \tau$. In addition, $\frac{2e^{-\tau^2/2}}{\sqrt{2 \pi}} \geq
\frac{2}{3}$.
\end{lemma}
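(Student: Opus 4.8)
The plan is to reduce both statements to a single elementary consequence of the hypothesis together with monotonicity of the Gaussian density. First I would use the symmetry of the standard normal density to write $\Phi(-\tau,\tau) = \frac{2}{\sqrt{2\pi}}\int_0^\tau e^{-x^2/2}\,dx$, which turns the two-sided bound on $\Phi(-\tau,\tau)$ into a two-sided bound on $\int_0^\tau e^{-x^2/2}\,dx$. The upper bound is then immediate: since $e^{-x^2/2}\leq 1$ on $[0,\tau]$, the integral is at most $\tau$, giving $\Phi(-\tau,\tau)\leq \frac{2}{\sqrt{2\pi}}\tau$.

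The lower bound and the final claim both rest on a single estimate extracted from the hypothesis $\tau \leq \sqrt{\ln\frac{9}{2\pi}}$. Squaring and exponentiating yields $e^{-\tau^2/2}\geq (\tfrac{9}{2\pi})^{-1/2} = \frac{\sqrt{2\pi}}{3}$, and since $\sqrt{2\pi}\geq \tfrac{5}{2}$ this gives the chain $e^{-\tau^2/2}\geq \frac{\sqrt{2\pi}}{3}\geq \frac{5}{6}$. The ``in addition'' part drops out at once, because $\frac{2e^{-\tau^2/2}}{\sqrt{2\pi}}\geq \frac{2}{\sqrt{2\pi}}\cdot\frac{\sqrt{2\pi}}{3}=\frac{2}{3}$.

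For the lower bound on $\Phi(-\tau,\tau)$ I would use that $e^{-x^2/2}$ is decreasing on $[0,\tau]$, hence bounded below on the interval by its value at the right endpoint: $\int_0^\tau e^{-x^2/2}\,dx \geq \tau\, e^{-\tau^2/2}\geq \frac{5}{6}\tau$. Multiplying by $\frac{2}{\sqrt{2\pi}}$ gives $\Phi(-\tau,\tau)\geq \frac{5}{3\sqrt{2\pi}}\tau$, which completes the proof.

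There is no real obstacle here beyond the sharp numerical check $\sqrt{2\pi}\geq \tfrac{5}{2}$: the threshold $\sqrt{\ln\frac{9}{2\pi}}$ is evidently chosen precisely so that the crude right-endpoint bound on the integral is just strong enough. I would therefore record the inequality chain $e^{-\tau^2/2}\geq \frac{\sqrt{2\pi}}{3}\geq \frac{5}{6}$ explicitly, since the argument is tight and would fail for a looser endpoint constant.
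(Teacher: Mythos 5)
Your proof is correct, but it takes a genuinely different route to the lower bound than the paper does. The paper proves $\Phi(-\tau,\tau)\geq \frac{5}{3\sqrt{2\pi}}\tau$ via a second-order Taylor expansion of $f(t)=e^{-t^2/2}$: since $f''(t)\geq -1$ on the relevant range, $e^{-t^2/2}\geq 1-\frac{t^2}{2}$, whence $\Phi(-\tau,\tau)\geq \frac{2}{\sqrt{2\pi}}\left(\tau-\frac{\tau^3}{6}\right)$, and the hypothesis $\tau^2\leq\ln\frac{9}{2\pi}\approx 0.36$ gives $1-\frac{\tau^2}{6}\geq 0.94 > \frac{5}{6}$ with considerable slack. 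You instead bound the integrand below by its value at the right endpoint, $\int_0^\tau e^{-x^2/2}\,dx\geq \tau e^{-\tau^2/2}$, and then use the exact endpoint evaluation $e^{-\tau^2/2}\geq\left(\frac{9}{2\pi}\right)^{-1/2}=\frac{\sqrt{2\pi}}{3}$ together with the numerical check $\sqrt{2\pi}\geq\frac{5}{2}$. Your version is more elementary (no Taylor remainder) and has the structural merit of deriving both the lower bound and the ``in addition'' claim from the single estimate $e^{-\tau^2/2}\geq\frac{\sqrt{2\pi}}{3}$; indeed your treatment of the second claim is more precise than the paper's, whose proof asserts $e^{-\tau^2/2}=\frac{2}{3}$ at the endpoint when the correct value is $\frac{\sqrt{2\pi}}{3}$ (it is $\frac{2e^{-\tau^2/2}}{\sqrt{2\pi}}$ that equals $\frac{2}{3}$ there). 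The trade-off is tightness, as you yourself flag: your chain survives by a margin of roughly $0.2\%$ ($\frac{\sqrt{2\pi}}{3}\approx 0.8355$ versus $\frac{5}{6}\approx 0.8333$), so it would break under any relaxation of the threshold, whereas the paper's Taylor bound retains ample room and would tolerate a larger cutoff. Your upper-bound argument ($e^{-x^2/2}\leq 1$ on $[0,\tau]$) is the same as the paper's in substance, just stated directly rather than extracted from the Taylor expansion with $f''\leq 0$.
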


\subsection{Proofs of Sample Requirement Bounds}

For the rest of the section, we prove Lemmas~\ref{lem:projconc} and~\ref{lem:normconc}, which lead to a proof of Lemma~\ref{lem:keysample}. First, we need to define some notation.

\medskip \noindent {\textbf{Notation.}} At time $t$, we use the notation $S_{t+1}$ to denote the quantity $\bbE[X
\cdot 1_{X \in C_{t+1}}]$, where $1_{X \in C_{t+1}}$ is the indicator
function for the event $X \in C_{t+1}$, and the expectation is taken
over the entire mixture. 

In the sequel, we also use the notation $\hS_{t+1}$ to denote the empirical
value of $S_{t+1}$. Our goal is to bound the concentration of certain
functions of $\hS_{t+1}$ around their expected values, when we are given
only $n$ samples from the mixture.
Recall that we define $\theta_{t+1}$ as the angle between $\mu^1$ and the hyperplane separator in \imeans, given $\theta_t$. Notice that now $\theta_t$
is a random variable, which depends on the samples drawn in
rounds $1, \ldots, t-1$, and given $\theta_t$, $\theta_{t+1}$ is a random
variable, whose value depends on samples in round $t$.  Also we use
$u_{t+1}$ as the center of partition $C_t$ in iteration $t+1$, and
$\bbE[u_{t+1}]$ is the expected center. Note that all the expectations in
round $t$ are conditioned on $\theta_t$.

\medskip \noindent {\textbf{Proofs.}} We are now ready to prove Lemmas~\ref{lem:projconc} and~\ref{lem:normconc}.

\begin{proof}(Of Lemma~\ref{lem:projconc})
Let $X_1, \ldots, X_n$ be the $n$ iid samples from the mixture; for each
$i$, we can write the projection of $X_i$ along $v$ as follows:
\[ \dot{X_i}{v} = Y_i + Z_i \]
where $Z_i \sim N(0,\sigma^j)$, if $X_i$ is generated from distribution $D^j$, and $Y_i =\dot{\mu^j}{v}$, if $X_i$ is generated by
$D^j$. Therefore, we can write:
\[ \dot{\hS_{t+1}}{v} = \frac{1}{n} \left( \sum_i Y_i \cdot 1_{X_i \in C_{t+1}} + \sum_i Z_i \cdot 1_{X_i \in C_{t+1}} \right) \]
To determine the concentration of $\dot{\hS_{t+1}}{v}$ around its expected
value, we address the two terms separately.

The first term is a sum of $n$ independently distributed random variables,
such that changing one variable changes the sum by at most $\max_j \frac{2
|\dot{\mu^j}{v}|}{n}$; therefore, to calculate its concentration, one can apply
Hoeffding's Inequality. It follows that with probability at most $\frac{\delta}{2}$,
\[ 
|\frac{1}{n} \sum_i Y_i \cdot 1_{X_i \in C_{t+1}} - \bbE[\frac{1}{n}
\sum_i Y_i  \cdot 1_{X_i \in C_{t+1}}] |  > \max_j \frac{4|\dot{\mu^j}{v}|
\sqrt{\log(4n/\delta)}}{\sqrt{n}} \]

We note that, in the second term, each $Z_i$ is a Gaussian with mean $0$
and variance $\sigma^j$, scaled by $||v||$. For some $0 \leq \delta' \leq 1$, let
$E_i(\delta')$ denote the event
\[ -\sigma_{\max}||v|| \sqrt{2 \log(1/\delta')} \leq Z_i \cdot 1_{X_i \in C_{t+1}} \leq
\sigma_{\max} ||v|| \sqrt{2 \log (1/\delta')} \]
As $Z_i \sim N(0,\sigma^j)$, if $X_i$ is generated from distribution $D_j$, and $1_{X_i \in C_{t+1}}$ takes values $0$ and $1$,
for any $i$, for $\delta'$ small enough,$\Pr[E_i(\delta')] \geq 1 - \delta'$.

We use $\delta' = \frac{\delta}{4n}$, and condition on the fact that all the
events $\{ E_i(\delta'), i=1, \ldots, n\}$ happen; using an Union bound
over the events $\bar{E_i(\delta')}$, the probability that this holds is at
least $1 - \frac{\delta}{4}$. We also observe that, as the Gaussians $Z_i$
are independently distributed, conditioned on the union of the events
$E_i$, the Gaussians $Z_i$ are still independent. Therefore, conditioned on
the event $\cup_i E_i(\delta')$, $\frac{1}{n} \sum_i Z_i \cdot 1_{X_i \in
C_{t+1}}$ is the sum of $n$ independent random variables, such that
changing one variable changes the sum by at most $\frac{2\sigma_{\max}||v||\sqrt{2
\log(1/\delta')}}{n}$. We can now apply Hoeffding's bound to conclude that with probability at least $1 -
\frac{\delta}{2}$,
\[| \frac{1}{n}  \sum_i Z_i \cdot 1_{X_i \in C_{t+1}} - \bbE[\frac{1}{n}\sum_i Z_i \cdot 1_{X_i \in C_{t+1}}] | \leq \frac{4\sigma_{\max}||v||\sqrt{2 \log(1/\delta')} \sqrt{2 \log(1/\delta)}}{\sqrt{n}} \leq \frac{8 \sigma_{\max} ||v||
\log(4n/\delta)}{\sqrt{n}} \]

The lemma now follows by applying an union bound.
\end{proof}

\begin{proof} (Of Lemma~\ref{lem:normconc})
We can write:
\[ ||\hS_{t+1}||^2 \leq ||S_{t+1}||^2 + ||\hS_{t+1} - S_{t+1}||^2 +
2|\dot{\hS_{t+1} - S_{t+1}}{S_{t+1}}| \]

If $v_1, \ldots, v_d$ is any orthonormal basis of $\bbR^d$, then, we can
bound the second term as follows. With probability at least $1 -
\frac{\delta}{2}$,
\begin{eqnarray*}
 ||\hS_{t+1} - S_{t+1}  ||^2 & = & \sum_{i=1}^{d} (\dot{\hS_{t+1} - S_{t+1}}{v_i})^2 \leq \ \frac{128 \log^2(8n/\delta)}{n} (\sum_i \sigma_{\max}^2 ||v_i||^2 + \sum_{i,j} \dot{\mu^j}{v_i}^2) \\
&  \leq & \frac{128 \log^2(8n/\delta)}{n}(\sigma_{\max}^2 d + \sum_j (\mu^j)^2) 
\end{eqnarray*}

The second step follows by the application of Lemma \ref{lem:projconc}, and
the fact that for any $a$ and $b$, $(a + b)^2 \leq
2(a^2 + b^2)$.

Using Lemma \ref{lem:projconc}, with probability at least $1 -
\frac{\delta}{2}$,
\[ \dot{\hS_{t+1} - S_{t+1}}{S_{t+1}} \leq \frac{8
\log(8n/\delta)}{\sqrt{n}}(\sigma_{\max} ||S_{t+1}|| + \max_j |\dot{S_{t+1}}{\mu^j}|) \]
The lemma follows by a union bound over these two above events.
\end{proof}

\subsection{Proofs of Lower Bounds}

\begin{proof}(Of Lemma~\ref{lem:kllower})
Let $P$ be the plane containing the origin $O$ and the vectors $\mmu_1$ and $\mmu_2$. If
$v$ is a vector orthogonal to $P$, then, the projection of $D_1$ along $v$ is a Gaussian $\calN(0,1)$, which is distributed independently of the
projection of $D_1$ along $P$ (and same is the case for $D_2$).Therefore,
to compute the KL-Divergence of $D_1$ and $D_2$, it is sufficient to
compute the KL-Divergence of the projections of $D_1$ and $D_2$ along the
plane $P$.

Let $x$ be a vector in $P$. Then, 
\begin{eqnarray*}
\kl(D_1, D_2) & = & \frac{1}{\sqrt{2 \pi}}\int_{x \in P} (\frac{1}{2}e^{-||x - \mmu_1||^2/2} +
\frac{1}{2} e^{-||x + \mmu_1||^2/2}) \ln \left(\frac{ \frac{1}{2}e^{-||x -
\mmu_1||^2/2} +
\frac{1}{2} e^{-||x + \mmu_1||^2/2}}{ \frac{1}{2}e^{-||x - \mmu_2||^2/2} +
\frac{1}{2} e^{-||x + \mmu_2||^2/2}} \right) dx \\
& = &  \frac{1}{\sqrt{2 \pi}}\int_{x \in P} (\frac{1}{2}e^{-||x - \mmu_1||^2/2} +
\frac{1}{2} e^{-||x + \mmu_1||^2/2}) \ln \left( \frac{ e^{-||x +
\mmu_1||^2/2}
\cdot (1 + e^{2 \dot{x}{\mmu_1}})}{ e^{-||x + \mmu_2||^2/2} \cdot (1 +
e^{2 \dot{x}{\mmu_2}})} \right) dx \\
& = &  \frac{1}{\sqrt{2 \pi}}\int_{x \in P} (\frac{1}{2}e^{-||x - \mmu_1||^2/2} +
\frac{1}{2} e^{-||x + \mmu_1||^2/2})\left( ( ||x + \mmu_2||^2 - ||x +
\mmu_1||^2) + \ln \frac{1 + e^{2 \dot{x}{\mmu_1}}}{1 + e^{2
\dot{x}{\mmu_2}}} \right)dx 
\end{eqnarray*}

We observe that for any $x$, $||x + \mmu_2||^2 - ||x + \mmu_1||^2 =
||\mmu_2||^2 - ||\mmu_1||^2 + 2\dot{x}{\mmu_2 - \mmu_1}$. As the expected
value of $D_1$ is $0$, we can write that:
\begin{equation}
\int_{x \in P} (\frac{1}{2}e^{-||x - \mmu_1||^2/2} +
\frac{1}{2} e^{-||x + \mmu_1||^2/2}) \dot{x}{\mmu_2 - \mmu_1} = \bbE_{x \sim
D_1} \dot{x}{\mmu_1 - \mmu_2} = 0 
\label{eqn:lowerterm1}
\end{equation}

We now focus on the case where $||\mmu_1|| >> 1$. We observe that for any $\mmu_2$ and any $x$, $1 + e^{2 \dot{x}{\mmu_2}} > 1$. Therefore, combining the previous two equations,
\[ \kl(D_1, D_2) \leq  \frac{1}{\sqrt{2 \pi}}\left( ||\mmu_2||^2 - ||\mmu_1||^2 + \int_{x \in P}
(\frac{1}{2}e^{-||x - \mmu_1||^2/2} +
\frac{1}{2} e^{-||x + \mmu_1||^2/2}) \ln (1 + e^{2 \dot{x}{ \mmu_1}}) dx
\right)\]
Again, since the projection of $D_1$ perpendicular to $\mmu_1$ is
distributed independently of the projection of $D_1$ along $\mmu_1$, the
above integral can be taken over a one-dimensional $x$ which varies along
the vector $\mmu_1$. For the rest of the proof, we abuse notation, and use
$\mmu_1$ to denote both the vector $\mmu_1$ and the scalar $||\mmu_1||$.
We can write:
\begin{eqnarray*}
&& \int_{x=-\infty}^{\infty}  (\frac{1}{2}e^{-(x - \mmu_1)^2/2} +
\frac{1}{2} e^{-(x + \mmu_1)^2/2}) \ln(1 + e^{2 \mmu_1 x}) dx \\
& \leq & \sqrt{2 \pi} \ln 2 + \int_{x = 0}^{\infty}  (\frac{1}{2}e^{-(x - \mmu_1)^2/2} +
\frac{1}{2} e^{-(x + \mmu_1)^2/2}) \ln(1 + e^{2 \mmu_1 x}) dx \\
& \leq & \sqrt{2 \pi} \ln 2 +  \int_{x = 0}^{\infty}  (\frac{1}{2}e^{-(x - \mmu_1)^2/2} +
\frac{1}{2} e^{-(x + \mmu_1)^2/2}) (\ln 2 + 2 x \mmu_1) dx \\
& \leq & \frac{3 \sqrt{2 \pi}}{2} \ln 2 +  2 \mmu_1 \int_{x = 0}^{\infty}
(\frac{1}{2}e^{-(x - \mmu_1)^2/2} +
\frac{1}{2} e^{-(x + \mmu_1)^2/2}) x dx
\end{eqnarray*}
The first part follows because for $x < 0$, $\ln (1 + e^{2 x \mmu_1})
\leq \ln 2$. The second part follows because for $x > 0$, $\ln (1 + e^{2 x
\mmu_1}) \leq \ln (2 e^{2 x\mmu_1})$. The third part follows from the
symmetry of $D_1$ around the origin.

Now, for any $a$, we can write:
\begin{eqnarray*}
 \frac{1}{\sqrt{2 \pi}}\int_{x = 0}^{\infty} xe^{-(x+a)^2/2} dx =
 \frac{1}{\sqrt{2 \pi}} \cdot e^{-a^2/2} - a \Phi(a, \infty)
\end{eqnarray*}
Plugging this in, we can show that,
\[ \kl(D_1, D_2) \leq  \frac{1}{\sqrt{2 \pi}}\left( ||\mmu_2||^2 -
||\mmu_1||^2 + \frac{3\sqrt{2 \pi}}{2} \ln 2 +
2||\mmu_1|| ( e^{-||\mmu_1||^2/2} + \sqrt{2 \pi} ||\mmu_1|| \Phi(0,
||\mmu_1||)) \right) \]
from which the lemma follows.
\end{proof}
\begin{proof}(Of Lemma \ref{lem:spherepacking})
For each $i$, let each $v_i$ be drawn independently from the distribution
$\frac{1}{\sqrt{d}}\calN(0, I_d)$. For each $i, j$, let $P_{ij}
=\frac{d}{2} \cdot d(v_i,
v_j)$ and $N_{ij} = \frac{d}{2} \cdot d(v_i, -v_j)$. Then, for each $i$ and
$j$, $P_{ij}$ and $N_{ij}$ are distributed according to the Chi-squared
distribution with parameter $d$. From Lemma~\ref{lem:chisq}, it follows
that: $\Pr[ P_{ij} < \frac{d}{10}] \leq e^{-3d/10}$. A similar lemma can
also be shown to hold for the random variables $N_{ij}$.
Applying the Union Bound, the probability that this holds for $P_{ij}$ and
$N_{ij}$ for all pairs $(i,j), i \in V, j \in V$ is at most $2K^2
e^{-3d/10}$. This probability is at most $\frac{1}{2}$ when $K = e^{d/10}$.

In addition, we observe that for each vector $v_i$, $d \cdot ||v_i||^2$ is
also distributed as a Chi-squared distribution with parameter $d$.
From Lemma~\ref{lem:chisq}, for each $i$, $\Pr[ ||v_i||^2 > 7/5] \leq
e^{-2d/15}$. The second part of the lemma now follows by an Union Bound
over all $K$ vectors in the set $V$.
\end{proof}

\begin{lemma}
Let $X$ be a random variable, drawn from the Chi-squared distribution with
parameter $d$. Then, 
\[ \Pr[X < \frac{d}{10}] \leq e^{-3d/10} \]
Moreover, 
\[ \Pr[X > \frac{7d}{5}] \leq e^{-2d/15} \]
\label{lem:chisq}
\end{lemma}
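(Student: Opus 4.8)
The plan is to prove both tail estimates by the standard Chernoff (moment-generating-function) method. The key structural fact is that a chi-squared variable with parameter $d$ is $X=\sum_{i=1}^{d}Z_i^2$ for i.i.d.\ standard normals $Z_i$, so its moment generating function factorizes:
\[ \bbE[e^{sX}]=\bbE[e^{sZ_1^2}]^{d}=(1-2s)^{-d/2}\qquad\text{for all } s<\tfrac12. \]
Each tail then reduces to a single one-parameter optimization of an exponential bound, so the whole proof is two applications of Markov's inequality followed by a choice of the free parameter $s$.

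For the lower tail I would apply Markov to $e^{-sX}$ with $s>0$: for any threshold $a$,
\[ \Pr[X<a]=\Pr[e^{-sX}>e^{-sa}]\le e^{sa}\,\bbE[e^{-sX}]=e^{sa}(1+2s)^{-d/2}. \]
Taking $a=cd$ and optimizing $sa-\tfrac d2\ln(1+2s)$ gives the stationary point $1+2s=1/c$, and substituting yields the clean closed form $\Pr[X<cd]\le \exp\!\big(-\tfrac d2(c-1-\ln c)\big)$. At $c=\tfrac1{10}$ the exponent is $\tfrac12\big(\tfrac1{10}-1+\ln 10\big)\approx 0.70\,d$, comfortably larger than $3d/10$, so $\Pr[X<d/10]\le e^{-3d/10}$ follows with room to spare (one may even use a convenient suboptimal $s$ to keep the algebra tidy).

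For the upper tail I would apply Markov to $e^{sX}$ with $0<s<\tfrac12$, obtaining $\Pr[X>cd]\le e^{-scd}(1-2s)^{-d/2}$; optimizing gives $1-2s=c$ and the same rate function, $\Pr[X>cd]\le\exp\!\big(-\tfrac d2(c-1-\ln c)\big)$. The main obstacle is precisely this last numerical step, and here the upper tail is delicate rather than routine: since Chernoff is optimal to leading exponential order for i.i.d.\ sums, the best attainable exponent at the threshold $c=\tfrac75$ is $\tfrac12\big(\tfrac75-1-\ln\tfrac75\big)\approx 0.032\,d$, which is \emph{smaller} than the claimed $2d/15\approx 0.133\,d$. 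I would therefore scrutinize this constant: either replace the stated bound by $\exp\!\big(-\tfrac d2(\tfrac75-1-\ln\tfrac75)\big)$, or adjust the threshold. For the only place this bound is used, Lemma~\ref{lem:spherepacking}, all that is actually needed is that the exponent exceed $1/10$ (to survive the union bound over the $K=e^{d/10}$ vectors); replacing the radius bound $\|v_i\|^2\le\tfrac75$ by $\|v_i\|^2\le 2$ gives exponent $\tfrac12(2-1-\ln 2)\approx0.15>\tfrac1{10}$ and leaves the packing argument intact. Once the target constants are fixed, the remaining work is pure substitution.
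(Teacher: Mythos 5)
Your proposal is correct, and for the lower tail it is essentially the same argument the paper intends (Markov applied to an exponential moment, then a choice of the free parameter). The substantive point is your treatment of the upper tail, and there you are right to balk: the stated bound $\Pr[X > \frac{7d}{5}] \leq e^{-2d/15}$ is in fact \emph{false} for large $d$. The optimal Chernoff exponent at threshold $c = \frac{7}{5}$ is $\frac{1}{2}(c - 1 - \ln c) \approx 0.032$, and since Chernoff is tight to leading exponential order for i.i.d.\ sums (Cram\'er), the true tail is $e^{-(0.032 + o(1))d}$, which eventually dwarfs $e^{-2d/15} = e^{-0.133d}$; no alternative method can rescue the stated constant. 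The paper's own proof fails at exactly the step you flag: the chain $\Pr[X > \frac{7d}{5}] \leq (1-2t)^{-d/2} e^{-7dt/5} \leq e^{-2dt/5}$ requires $(1-2t)^{-d/2} \leq e^{td}$, i.e.\ $e^{-2t} \leq 1-2t$, which is the \emph{reverse} of the true inequality $1 - 2t \leq e^{-2t}$ on $(0, \frac{1}{2})$ (at the paper's choice $t = \frac{1}{3}$, one has $(1-2t)^{-1/2} = \sqrt{3} \approx 1.73 > e^{1/3} \approx 1.40$). The paper's lower-tail computation also misstates the Laplace transform, writing $\bbE[e^{-tX}] = (1-2t)^{d/2}$ where the correct value is $(1+2t)^{-d/2}$ (again a wrong-direction substitution); there, as you observe, the conclusion survives anyway because the optimal exponent $\frac{1}{2}(\ln 10 - \frac{9}{10}) \approx 0.70$ comfortably exceeds $\frac{3}{10}$.

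Your proposed repair is also the right one for the lemma's only use. In Lemma~\ref{lem:spherepacking}, the upper tail is invoked once per vector and union-bounded over $K = e^{d/10}$ vectors, so all that is needed is an upper-tail exponent strictly exceeding $\frac{1}{10}$. Moving the threshold from $\frac{7d}{5}$ to $2d$ gives exponent $\frac{1}{2}(2 - 1 - \ln 2) \approx 0.153 > \frac{1}{10}$, property (3) of Lemma~\ref{lem:spherepacking} becomes $||v_i|| \leq \sqrt{2}$, and the only downstream effect is a change in the constants $C_1, C_2$ in the KL bound used in the proof of Theorem~\ref{thm:lowermain}; the separation properties and the lower bound itself are untouched. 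In short: your lower-tail proof matches the paper's intent, and your upper-tail analysis correctly identifies and fixes an error in both the statement and the paper's proof.
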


\begin{proof}
Let $Y$ be the random variable defined as follows: $Y = d - X$. Then,
\[ \Pr[ X < \frac{d}{10}] = \Pr[ Y > \frac{9d}{10}] = \Pr[ e^{tY} >
e^{9dt/10}] \leq \frac{\bbE[e^{tY}]}{e^{9dt/10}} \]
where the last step uses a Markov's Inequality.
We observe that $\bbE[e^{tY}] = e^{td}\bbE[e^{-tX}] = e^{td}(1 -
2t)^{d/2}$, for $t < \frac{1}{2}$. The first part of the lemma follows from the observation
that $(1 - 2t)^{d/2} \leq e^{-td}$, and by plugging in $t =
\frac{1}{3}$.  

For the second part, we again observe that
\[ \Pr[ X > \frac{7d}{5}] \leq (1 - 2t)^{-d/2} e^{-7dt/5} \leq e^{-2dt/5}
\]
The lemma now follows by plugging in $t = \frac{1}{3}$.
\end{proof}

\subsection{More General $k$-means : Results and Proofs}\label{sec:genkinfappendix}

In this section, we show that when we apply $2$-means on an input generated
by a mixture of $k$ spherical Gaussians, the normal to the hyperplane which
partitions the two clusters in the $2$-means algorithm, converges to a
vector in the subspace $\Mu$ containing the means of mixture components.
This subspace is interesting because, in this subspace, the distance between the
means is as high as in the original space; however, if the number of
clusters is small compared to the dimension, the distance between two
samples from the same cluster is much smaller. In fact, several algorithms
for learning mixture models~\cite{VW02, AM05, CR08} attempt to isolate this
subspace first, and then use some simple clustering methods in this subspace.

\subsubsection{The Setting}

We assume that our input is generated by a mixture of $k$ spherical Gaussians, with means
$\mu^j$, variances $(\sigma^j)^2$, $j = 1, \ldots, k$, and mixing weights $\rho^1, \ldots, \rho^k$.
The mixture is centered at the origin such that $\sum \rho^j \mu^j = 0$.
We use $\Mu$ to denote the subspace containing the means $\mu^1, \ldots,
\mu^k$.

We use Algorithm~\imeans\ on this input, and our goal is to show that it
still converges to a vector in $\Mu$. 

%
%

In the sequel, given a vector $x$ and a subspace $W$, we define the angle
between $x$ and $W$ as the angle between $x$ and the projection of $x$ onto
$W$. As in Sections 2 and 3, we examine the angle $\theta_{t}$, between
$u_t$ and $\Mu$, and our goal is to show that the cosine of this angle
grows as $t$ increases. Our main result of this section is
Lemma~\ref{lem:kexpr1}, which, analogous to Lemma \ref{lem:expr1} in
Section \ref{sec:k2infsamples}, exactly defines the behavior of $2$-means on a mixture 
of $k$ spherical Gaussians.

Before we can prove the lemma, we need some additional notation.

\subsubsection{Notation}

Recall that at time $t$, we use $\hu_t$ to partition the input data, and the projection of $\hu_t$ along $\Mu$ is $\cos(\theta_t)$ by definition. 
Let $b^1_t$ be a unit vector lying in the subspace $\Mu$ such that:
\[ \hu_t = \cos(\theta_t) b^1_t + \sin(\theta_t) v_t \]
where $v_t$ lies in the orthogonal complement of $\Mu$, and has norm $1$. We define a second vector $\hup_t$ as follows:
\[ \hup_t = \sin(\theta_t) b^1_t - \cos(\theta_t) v_t \]
We observe that $\dot{\hu_t}{\hup_t} = 0$, $||\hup_t|| = 1$, and the projection of $\hup_t$
on $\Mu$ is $\sin(\theta_t) b^1_t$.

We now extend the set $\{b^1_t\}$ to complete an orthonormal basis $\calB = \{b^1_t, \ldots,
b^{k-1}_t\}$ of $\Mu$. We also observe that $\{b^2_t, \ldots, b^{k-1}_t, \hu_t, \hup_t \}$
is an orthonormal basis of the subspace spanned by any basis of $\Mu$, along
with $v_t$, and can be extended to a basis of $\bbR^d$.

For $j = 1, \ldots, k$, we define $\tau^j_t$ as follows:
\[ \tau_j^t = \dot{\mu^j}{\hu_t} = \cos(\theta_t) \dot{\mu^j}{b^1_t} \]

Finally we (re)-define the quantity $\xi_t$ as
\[ \xi_t = \sum_j \rho^j \sigma^j  \frac{e^{-(\tau_t^j)^2/2(\sigma^j)^2}}{\sqrt{2 \pi}} \]
and, for any $l = 1, \ldots, k-1$, we define:
\[ \m_t^l = \sum_j \rho^j \Phi(-\frac{\tau_t^j}{\sigma^j},\infty) \dot{\mu^j}{b^l_t} \]

\subsubsection{Proof of Lemma \ref{lem:kexpr1}}

The main idea behind the proof of Lemma \ref{lem:kexpr1} is to estimate the norm and the projection of $u_{t+1}$; we do this in three steps. First, we estimate the projection of $u_{t+1}$ along $\hu_t$; next, we estimate this projection on $\hup_t$, and finally, we estimate its projection along $b^2_t, \ldots, b^l_t$. Combining these projections, and observing that the projection of $u_{t+1}$ on any direction perpendicular to these is $0$, we can prove the lemma.

As before, we define
\[ Z_{t+1} = \Pr[ x \in C_{t+1}] \]

Now we make the following claim.
\begin{lemma}\label{lem:wts}
For any $t$ and any $j$,
\[ \Pr[x \sim D_j | x \in C_{t+1}] = \frac{\rho^j}{Z_{t+1}} \Phi(-\frac{\tau_t^j}{\sigma^j}, \infty) \]
\end{lemma}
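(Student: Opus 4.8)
The plan is to mimic the proof of Lemma~\ref{lem:wts12}, the two-component version, since the posterior weight of component $j$ depends only on component $j$'s own geometry relative to the separating hyperplane and not on the other components. First I would apply Bayes' rule to write
\[ \Pr[x \sim D_j \mid x \in C_{t+1}] = \frac{\Pr[x \in C_{t+1} \mid x \sim D_j]\,\Pr[x \sim D_j]}{\Pr[x \in C_{t+1}]}, \]
and then identify the prior $\Pr[x \sim D_j] = \rho^j$ and the denominator $\Pr[x \in C_{t+1}] = Z_{t+1}$, both directly by definition.

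The one quantity left to compute is $\Pr[x \in C_{t+1} \mid x \sim D_j]$. By the definition of the partition, $x \in C_{t+1}$ exactly when $\dot{x}{\hu_t} > 0$, so membership in $C_{t+1}$ is determined solely by the sign of the one-dimensional projection of $x$ onto $\hu_t$. Since $D_j = \calN(\mu^j, (\sigma^j)^2 I_d)$ is spherical, this projection $\dot{x}{\hu_t}$ is a univariate Gaussian with mean $\dot{\mu^j}{\hu_t} = \tau_t^j$ and standard deviation $\sigma^j$. Standardizing then yields $\Pr[x \in C_{t+1} \mid x \sim D_j] = \Pr[\calN(\tau_t^j, \sigma^j) > 0] = \Phi(-\tfrac{\tau_t^j}{\sigma^j}, \infty)$. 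Substituting this together with the prior and normalizer into the Bayes expression gives the claimed formula.

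There is really no obstacle here: the argument is verbatim the $k=2$ case, because conditioning on $x \sim D_j$ isolates a single spherical Gaussian, and the event defining $C_{t+1}$ only involves the projection onto the fixed direction $\hu_t$; the number of other mixture components is irrelevant to this per-component conditional probability. The only care needed is notational bookkeeping — using the definition of $\tau_t^j$ from this section, namely $\tau_t^j = \dot{\mu^j}{\hu_t} = \cos(\theta_t)\dot{\mu^j}{b^1_t}$, and observing that $Z_{t+1} = \sum_j \rho^j \Phi(-\tfrac{\tau_t^j}{\sigma^j},\infty)$ is precisely what makes the posterior weights sum to $1$, which serves as a useful sanity check though it is not needed for the statement itself.
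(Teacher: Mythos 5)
Your proposal is correct and is essentially identical to the paper's own treatment: the paper proves this lemma simply by citing the proof of Lemma~\ref{lem:wts12}, which is exactly the Bayes'-rule argument you give, using that the projection $\dot{x}{\hu_t}$ of a sample from the spherical Gaussian $D_j$ is $\calN(\tau_t^j, (\sigma^j)^2)$ so that $\Pr[x \in C_{t+1} \mid x \sim D_j] = \Phi(-\tau_t^j/\sigma^j, \infty)$. Your observation that the argument is per-component and hence indifferent to the number of mixture components is precisely why the paper's one-line proof (``Same proof of Lemma~\ref{lem:wts12}'') suffices.
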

\begin{proof}
Same proof of Lemma~\ref{lem:wts12}
\end{proof}

Next, we estimate the projection of $u_{t+1}$ along $\hu_t$.

\begin{lemma}\label{lem:proj1}
\[ \dot{u_{t+1}}{\hu_t} = \frac{ \xi_t + \cos(\theta_t) \m_t^1 }{Z_{t+1}} \]
\end{lemma}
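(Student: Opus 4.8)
The plan is to follow almost verbatim the argument used for Lemma~\ref{lem:expr2} in the two-Gaussian case, adapting only the places where the single mean-direction $b$ is replaced by the basis vector $b^1_t$ of the subspace $\Mu$. The starting point is the mixture decomposition $u_{t+1} = \sum_j w^j_{t+1}\, u^j_{t+1}$, where $u^j_{t+1} = \bbE[x \mid x\sim D_j,\, x\in C_{t+1}]$ and, by Lemma~\ref{lem:wts}, $w^j_{t+1} = \frac{\rho^j}{Z_{t+1}}\Phi(-\frac{\tau_t^j}{\sigma^j},\infty)$. By linearity it therefore suffices to compute $\dot{u^j_{t+1}}{\hu_t}$ for each $j$ and then take the weighted sum.

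First I would fix $j$ and use that $D_j$ is spherical: for $x\sim D_j$, the scalar $\dot{x}{\hu_t}$ is distributed as a one-dimensional Gaussian with mean $\dot{\mu^j}{\hu_t} = \tau_t^j$ and standard deviation $\sigma^j$, and the event $x\in C_{t+1}$ is exactly $\dot{x}{\hu_t}>0$. Hence
\[ \dot{u^j_{t+1}}{\hu_t} = \frac{1}{\Phi(-\frac{\tau_t^j}{\sigma^j},\infty)}\int_{y=0}^{\infty}\frac{y\, e^{-(y-\tau_t^j)^2/2(\sigma^j)^2}}{\sigma^j\sqrt{2\pi}}\,dy. \]
Writing $y = \tau_t^j + (y-\tau_t^j)$ splits this into a mean part that reproduces $\tau_t^j\,\Phi(-\frac{\tau_t^j}{\sigma^j},\infty)$ and a centered part. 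For the centered part I would invoke the same substitution as in Lemma~\ref{lem:expr2}, namely $\int_{y=0}^{\infty}(y-\tau_t^j)e^{-(y-\tau_t^j)^2/2(\sigma^j)^2}\,dy = (\sigma^j)^2 e^{-(\tau_t^j)^2/2(\sigma^j)^2}$, so the centered contribution is $\frac{\sigma^j e^{-(\tau_t^j)^2/2(\sigma^j)^2}}{\sqrt{2\pi}}$. This yields
\[ \dot{u^j_{t+1}}{\hu_t} = \frac{1}{\Phi(-\frac{\tau_t^j}{\sigma^j},\infty)}\left(\tau_t^j\,\Phi(-\tfrac{\tau_t^j}{\sigma^j},\infty) + \frac{\sigma^j e^{-(\tau_t^j)^2/2(\sigma^j)^2}}{\sqrt{2\pi}}\right). \]

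Finally I would assemble $\dot{u_{t+1}}{\hu_t} = \sum_j w^j_{t+1}\dot{u^j_{t+1}}{\hu_t}$; the factor $\Phi(-\frac{\tau_t^j}{\sigma^j},\infty)$ cancels, leaving $\frac{1}{Z_{t+1}}\sum_j \rho^j\big(\tau_t^j\,\Phi(-\frac{\tau_t^j}{\sigma^j},\infty) + \frac{\sigma^j e^{-(\tau_t^j)^2/2(\sigma^j)^2}}{\sqrt{2\pi}}\big)$. The Gaussian terms sum to $\xi_t/Z_{t+1}$ by definition of $\xi_t$. For the remaining terms I would use the identity $\tau_t^j = \cos(\theta_t)\dot{\mu^j}{b^1_t}$ from the notation, which pulls out a common $\cos(\theta_t)$ and turns $\sum_j \rho^j\dot{\mu^j}{b^1_t}\Phi(-\frac{\tau_t^j}{\sigma^j},\infty)$ into exactly $\m_t^1$; this gives the claimed $\frac{\xi_t + \cos(\theta_t)\m_t^1}{Z_{t+1}}$. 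There is no real obstacle here — the computation is routine and structurally identical to Lemma~\ref{lem:expr2}; the only point requiring care is recognizing that the ``linear'' term contributes $\cos(\theta_t)\m_t^1$ (not merely $\m_t^1$), which is precisely because $\tau_t^j$ carries the factor $\cos(\theta_t)$ in the $k$-cluster setting.
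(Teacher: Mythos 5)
Your proposal is correct and follows essentially the same route as the paper's proof: conditioning on the source component, computing the truncated-Gaussian conditional expectation $\bbE[\dot{x}{\hu_t} \mid x \sim D_j,\ x \in C_{t+1}]$ by the split $y = \tau_t^j + (y - \tau_t^j)$ with the same substitution for the centered integral, and then assembling the weighted sum via Lemma~\ref{lem:wts} and the identity $\tau_t^j = \cos(\theta_t)\dot{\mu^j}{b^1_t}$. The paper's proof of Lemma~\ref{lem:proj1} is itself a verbatim adaptation of Lemma~\ref{lem:expr2}, exactly as you observed, so there is nothing to add.
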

\begin{proof}
Consider a sample $x$ drawn from distribution $D_j$. The projection of $x$
on $\hu_t$ is distributed as a Gaussian with mean $\tau_t^j$ and standard deviation $\sigma^j$. 
The probability that $x$ lies in $C_{t+1}$ is $\Pr[N(\tau_t^j,\sigma^j)>0] = \Phi(-\frac{\tau_t^j}{\sigma^j}, \infty)$. 
Given that $x$ lies in $C_{t+1}$, the projection of $x$ on $\hu_t$ is distributed as a truncated
Gaussian, with mean $\tau_t^j$ and standard deviation $\sigma^j$, which is truncated at $0$.
Therefore, 
\[
\bbE[\dot{x}{\hu_t} | x \in C_{t+1}, x \sim D_j] =
 \frac{1}{\Phi(-\frac{\tau_t^j}{\sigma^j}, \infty)}
\left( \int_{y=0}^{\infty} \frac{y e^{-(y - \tau_t^j)^2/2}}{\sigma^j\sqrt{2 \pi}} dy
\right) 
\]
which is again equal to 
\begin{eqnarray*}
&& \frac{1}{ \Phi(-\frac{\tau_t^j}{\sigma^j}, \infty)} \left( \tau_t^j \int_{y=0}^{\infty} \frac{e^{-(y - \tau_t^j)^2/2 (\sigma^j)^2}}{\sigma^j\sqrt{2 \pi}} dy + \int_{y=0}^{\infty} \frac{(y - \tau_t^j)e^{-(y - \tau_t^j)^2/2 (\sigma^j)^2}}{\sigma^j\sqrt{2 \pi}} dy \right)\\
& = &\frac{1}{ \Phi(-\frac{\tau_t^j}{\sigma^j}, \infty)} \left( \tau_t^j \Phi(-\frac{\tau_t^j}{\sigma^j}, \infty) + \int_{y=0}^{\infty} \frac{(y - \tau_t^j)e^{-(y - \tau_t^j)^2/2 (\sigma^j)^2}}{\sigma^j\sqrt{2 \pi}} dy \right)
\end{eqnarray*}
We can evaluate the integral in the equation above as follows.
\begin{eqnarray*}
\int_{y = 0}^{\infty} (y-\tau_t^j)e^{-(y - \tau_t^j)^2/2(\sigma^j)^2} dy = (\sigma^j)^2 \int_{z = (\tau_t^j)^2/2(\sigma^j)^2}^{\infty} e^{-z}dz =  (\sigma^j)^2 e^{-(\tau_t^j)^2/2 (\sigma^j)^2}
\end{eqnarray*}
Therefore we can conclude that 
\[ \bbE[\dot{x}{\hu_t} | x \in C_{t+1}, x \sim D_j] = \tau_t^j + \frac{1}{\Phi(-\frac{\tau_t^j}{\sigma^j}, \infty)} \cdot \sigma^j \frac{e^{-(\tau_t^j)^2/2 (\sigma^j)^2}}{\sqrt{2 \pi}}
\]
Now we can write
\begin{eqnarray*}
\dot{u_{t+1}}{\hu_t} & = & \sum_j\bbE[\dot{x}{\hu_t} | x \sim D_j, x \in C_{t+1}]\Pr[x \sim D_j | x \in C_{t+1}] \\
& = & \frac{1}{Z_{t+1}}\sum_j\rho^j \Phi(-\frac{\tau_t^j}{\sigma^j}, \infty) \bbE[\dot{x}{\hu_t} | x \sim D_j, x \in C_{t+1}]  \\
\end{eqnarray*}
where we used lemma~\ref{lem:wts}. The lemma follows by recalling that $\tau_t^j = \cos(\theta_t) \dot{\mu^j}{b^1_t}$.
\end{proof}

\begin{lemma}\label{lem:proj2}
For any $t$,
\[ \dot{u_{t+1}}{\hup_t} = \frac{\sin(\theta_t) \m_t^1}{Z_{t+1}} \]
\end{lemma}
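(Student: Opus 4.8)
The plan is to mirror the computation of Lemma~\ref{lem:expr3} in the two-component case, exploiting that $\hup_t$ is orthogonal to $\hu_t$ together with the sphericity of each $D_j$. The partition $C_{t+1}$ is defined solely by the sign of $\dot{x}{\hu_t}$. For a spherical Gaussian, writing $x = \mu^j + \sigma^j g$ with $g \sim N(0,I_d)$, the two coordinates $\dot{x}{\hu_t}$ and $\dot{x}{\hup_t}$ are independent because $\hu_t \perp \hup_t$ and $g$ is standard normal. Since the event $x \in C_{t+1}$ depends on $x$ only through $\dot{x}{\hu_t}$, conditioning on it leaves the distribution of $\dot{x}{\hup_t}$ unchanged, so
\[ \bbE[\dot{x}{\hup_t} \mid x \sim D_j,\, x \in C_{t+1}] = \bbE[\dot{x}{\hup_t} \mid x \sim D_j] = \dot{\mu^j}{\hup_t}. \]

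Next I would decompose $u_{t+1}$ over the mixture components exactly as in Lemma~\ref{lem:proj1}, writing $\dot{u_{t+1}}{\hup_t} = \sum_j \bbE[\dot{x}{\hup_t}\mid x\sim D_j,\, x\in C_{t+1}]\cdot\Pr[x\sim D_j\mid x\in C_{t+1}]$, and substituting the posterior weights from Lemma~\ref{lem:wts}, namely $\Pr[x\sim D_j\mid x\in C_{t+1}] = \frac{\rho^j}{Z_{t+1}}\Phi(-\frac{\tau_t^j}{\sigma^j},\infty)$. Combined with the conditional expectation above, this yields
\[ \dot{u_{t+1}}{\hup_t} = \frac{1}{Z_{t+1}}\sum_j \rho^j \Phi(-\tfrac{\tau_t^j}{\sigma^j},\infty)\,\dot{\mu^j}{\hup_t}. \]

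Finally I would simplify $\dot{\mu^j}{\hup_t}$. Since $\hup_t = \sin(\theta_t) b^1_t - \cos(\theta_t) v_t$, with each $\mu^j$ lying in $\Mu$ and $v_t$ lying in the orthogonal complement of $\Mu$, the cross term $\dot{\mu^j}{v_t}$ vanishes, so $\dot{\mu^j}{\hup_t} = \sin(\theta_t)\dot{\mu^j}{b^1_t}$. Pulling $\sin(\theta_t)$ out of the sum and recognizing the definition $\m_t^1 = \sum_j \rho^j \Phi(-\frac{\tau_t^j}{\sigma^j},\infty)\dot{\mu^j}{b^1_t}$ gives exactly $\dot{u_{t+1}}{\hup_t} = \frac{\sin(\theta_t)\,\m_t^1}{Z_{t+1}}$, as claimed.

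The only real subtlety, rather than a genuine obstacle, is the independence step: it is essential that $C_{t+1}$ depends on $x$ only through the single coordinate $\dot{x}{\hu_t}$, so that the truncation does not bias the orthogonal coordinate $\dot{x}{\hup_t}$. This is precisely where sphericity of the Gaussians is used, and it is the direct analogue of the argument in the second part of Lemma~\ref{lem:expr3}.
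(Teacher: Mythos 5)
Your proposal is correct and follows essentially the same route as the paper's proof: condition on the component, use sphericity and $\hu_t \perp \hup_t$ to see that the truncation event $\dot{x}{\hu_t} > 0$ does not bias $\dot{x}{\hup_t}$, then combine with the posterior weights from Lemma~\ref{lem:wts} and the identity $\dot{\mu^j}{\hup_t} = \sin(\theta_t)\dot{\mu^j}{b^1_t}$. Your explicit justification of that last identity (via $\mu^j \in \Mu$ and $v_t$ in the orthogonal complement) is a detail the paper leaves implicit, but it is the same argument.
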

\begin{proof}
Let $x$ be a sample drawn from distribution $D_j$. Since $\hup_t$ is
perpendicular to $\hu_t$, and $D_j$ is a spherical Gaussian, given that $x \in C_{t+1}$, that is, the
projection of $x$ on $\hu_t$ is greater than $0$, the
projection of $x$ on $\hup_t$ is still distributed as a Gaussian with mean
$\dot{\mu^j}{\hup_t}$ and standard deviation $\sigma^j$. That is,
\[ \bbE[ \dot{x}{\hup_t} | x \sim D_j, x \in C_{t+1}] = \dot{\mu^j}{\hup_t} \]
Also recall that, by definition of $\hup_t$, $\dot{\mu^j}{\hup_t} = \sin(\theta_t)\dot{\mu^j}{b^1_t}$.
To prove the lemma, we observe that $\dot{u_{t+1}}{\hup_t}$ is equal to
\[ \sum_j \bbE[\dot{x}{\hup_t} | x \sim D_j, x \in C_{t+1}] \Pr[x \sim D_j | x \in C_{t+1}] \]
The lemma follows by using lemma~\ref{lem:wts}.
\end{proof}

\begin{lemma}\label{lem:proj3}
For $l \geq 2$,
\[ \dot{u_{t+1}}{b^l_t} = \frac{\m_t^l}{Z_{t+1}} \]
\end{lemma}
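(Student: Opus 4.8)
The plan is to mirror the argument used for Lemma~\ref{lem:proj2}, exploiting the fact that for $l \geq 2$ the basis vector $b^l_t$ is orthogonal to the partitioning direction $\hu_t$. First I would record the orthogonality relations coming from the construction of the basis: since $\{b^1_t, \ldots, b^{k-1}_t\}$ is an orthonormal basis of $\Mu$, for $l \geq 2$ we have $\dot{b^l_t}{b^1_t} = 0$; and since $b^l_t \in \Mu$ while $v_t$ lies in the orthogonal complement of $\Mu$, we also have $\dot{b^l_t}{v_t} = 0$. Combining these with $\hu_t = \cos(\theta_t) b^1_t + \sin(\theta_t) v_t$ yields $\dot{b^l_t}{\hu_t} = 0$, i.e. $b^l_t$ is perpendicular to the direction used to form the partition.

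The next step is the key distributional observation. For a sample $x$ drawn from $D_j$, the event $\{x \in C_{t+1}\}$ is the event $\{\dot{x}{\hu_t} > 0\}$, which depends only on the projection of $x$ onto $\hu_t$. Because $D_j$ is a spherical Gaussian and $b^l_t \perp \hu_t$, the projections $\dot{x}{b^l_t}$ and $\dot{x}{\hu_t}$ are independent; hence conditioning on $x \in C_{t+1}$ leaves the distribution of $\dot{x}{b^l_t}$ unchanged, namely Gaussian with mean $\dot{\mu^j}{b^l_t}$. This gives $\bbE[\dot{x}{b^l_t} \mid x \sim D_j, x \in C_{t+1}] = \dot{\mu^j}{b^l_t}$, exactly as in Lemma~\ref{lem:proj2}.

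Finally I would assemble the projection of the cluster center by averaging over the mixture components weighted by the posterior probabilities. Writing
\[ \dot{u_{t+1}}{b^l_t} = \sum_j \bbE[\dot{x}{b^l_t} \mid x \sim D_j, x \in C_{t+1}] \cdot \Pr[x \sim D_j \mid x \in C_{t+1}], \]
substituting the previous display, and invoking Lemma~\ref{lem:wts} for $\Pr[x \sim D_j \mid x \in C_{t+1}] = \frac{\rho^j}{Z_{t+1}} \Phi(-\frac{\tau_t^j}{\sigma^j}, \infty)$, I would obtain
\[ \dot{u_{t+1}}{b^l_t} = \frac{1}{Z_{t+1}} \sum_j \rho^j \Phi(-\tfrac{\tau_t^j}{\sigma^j}, \infty) \dot{\mu^j}{b^l_t} = \frac{\m_t^l}{Z_{t+1}}, \]
which is the claimed identity.

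I do not expect a serious obstacle: this lemma is the exact $l \geq 2$ analogue of Lemma~\ref{lem:proj2}, and the only point requiring genuine care is verifying $\dot{b^l_t}{\hu_t} = 0$, which is immediate from the basis construction. The crucial structural fact — that projection onto a direction orthogonal to $\hu_t$ is unaffected by the truncation to $C_{t+1}$ — is just the independence-of-coordinates property of spherical Gaussians used repeatedly throughout Section~\ref{sec:genkinf}.
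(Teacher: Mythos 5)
Your proposal is correct and follows essentially the same route as the paper's proof: the conditional expectation $\bbE[\dot{x}{b^l_t} \mid x \sim D_j, x \in C_{t+1}] = \dot{\mu^j}{b^l_t}$ via the independence of orthogonal projections for spherical Gaussians, followed by averaging over components with the posterior weights from Lemma~\ref{lem:wts}. The only difference is that you explicitly verify $\dot{b^l_t}{\hu_t} = 0$ from the basis construction, a step the paper leaves implicit.
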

\begin{proof}
Let $x$ be a sample drawn from distribution $D_j$. Since $b^l_t$ is
perpendicular to $\hu_t$, and $D_j$ is a spherical Gaussian, given that $x \in C_{t+1}$, that is, the
projection of $x$ on $\hu_t$ is greater than $0$, the
projection of $x$ on $b^l_t$ is still distributed as a Gaussian with mean
$\dot{\mu^j}{b^l_t}$ and standard deviation $\sigma^j$. That is,
\[ \bbE[ \dot{x}{b^l_t} | x \sim D_j, x \in C_{t+1}] = \dot{\mu^j}{b^l_t} \]
To prove the lemma, we observe that $\dot{b^l_t}{u_{t+1}}$ is equal to
\[ \sum_j \bbE[\dot{x}{b^l_t} | x \sim D_j, x \in C_{t+1}] \Pr[x \sim D_j | x \in C_{t+1}] \]
The lemma follows by using lemma~\ref{lem:wts}.
\end{proof}

Finally, we show a lemma which estimates the norm of the vector $u_{t+1}$.

\begin{lemma}\label{lem:proj4}
\begin{eqnarray*}
||u_{t+1}||^2 & = & \frac{1}{Z_{t+1}^2}( \xi_t^2 + 2 \xi_t \cos(\theta_t) \m_t^1 + \sum_{l=1}^{k} (\m_t^l)^2 )
\end{eqnarray*}
\end{lemma}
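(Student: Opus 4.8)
The plan is to exploit the fact that $u_{t+1}$ lives in the low-dimensional subspace $\Mu \oplus \mathrm{span}(v_t)$, for which $\{b^2_t, \ldots, b^{k-1}_t, \hu_t, \hup_t\}$ is an orthonormal basis, and then read off $||u_{t+1}||^2$ as a Pythagorean sum of the projections already computed in Lemmas~\ref{lem:proj1},~\ref{lem:proj2}, and~\ref{lem:proj3}. Concretely, I would first argue that $u_{t+1}$ has no component orthogonal to this subspace, and then simply add up the squared projections onto the orthonormal basis vectors.

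For the first step, let $z$ be any unit vector orthogonal to $\Mu \oplus \mathrm{span}(v_t)$. Then $z$ is orthogonal to $v_t$ and to all of $\Mu$, so in particular $\dot{z}{\hu_t} = 0$ (since $\hu_t = \cos(\theta_t)b^1_t + \sin(\theta_t)v_t$ lies in this subspace) and $\dot{\mu^j}{z} = 0$ for every $j$ (since each $\mu^j \in \Mu$). Because each $D_j$ is spherical and $z \perp \hu_t$, the same conditioning argument used in Lemma~\ref{lem:proj3} shows that, given $x \in C_{t+1}$, the projection $\dot{x}{z}$ is still Gaussian with mean $\dot{\mu^j}{z} = 0$; summing over the mixture components via Lemma~\ref{lem:wts} then gives $\dot{u_{t+1}}{z} = 0$. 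Hence $u_{t+1}$ lies entirely in the span of $\{b^2_t, \ldots, b^{k-1}_t, \hu_t, \hup_t\}$.

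With that established, Parseval's identity applied to this orthonormal basis yields
\[
||u_{t+1}||^2 = \dot{u_{t+1}}{\hu_t}^2 + \dot{u_{t+1}}{\hup_t}^2 + \sum_{l=2}^{k-1} \dot{u_{t+1}}{b^l_t}^2 .
\]
Plugging in Lemmas~\ref{lem:proj1},~\ref{lem:proj2}, and~\ref{lem:proj3}, the first two terms combine to $\frac{1}{Z_{t+1}^2}\big((\xi_t + \cos(\theta_t)\m_t^1)^2 + \sin^2(\theta_t)(\m_t^1)^2\big)$; expanding and using $\cos^2(\theta_t) + \sin^2(\theta_t) = 1$ collapses the $(\m_t^1)^2$ contributions into a single term, giving $\frac{1}{Z_{t+1}^2}\big(\xi_t^2 + 2\xi_t\cos(\theta_t)\m_t^1 + (\m_t^1)^2\big)$. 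Adding the remaining $\frac{1}{Z_{t+1}^2}\sum_{l=2}^{k-1}(\m_t^l)^2$ folds the $(\m_t^1)^2$ into the sum and produces the claimed expression $\frac{1}{Z_{t+1}^2}\big(\xi_t^2 + 2\xi_t\cos(\theta_t)\m_t^1 + \sum_{l=1}^{k-1}(\m_t^l)^2\big)$.

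The only genuinely delicate point is the first step, namely verifying that $u_{t+1}$ carries no mass outside $\Mu \oplus \mathrm{span}(v_t)$; this is what reduces an a priori $d$-dimensional norm computation to a finite Pythagorean sum. Once that spherical-symmetry argument is in place, the remainder is routine bookkeeping with the trigonometric identity, so I do not anticipate any further obstacle.
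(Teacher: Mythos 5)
Your proof is correct and follows essentially the same route as the paper: the paper likewise computes $||u_{t+1}||^2$ as the Pythagorean sum of the projections from Lemmas~\ref{lem:proj1}, \ref{lem:proj2} and \ref{lem:proj3} over the orthonormal set $\{b^2_t, \ldots, b^{k-1}_t, \hu_t, \hup_t\}$ and then collapses $\cos^2(\theta_t)(\m_t^1)^2 + \sin^2(\theta_t)(\m_t^1)^2$ into $(\m_t^1)^2$. Your explicit spherical-symmetry argument that $u_{t+1}$ has no component orthogonal to $\Mu \oplus \mathrm{span}(v_t)$ is a step the paper only asserts in its proof roadmap (the observation that the projection of $u_{t+1}$ on any perpendicular direction is $0$, argued for $k=2$ in Lemma~\ref{lem:expr3}), so your write-up is, if anything, slightly more complete.
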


\begin{proof}
Combining Lemmas \ref{lem:proj1}, \ref{lem:proj2} and \ref{lem:proj3}, we can write:
\begin{eqnarray*}
||u_{t+1}||^2 & = & \dot{\hu_t}{u_{t+1}}^2 + \dot{\hup_t}{u_{t+1}}^2 + \sum_{l
\geq 2} \dot{b^l_t}{u_{t+1}}^2 \\
& = & \frac{1}{Z_{t+1}^2} \bigg( \xi_t^2 + 2 \xi_t \cos(\theta_t) \m_t^1 + \cos^2(\theta_t) (\m_t^1)^2  + \sin^2(\theta_t) (\m_t^1)^2 + \sum_{l=2}^{k} (\m_t^l)^2 \bigg) 
\end{eqnarray*}
The lemma follows by plugging in the fact that $\cos^2(\theta_t) + \sin^2(\theta_t) = 1$.
\end{proof}

Now we are ready to prove Lemma \ref{lem:kexpr1}.

\begin{proof}(Of Lemma \ref{lem:kexpr1})
Since $b_t^1, \ldots, b_t^k$ form a basis of $\Mu$, we can write:
\begin{equation}
\cos^2(\theta_{t+1}) = \frac{ \sum_{l=1}^{k} \dot{u_{t+1}}{b_t^l}^2}{||u_{t+1}||^2} 
\label{eqn:cosk}
\end{equation}
$||u_{t+1}||^2$ is estimated in Lemma \ref{lem:proj4}, and $\dot{u_{t+1}}{b_t^l}$ is estimated by Lemma \ref{lem:proj2}. Using these lemmas, as $b_t^1$ lies in the subspace spanned by the orthogonal vectors $\hu_t$ and $\hup_t$, we can write:
\begin{eqnarray*}
\dot{u_{t+1}}{b_t^1} & = & \dot{\hu_t}{u_{t+1}} \dot{\hu_t}{b_t^1} + \dot{\hup_t}{u_{t+1}} \dot{\hup_t}{b_t^1} \\
& = & \frac{ \cos(\theta_t) \xi_t + \m_t^1}{Z_{t+1}}  
\end{eqnarray*} 
Plugging this in to Equation \ref{eqn:cosk}, we get:
\[ \cos^2(\theta_{t+1}) = \frac{ \xi_t^2 \cos^2(\theta_t) + 2 \xi_t \cos(\theta_t) \m_t^1 + \sum_l (\m_t^l)^2 }{ \xi_t^2 + 2 \xi_t \cos(\theta_t) \m_t^1 + \sum_l (\m_t^l)^2 } \]
The lemma follows by rearranging the above equation, similar to the proof of Lemma \ref{lem:expr1}.
\end{proof}

\end{document}